\documentclass{article}

\PassOptionsToPackage{numbers, compress}{natbib}

\usepackage{subcaption}
\usepackage{multirow}
\usepackage{tabularx}
\usepackage[preprint]{neurips_2026}

\usepackage{amsmath,amsfonts,bm}

\def\eqref#1{equation~\ref{#1}}
\def\Eqref#1{Equation~\ref{#1}}

\def\1{\bm{1}}

\DeclareMathAlphabet{\mathsfit}{\encodingdefault}{\sfdefault}{m}{sl}
\SetMathAlphabet{\mathsfit}{bold}{\encodingdefault}{\sfdefault}{bx}{n}

\usepackage{amsmath,amsfonts,bm}

\def\eqref#1{equation~\ref{#1}}
\def\Eqref#1{Equation~\ref{#1}}

\def\1{\bm{1}}

\DeclareMathAlphabet{\mathsfit}{\encodingdefault}{\sfdefault}{m}{sl}
\SetMathAlphabet{\mathsfit}{bold}{\encodingdefault}{\sfdefault}{bx}{n}

\newcommand{\bfA}{{\bf A}}

\newcommand{\bfF}{{\bf F}}

\newcommand{\bfK}{{\bf K}}

\newcommand{\bfR}{{\bf R}}
\newcommand{\bfS}{{\bf S}}

\newcommand{\bfb}{{\bf b}}

\newcommand{\bfk}{{\bf k}}

\newcommand{\bfx}{{\bf x}}

\newcommand{\bfu}{{\bf u}}

\newcommand{\bfv}{{\bf v}}

\newcommand{\grad}{{\boldsymbol \nabla}}

\newcommand{\bftheta}{{\boldsymbol \theta}}

\newcommand{\bfxi}{{\boldsymbol \xi}}

\newcommand{\bfgamma}{{\boldsymbol \gamma}}

\newcommand{\bfalpha}{{\boldsymbol \alpha}}

\renewcommand{\grad}{{\boldsymbol \nabla\, }}

\usepackage[utf8]{inputenc} 
\usepackage[T1]{fontenc}    
\usepackage{url}            
\usepackage{booktabs}       
\usepackage{amsfonts}       
\usepackage{nicefrac}       
\usepackage{microtype}      
\usepackage{xcolor}         
\usepackage{array}

\usepackage[english]{babel}
\usepackage[controls]{animate}
\usepackage{tikz}
\usetikzlibrary{arrows}
\usepackage{algorithm}
\usepackage{algpseudocode}
\usepackage{amsmath}
\usepackage{amsfonts}
\usepackage{graphicx}
\usepackage{adjustbox}
\usepackage{float}
\usepackage{placeins}
\usepackage{wrapfig}

\usepackage[colorlinks=true, allcolors=blue]{hyperref}
\usepackage{booktabs}
\usepackage{cleveref}

\usepackage{amsthm}
\usepackage{multirow}       
\usepackage{siunitx}        

\newtheorem{theorem}{Theorem}[section]
\newtheorem{lemma}[theorem]{Lemma}

\title{Mesh-Free Convolution: Learned Spectral Attenuation and Transport}

\author{%
  Niloufar Zakariaei \\
  University of British Columbia \\
  \texttt{niloufar.zakariaei@ubc.ca}
  \And
  Manav Manav \\
  Computational Geoscience Inc. \\
  \texttt{manav@compgeoinc.com}
  \And
  Eldad Haber \\
  University of British Columbia \\
  \texttt{ehaber@eoas.ubc.ca}
}

\begin{document}

\maketitle

\begin{abstract}
Learning solution operators for partial differential equations (PDEs) requires models that can represent spatial structure across grids and scales. This becomes especially important in time-dependent problems, where the learned operator is often used autoregressively: each predicted state is fed back as input to predict future states. In this setting, small spatial errors can be repeatedly propagated through the learned dynamics, leading to instability and drift, vitiating long-term solution fidelity.

We propose Mesh-Free Convolutions (MFCs), filters defined in continuous space and discretized only at evaluation time. This allows the same learned filters to be applied across resolutions. Rather than imposing a grid-dependent stencil or a fixed spectral cutoff, MFCs learn smooth scale-dependent filtering and include directional terms for transport-like behavior.

Across fluid-dynamics and weather-forecasting benchmarks, MFC-based models produce stable long-horizon rollouts, transfer reliably across resolutions, and better preserve long-time statistics. These results suggest that continuous, mesh-free filtering is a useful inductive bias for PDE operator learning, especially in autoregressive prediction.
\end{abstract}

\section{Introduction and Motivation}
Many problems in science and engineering are governed by partial differential equations (PDEs). In these settings, one is often interested in learning a mapping from an input field, such as an initial condition, a forcing term, or a material coefficient, to an output field, such as a future state or system response. While classical numerical solvers can achieve high accuracy, repeatedly computing high-resolution solutions is computationally expensive. Neural operators seek to mitigate this cost by learning solution operators directly from data, enabling efficient inference across a range of PDE instances and, when possible, generalization across discretizations~\cite{kovachki2023neural,lu2021learning,li2020neural,li2020multipole,li2020fourier}.
When the PDE, coefficients, and boundary conditions are fully specified and a single certified solve is the goal, a converged classical solver remains the right tool~\cite{mcgreivy2024weak}. Learned operators target amortized many-query settings and systems whose physics is incompletely specified; the fully specified benchmarks below are controlled tests of rollout stability, long-time statistics, and resolution transfer.

The Fourier Neural Operator (FNO) is one of the standard architectures for this setting~\cite{li2020fourier, kossaifi2025librarylearningneuraloperators, duruisseaux2025guide}. Instead of applying only local convolutions in physical space, FNO applies learned transformations in Fourier space for efficient global mixing. This has made FNO effective on several operator-learning benchmarks~\cite{li2020fourier,tran2021factorized,rahman2022u,wen2022u}. 
However, standard FNO layers retain only a prescribed set of Fourier modes. 
This makes the layer efficient, but it also imposes a hard bandwidth limit: modes outside the retained block are not directly represented by the spectral convolution.

Such a design is often effective when the relevant dynamics is  smooth or dominated by low-frequency content. It can become restrictive when fine-scale, localized, anisotropic, or transport-driven structures affect the future evolution. In autoregressive prediction, this issue is amplified because the learned operator is repeatedly composed with itself, and small phase, amplitude, or spectral errors can be fed back into later inputs \cite{mccabe2023multiple}. Hard spectral truncation is therefore not merely an implementation detail; it is one mechanism that can contribute to oversmoothing, phase drift, and loss of dynamically relevant structures during long rollouts~\cite{tran2021factorized,gao2025dynamic}.

Several FNO variants improve efficiency, expressivity, and robustness: FFNO uses low-rank spectral factorization~\cite{tran2021factorized}, Geo-FNO handles complex geometries through learned deformations~\cite{li2023fourier}, and AFNO employs adaptive spectral mixing~\cite{guibas2021adaptive}. U-NO and U-FNO introduce multiscale architectures~\cite{rahman2022u,wen2022u,raonic2023convolutional}, while transformer-based operators~\cite{cao2021choose} and DSFNO~\cite{gao2025dynamic} provide alternative global and cross-frequency interactions. In contrast, MFC parameterizes convolution through the time-$\tau$ solution operator of a continuous PDE followed by a directional readout, enabling smooth, resolution-aware attenuation and transport-like responses without a fixed spectral cutoff.

\paragraph{Motivating example.}
We first illustrate why mode truncation can matter in autoregressive prediction. We train FNO one-step predictors on the PDEBench 1D advection benchmark~\cite{takamoto2022pdebench},
\[
u(\cdot,t_n)\mapsto u(\cdot,t_{n+1}),
\]
and then roll them out by feeding each prediction back as the next input. This tests the learned operator under repeated composition, not only under one-step error.

Figure~\ref{fig:fno_rollout_failure} compares two FNOs that differ only in the number of retained Fourier modes. With a small mode budget, the rollout develops clear long-horizon error, including phase drift, amplitude distortion, and oscillatory artifacts. Increasing the number of modes substantially improves the final state. This example does not imply that spectral truncation is the only source of autoregressive error, nor that simply increasing the number of modes is always the best solution. Rather, it shows that the spectral representation used inside the learned time-stepper can directly affect which structures are preserved under repeated composition.

\begin{figure}[h]
    \centering
    \includegraphics[width=0.9\linewidth]{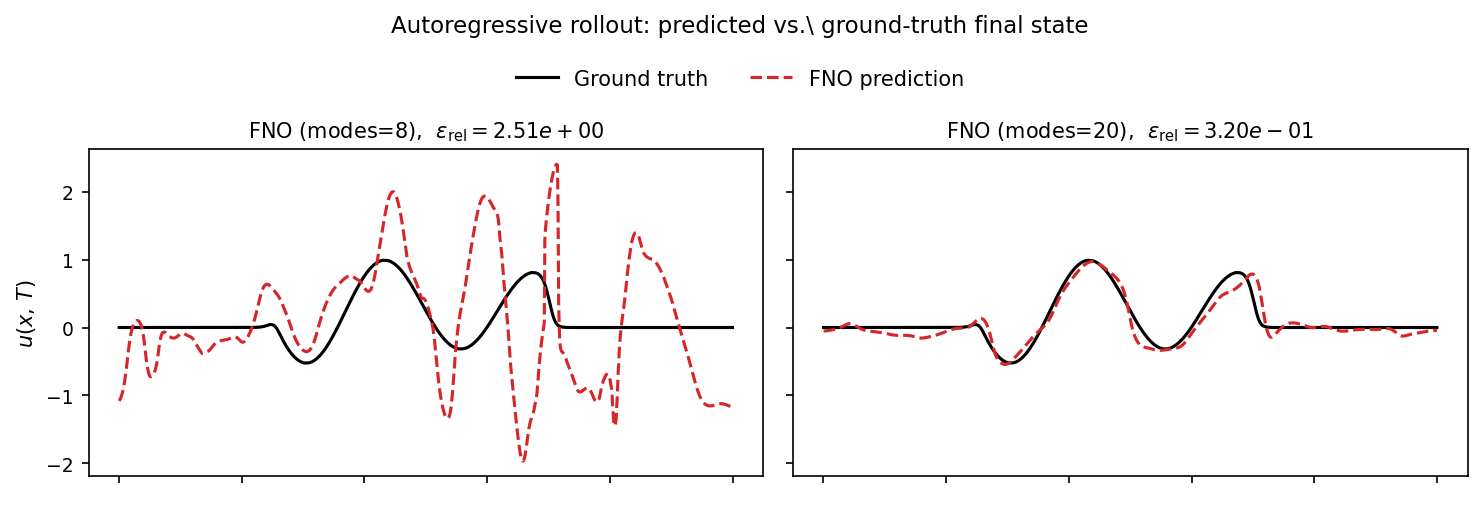}
    \caption{
    Autoregressive rollout on the PDEBench 1D advection benchmark~\cite{takamoto2022pdebench}. 
    Both FNOs are trained as one-step predictors and differ only in the number of retained Fourier modes. 
    A small mode budget leads to phase drift, amplitude distortion, and oscillatory artifacts, while increasing the number of modes improves the final-state prediction.
    }
    \label{fig:fno_rollout_failure}
\end{figure}

\paragraph{Research questions.}
The motivating example above suggests that one important challenge is the preservation of dynamically relevant spatial content under repeated application of the learned operator. This leads to the following research questions:
\begin{enumerate}
    \item Can hard spectral truncation in standard FNO layers be replaced by a learned smooth attenuation mechanism that retains useful fine-scale structure while damping spurious high-frequency components?
    \item Can a continuous-space convolution parameterization improve cross-resolution behavior by defining the filter before discretization and evaluating it on the mesh only at application time?
    \item Can directional first-order terms complement spectral attenuation by representing asymmetric, transport-like responses rather than purely diffusive smoothing?
\end{enumerate}

\paragraph{Our contributions.}
To address these questions, we propose Mesh-Free Convolutions (MFCs): continuous-space filters that are discretized only when evaluated on a computational mesh. \textbf{First}, MFCs replace hard Fourier-mode truncation with smooth learned spectral attenuation, obtained from an anisotropic parabolic parameterization, so useful fine-scale content can be retained while unstable high-frequency components are damped. \textbf{Second}, the filter is defined before discretization, allowing the same learned convolution to be evaluated on different meshes and resolutions. \textbf{Third}, we include a directional readout with first-order terms, enabling asymmetric, transport-like responses beyond purely diffusive smoothing. We evaluate MFCs in settings where spatial inductive bias is repeatedly exposed: autoregressive rollout, cross-resolution evaluation, and long-time statistics.

\section{Mesh-Free Convolutions}
\label{sec:MFC_method}

We introduce Mesh-Free Convolutions (MFCs), a family of convolutional operators defined first in continuous space and discretized only at evaluation time. In contrast to standard convolution kernels, whose weights are tied to a particular grid stencil, MFCs are parameterized as continuous operators acting on functions. Once the operator has been learned, it can be evaluated on any compatible computational mesh by discretizing its Fourier symbol on that mesh.

We define MFCs as  the continuous convolution
\begin{eqnarray}
    \label{eq:conv-int}
    v(\bfx)
    =
    \int_{\Omega}
    {\cal K}_{\bfxi}(\bfx-\bfx')\,u(\bfx')\,d\bfx',
\end{eqnarray}
where the kernel ${\cal K}_{\bfxi}$ is parameterized by learnable parameters $\bfxi$. The parametrization of MFCs is the key component which distinguishes it from the other operator learning approaches such as FNOs and allows it to be expressive, stable, and well matched to the structure of PDE solution operators. This formulation also separates the definition of the filter from the mesh on which it is evaluated.

Our choice is motivated by the connection between convolution, Green's functions, and linear PDEs \cite{EvansPDE,smets2023pde,long2018pde,folland1995introduction}. Many physically relevant evolution equations combine diffusion, which damps small-scale oscillations, with transport, which moves coherent structures through space. MFCs parameterize the convolution through the solution operator of a continuous differential equation, in contrast to FNOs which learn Fourier multipliers on a \emph{prescribed finite} set of modes not explicitly biased towards the symbol of a PDE evolution operator. This endows MFCs with an inherent bias toward smoothing, anisotropy, and transport-like motion, all of which are common in time-dependent physical systems.

Let $u$ be the input function, possibly with multiple channels, and let
$v={\cal C}(\bfxi)u$ denote the output of the mesh-free convolution. We construct ${\cal C}(\bfxi)$ by composing three operations. First, a trainable channel mixing map, implemented as a pointwise convolution, maps the input channels into latent channels. Second, each latent field is evolved by an anisotropic parabolic operator. Third, the evolved field is read out through a zeroth- and first-order differential operator, together with a learned spatial shift. In continuous notation, the basic parabolic part is
\begin{equation}
    \label{convdiff}    
    \tilde v_s
    =
    {\cal L}(\bfalpha)\tilde v,
    \qquad
    \tilde v(\bfx,0)
    =
    \bfR u(\bfx),
    \qquad
    s\in[0,\tau],
\end{equation}
where $\bfR$ is a learned channel mixing operator, and $\tau\geq 0$ is the learned evolution time. The parabolic operator is defined by
\begin{eqnarray}
    \label{eq:heat}
    {\cal L}(\bfalpha) \tilde  v
    =
    \alpha_{xx}\frac{\partial^2 \tilde v}{\partial x^2}
    +
    2\alpha_{xy}\frac{\partial^2 \tilde v}{\partial x \partial y}
    +
    \alpha_{yy}\frac{\partial^2 \tilde v}{\partial y^2},
\end{eqnarray}
where we enforce
\[
    \alpha_{xx}>0,
    \qquad
    \alpha_{yy}>0,
    \qquad
    \alpha_{xx}\alpha_{yy}-\alpha_{xy}^2>0,
\]
so that the diffusion operator is positive definite, guaranteeing anisotropic parabolic smoothing and avoiding unstable high-frequency amplification \cite{tos}.


The evolution of the latent channels under the parabolic operator is followed by the directional readout
\begin{equation}
    \label{eq:readout}
    v(\bfx)
    =
    \left(
    \gamma_0
    +
    \gamma_x \partial_x
    +
    \gamma_y \partial_y
    \right)
    \tilde v(\bfx-\boldsymbol{\delta},\tau),
    \qquad
    \boldsymbol{\delta}=(\delta_x,\delta_y).
\end{equation}
Here , $\bfgamma=(\gamma_0,\gamma_x,\gamma_y)$ are the learned parameters controlling the zeroth and first-order readout, and $\boldsymbol{\delta}$ is a learned continuous shift. Note that the set of learnable parameters is $\bfxi := \{\alpha_{xx}, \alpha_{yy}, \alpha_{xy}, \tau, \gamma_0,\gamma_x,\gamma_y,\delta_x,\delta_y, \bfR,\bfS\}$, where $\bfS$ is the projection network that projects $v$ to the output space.

\paragraph{Advection and learned shifts.}
Diffusion alone is not sufficient for many physical systems. In fluid flow, weather, waves, and transport-dominated PDEs, coherent structures are not only smoothed; they are also displaced. For example, vortices in Navier--Stokes, fronts in weather fields, and sharp layers in Burgers-type equations move through space over time. A purely symmetric smoothing kernel can damp these structures but cannot naturally represent their translation or directional skew.

To incorporate this behavior, MFCs include a learned continuous shift $\boldsymbol{\delta}$ and a first-order directional readout. The shift acts as a simple advection surrogate:
\[
    \tilde v(\bfx,\tau)
    \mapsto
    \tilde v(\bfx-\boldsymbol{\delta},\tau).
\]
This is the exact solution of the constant-velocity advection equation
\[
    w_t + \bfb\cdot \grad w = 0,
    \qquad
    w(\bfx,0)=w_0(\bfx),
\]
with displacement $\boldsymbol{\delta}=\tau\bfb$. Thus, the learned phase shift in Fourier space represents a translation of the kernel center. The first-order terms $(\gamma_x\partial_x+\gamma_y\partial_y)$ then allow the filter to form oriented, asymmetric responses. Together, the shift and directional derivative terms produce kernels that are not merely diffusive, but can also represent transport-like motion and upwind/downwind asymmetry.

This is useful in operator learning problems. In advection-dominated regimes, errors are often phase errors rather than amplitude errors: the predicted structure has the right shape but is slightly displaced. A filter family that explicitly includes translations and directional responses is therefore better aligned with the geometry of the problem. It can represent moving fronts, advected vortices, and anisotropic filaments using a small number of physically meaningful parameters. This is more natural than a generic FNO multiplier, which must learn such behavior indirectly from data on a finite set of modes.

\paragraph{Fourier symbol.}
For periodic domains, the continuous operator has a simple Fourier representation. Let
$\bfk=(k_x,k_y)$ be the spatial frequency. Then, the MFC Fourier symbol is given as follows:
\begin{equation}
    \label{eq:mfc_symbol_full}
    G_{\bfxi}(\bfk)
    =
    \underbrace{
    \exp\left(
    -\tau \bfk^\top \bfA \bfk
    \right)
    }_{\text{parabolic attenuation}}
    \underbrace{
    \left(
    \gamma_0+i\gamma_x k_x+i\gamma_y k_y
    \right)
    }_{\text{zeroth/first-order readout}}
    \underbrace{
    \exp\left(-2\pi i\,\bfk\cdot\boldsymbol{\delta}\right)
    }_{\text{learned advection shift}},
\end{equation}
where the parabolic evolution contributes the factor $\exp\left(
    -\tau \bfk^\top \bfA \bfk
    \right)$
with
\[
    \bfA
    =
    \begin{pmatrix}
        \alpha_{xx} & \alpha_{xy}\\
        \alpha_{xy} & \alpha_{yy}
    \end{pmatrix}.
\]
This exponential term smoothly attenuates high frequencies, mediated by the learned $\tau$. The directional readout contributes $(\gamma_0+i\gamma_x k_x+i\gamma_y k_y)$ and the learned shift contributes the phase factor $\exp\left(-2\pi i\,\bfk\cdot\boldsymbol{\delta}\right)$.

The convolution is then applied by
\begin{equation}
    \label{eq:mfc_channel_symbol}
    \widehat v_j(\bfk)
    =
    \sum_{i=1}^{c_{\rm in}}
    G_{ij}(\bfk)\,\widehat u_i(\bfk),
    \qquad
    j=1,\ldots,c_{\rm out}.
\end{equation}
(The proof of the stability of the MFC symbol is provided in Appendix~\ref{app:mfc-stability}).
In the implementation, this spectral mixing is followed by a pointwise convolution over the output channels.

The above formulation highlights a key distinction from FNOs: whereas FNOs parameterize free complex weights over a truncated set of Fourier modes and discard the remainder, MFCs instead learn a continuous symbol defined across all frequencies on the evaluation grid. By grounding the framework in the Fourier symbol, MFCs impose an inductive bias that more closely reflects the structure of solution operators for linearized PDEs, while retaining sufficient flexibility in the surrounding neural architecture to capture nonlinear dynamics. This structural property is particularly important in autoregressive PDE learning, where small amplitude and phase errors have qualitatively different effects over long rollouts. Excessive damping leads to energy loss and oversmoothing, while phase inaccuracies result in the misalignment of coherent structures. In contrast to truncated FNO layers, which must implicitly learn both effects through unconstrained complex weights on a limited set of modes, MFCs provide a more interpretable, PDE-inspired decomposition into attenuation, directional response, and advection-like shifts. Our formulation incorporates the solution operator of a linear parabolic PDE, effectively biasing the learned operator toward transformations arising from diffusion–advection dynamics and contributing to improved stability and structure preservation in long-horizon predictions.horizon predictions.

\begin{wrapfigure}{r}{0.5\textwidth}
    \centering
    \includegraphics[width=0.99\linewidth]{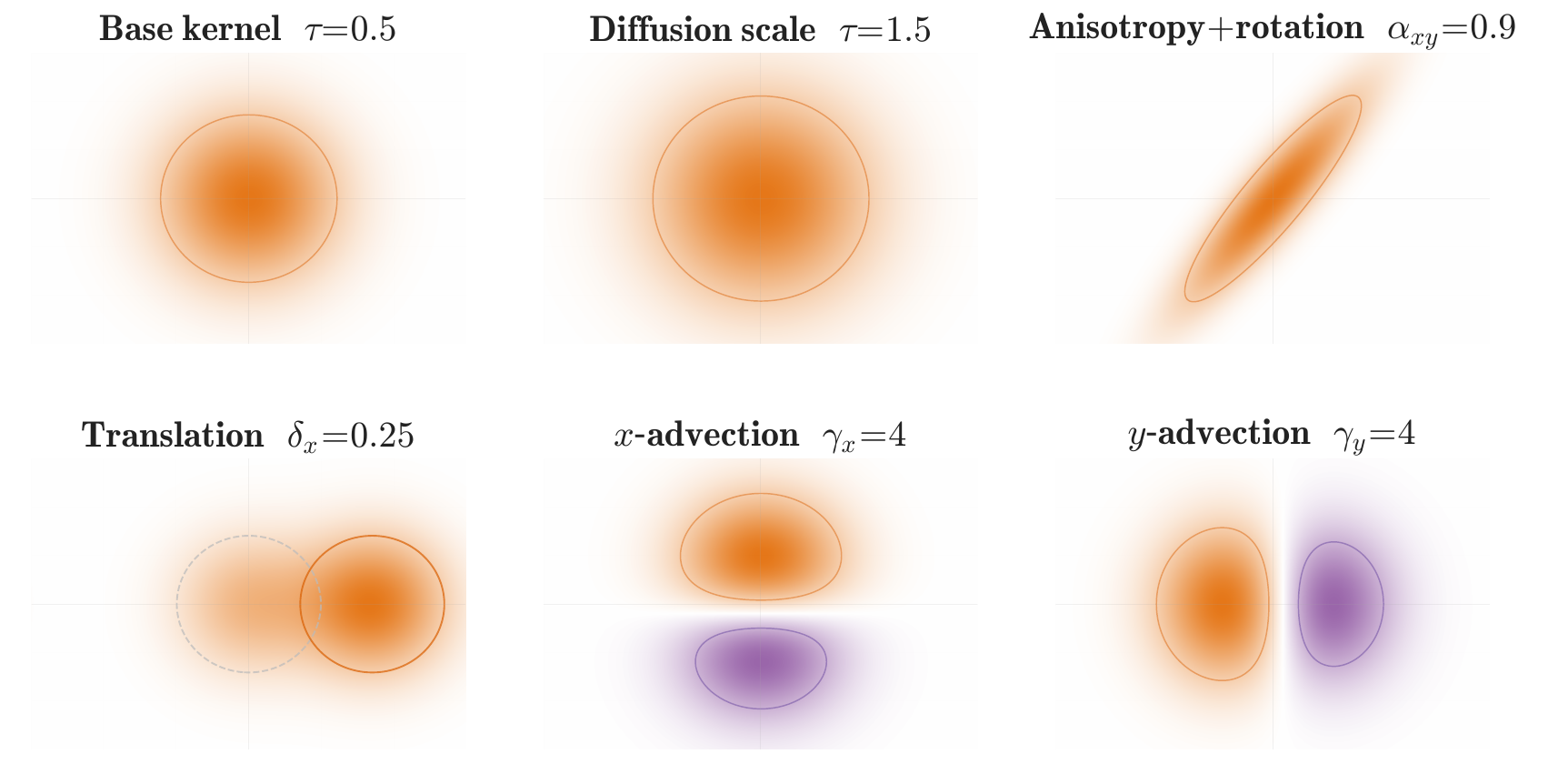} 
    \caption{
    Kernel family produced by the MFC parameterization.
    Varying $\tau$ controls scale (localized to diffuse); the diffusion-tensor entries set anisotropy and orientation; $\gamma_x,\gamma_y$ introduce directional, derivative-like lobes; and the shift $\delta$ translates the kernel center.
    A compact parameter vector $\bfxi$ thus spans smooth, anisotropic, and transport-like filters without hand-crafted stencils.
    }
    \label{fig:convs}
\end{wrapfigure}

\paragraph{Kernel interpretation.}
The features of the convolution kernel are controlled by the set of learnable parameters $\bfxi$. The parameter $\tau$ controls the spatial scale of the kernel. As $\tau\to 0$, the parabolic factor approaches one, and the kernel becomes increasingly localized. As $\tau$ increases, high frequencies are damped more strongly, producing wider and smoother kernels. The diffusion tensor $\bfA$ controls the anisotropy and orientation of the smoothing. The coefficients $\gamma_x$ and $\gamma_y$ create odd, derivative-like components, producing oriented lobes and asymmetric responses. The shift $\boldsymbol{\delta}$ moves the kernel center continuously, allowing off-centered filters and transport-like displacement.

To illustrate this behavior, we plot the MFC filters for a few choices of $\bfxi$ in Figure~\ref{fig:convs}. By varying $\bfxi$, we obtain a family of convolutional kernels ranging from local to global, isotropic to anisotropic, centered to shifted, and symmetric to directional. These are the types of geometric structures commonly observed in learned convolutional filters, but here they arise from a compact PDE-based parameterization.

\paragraph{Computing MFCs on a discretized mesh.}
\label{sec:computingMFConGrid}
Although MFCs are defined in continuous space, they are evaluated on a discrete mesh in computation. For a rectangular domain with periodic boundary conditions, we use the Fast Fourier Transform (FFT). Let $\bfu^h$ be a discrete input tensor. We compute its Fourier transform
\[
    \widehat{\bfu}^h = \bfF \bfu^h,
\]
construct the discrete frequency grid $\bfk^h=(k_x^h,k_y^h)$, evaluate the symbol
$G_{\bfxi}(\bfk^h)$ on that grid, multiply in Fourier space, and then apply the inverse FFT:
\begin{equation}
    \label{eq:mfc_discrete}
    \bfv^h
    =
    \bfF^{-1}
    \left[
    G_{\bfxi}(\bfk^h)\,\widehat{\bfu}^h
    \right].
\end{equation}
Thus, the spatial convolution matrix is never assembled. The operator is applied matrix-free by evaluating and multiplying its Fourier symbol.

This construction also explains the cross-resolution behavior of the layer. Since the learned parameters define a continuous symbol, the same parameters can be evaluated on a different frequency grid when the spatial resolution changes. This is different from a standard convolutional stencil, whose weights are tied to fixed pixel offsets, and from a standard truncated FNO layer, whose learned weights are tied to a fixed set of retained modes. In MFC, the parameters define the functional form of the filter, while the mesh only determines where that symbol is sampled.

An important consequence is that, on any fixed mesh, an MFC layer can approximate standard convolutional behavior while retaining a continuous interpretation (proof in Appendix~\ref{app:mfc_cnn_equivalence}). The purpose of MFC is therefore not to reduce expressivity relative to CNNs, but to impose a structured PDE-inspired bias on the space of filters.

\paragraph{Computational complexity.}
The dominant cost of MFC stems from two operations: the FFT and the $1\times 1$ convolution (pointwise channel mixing). For an input with $n$ spatial points and $c$ active channels, the FFT cost scales as ${\cal O}(c n\log n)$. The $1\times 1$ convolution cost is ${\cal O}(c^2n)$. The remaining operations consist of evaluating the symbol, multiplying Fourier coefficients,  all of which scale linearly in the number of spatial points up to channel factors. In this sense, MFC is matrix-free with respect to the spatial convolution matrix: it avoids assembling a dense or stencil-based convolution operator and instead applies the operator through its Fourier symbol.

\section{Experiments}
We evaluate MFC on three benchmarks, summarized in Table~\ref{tab:datasets}, each chosen to test a specific aspect of our contributions: \emph{smooth spectral attenuation} for autoregressive stability and spectral fidelity (\cref{sec:exp-stability}), \emph{mesh-free parameterization} for zero-shot cross-resolution transfer (\cref{sec:exp-crossres}), and \emph{transport-like terms} for preserving localized, advection-driven structures under distributional shift (\cref{sec:exp-localized}). Dataset details, training protocols are provided in Appendix \ref{app:training-details}.


\begin{table}[h]
\centering
\caption{Summary of experimental benchmarks.}
\label{tab:datasets}
\small
\begin{tabular}{l l l l l}
\toprule
\textbf{Dataset} & \textbf{PDE / Data} & \textbf{Resolution} & \textbf{Baselines} & \textbf{Evaluation focus} \\
\midrule
Navier--Stokes \cite{kovachki2023data} & 2D incomp\  & $128^2$ & FNO , F-FNO , UNet & Rollout stability \\
WeatherBench2 \cite{rasp2024weatherbench} & ERA5  & $64{\times}32 \!\to\! 240{\times}121$ & AFNO, FNO & Long-time statistics \\
Burgers 2D \citep{zhou2024masked} & 2D Burgers & $64^2 \!\to\! 128^2$ & FNO, F-FNO & Localized structures\\
\bottomrule
\end{tabular}
\end{table}

\subsection{Autoregressive stability and spectral fidelity}
\label{sec:exp-stability}
This subsection addresses our first contribution: replacing hard Fourier truncation with smooth learned spectral attenuation. We test whether this improves autoregressive stability and scale-wise fidelity under repeated rollout.



\paragraph{Navier--Stokes.}
To compare MFC with Fourier-based operators, we replace the convolutional layers in a standard UNet~\cite{ronneberger2015u} with MFC layers, yielding MFC-UNet. 
We compare against the same UNet backbone with standard convolutions~\cite{kossaifi2025librarylearningneuraloperators}, an FNO baseline~\cite{kossaifi2025librarylearningneuraloperators}, and F-FNO~\cite{tran2021factorized}, a factorized Fourier variant that provides a stronger spectral-operator baseline. 
We evaluate all models on the \(Re=5000\) Navier--Stokes Kolmogorov-flow split from~\cite{kovachki2023data}, consisting of 100 trajectories with 500 time steps on a \(128\times128\) grid. 
Following common practice for this dataset, all models predict scalar vorticity rather than velocity. 
We report per-step relative \(\ell_2\) error, spatial correlation with the ground truth, and the radially averaged energy spectrum to assess rollout stability, trajectory alignment, and scale-wise behavior.

All models use \(q=10\) history frames to predict the next state. In training, ground-truth input frames are used to predict only the next frame, while testing is fully autoregressive after the initial 10 frames. No model is trained with a rollout loss unavailable to the others. Figure~\ref{fig:ns_rollout_error}a shows that MFC-UNet has the slowest error growth over the rollout. FNO becomes unstable after roughly 15--20 steps. F-FNO is more stable than FNO, but its error still grows faster than MFC-UNet. The standard UNet remains bounded, but drifts from the ground truth. This is also reflected in Figure~\ref{fig:ns_rollout_error}b: MFC-UNet maintains the highest spatial correlation over the rollout, while FNO and UNet lose correlation rapidly and F-FNO decays more gradually but still below MFC-UNet.

Figure~\ref{fig:ns_rollout_error}c shows the radially averaged energy spectrum at \(\Delta t=40\). MFC-UNet follows the ground-truth spectrum more closely across a broad range of wavenumbers, indicating better preservation of fine spatial detail. UNet is overly dissipative and loses energy broadly. FNO develops unstable high-wavenumber artifacts near and beyond the retained Fourier modes, while F-FNO and MFC-UNet exhibit superior scale-wise fidelity. Additional visual rollout results are provided in Appendix~\ref{sec:appendix-ns}.

Autoregressive evaluation is stricter than one-step prediction because the model must generate a self-consistent rollout under repeated application of its own time-stepper. 
For chaotic Navier--Stokes dynamics, long-horizon success is less about pointwise trajectory matching and more about stability and preservation of statistical, spatial, and spectral structure. 
In this experiment, MFC-UNet better satisfies this criterion, suggesting that smooth learned attenuation provides a more suitable bias for repeated time-stepping while retaining fine-scale structure.




\begin{figure}[h]
    \centering

    \begin{minipage}{0.32\textwidth}
        \centering
        \includegraphics[width=\linewidth]{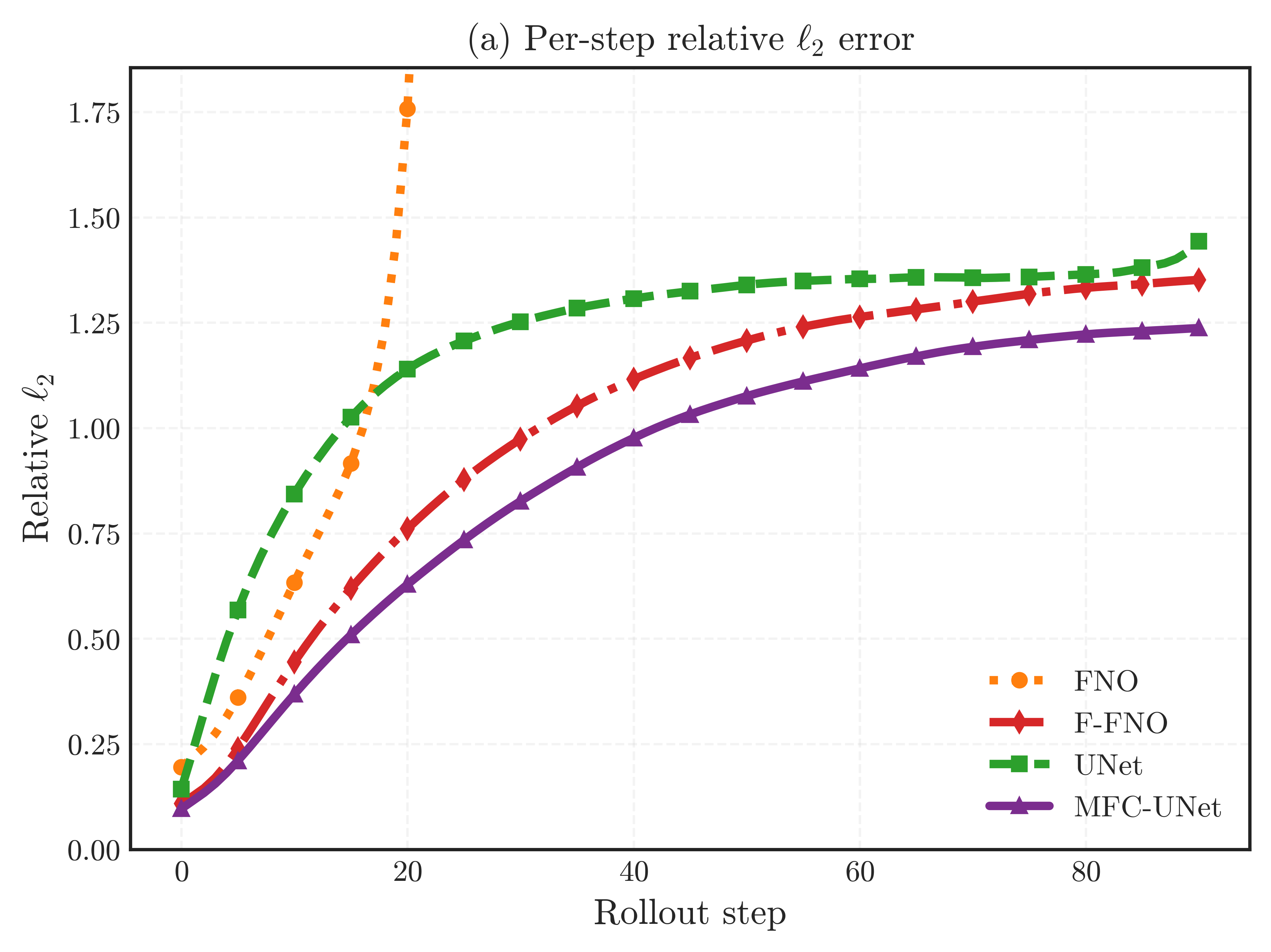}
        \vspace{-2mm}
        \subcaption{Per-step relative \(\ell_2\) error.}
        \label{fig:ns_rollout_rel_l2}
    \end{minipage}
    \hfill
    \begin{minipage}{0.32\textwidth}
        \centering
        \includegraphics[width=\linewidth]{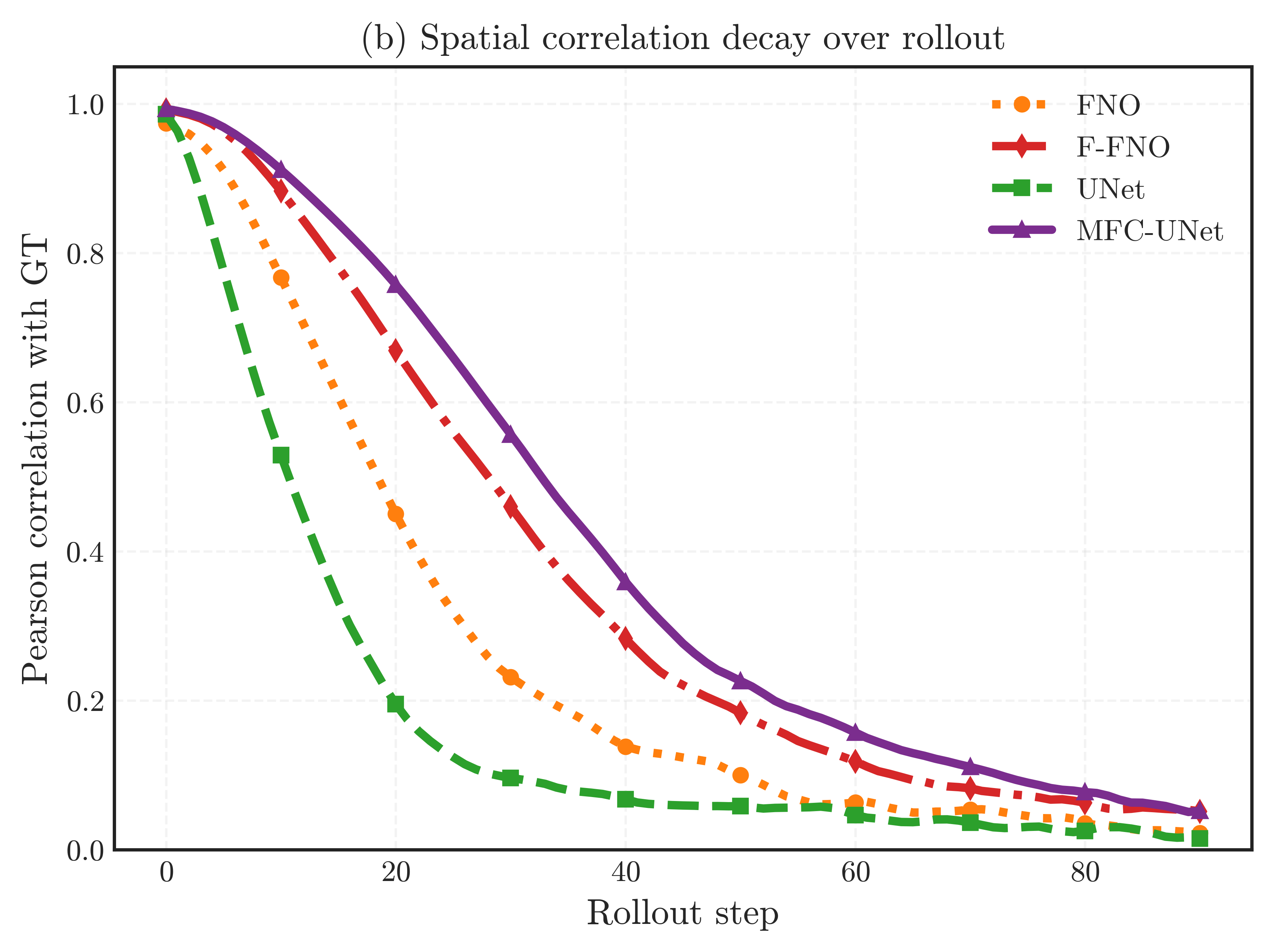}
        \vspace{-2mm}
        \subcaption{Spatial correlation with GT.}
        \label{fig:ns_correlation_decay}
    \end{minipage}
    \hfill
    \begin{minipage}{0.32\textwidth}
        \centering
        \includegraphics[width=\linewidth]{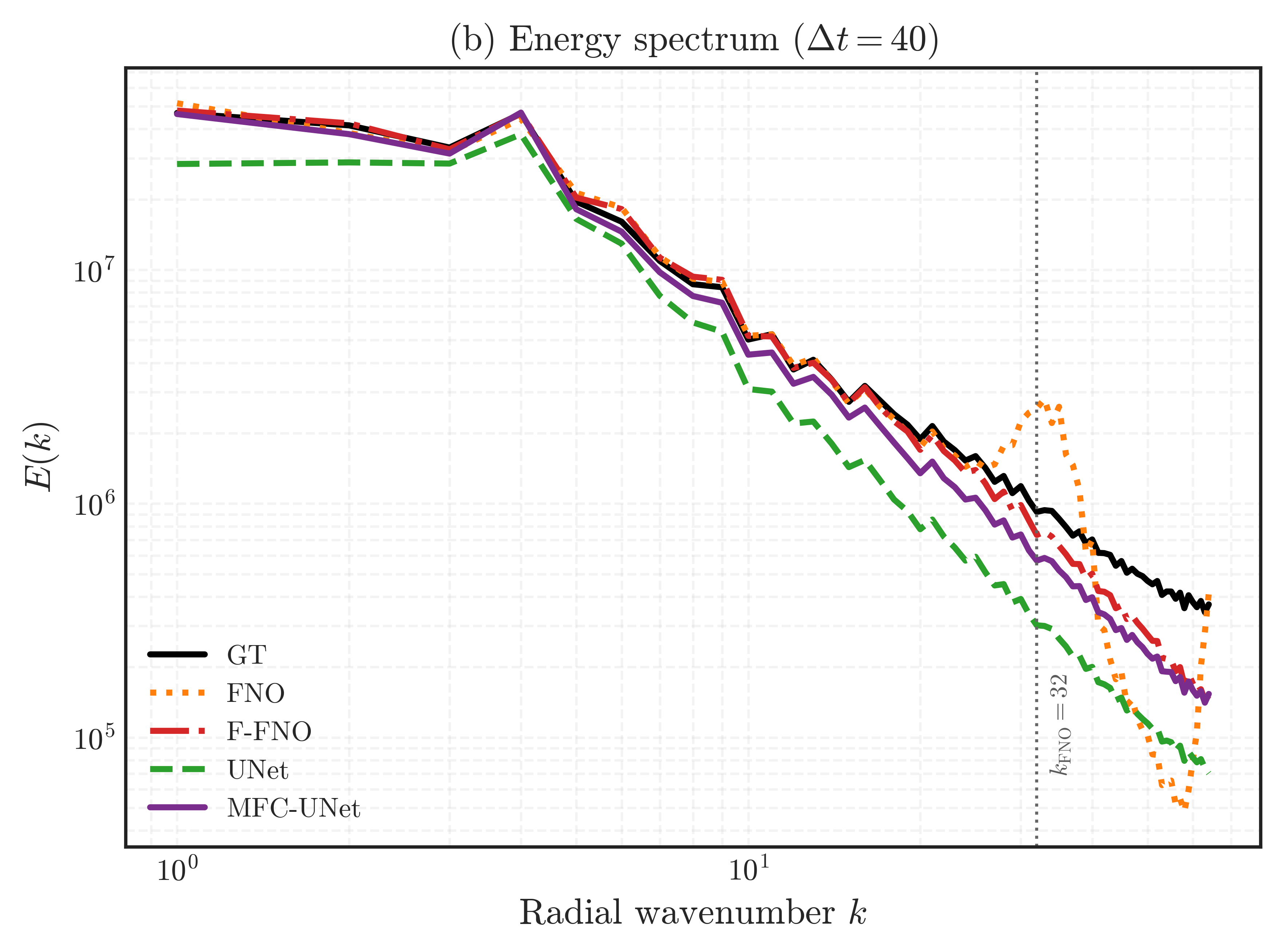}
        \vspace{-2mm}
        \subcaption{Energy spectrum at \(\Delta t=40\).}
        \label{fig:ns_energy_spectrum}
    \end{minipage}

    \caption{
    Autoregressive rollout on the 2D Navier--Stokes benchmark. 
    MFC-UNet shows the slowest error growth and highest spatial correlation, while FNO becomes unstable and F-FNO loses correlation faster. 
    At \(\Delta t=40\), the energy spectrum shows that MFC-UNet better preserves the ground-truth scale structure, whereas FNO develops high-wavenumber artifacts.
    }
    \label{fig:ns_rollout_error}
\end{figure}

\paragraph{WeatherBench2.}
We next test whether learned time-steppers remain stable and preserve long-time statistics on WeatherBench2, a benchmark built from global ERA5 reanalysis fields with standardized protocols for data-driven weather forecasting~\cite{rasp2024weatherbench}. 
We use the 2018--2020 test period and evaluate three variables: near-surface zonal wind \(U_{10}\), near-surface meridional wind \(V_{10}\), and temperature at 850 hPa \(T_{850}\). 
The data are sampled every 6 hours on a latitude--longitude grid, and all spatial errors are latitude-weighted to account for the changing grid-cell area with latitude.
All three models apply the 2D FFT on the lat--lon grid and therefore periodize latitude; there is no pole-specific treatment, and no model receives a boundary handling unavailable to the others. Per-latitude unweighted RMSE (Appendix~\ref{app:latitude}) shows that elevated polar error is shared, with MFC least affected. 
We compare against FNO and AFNO~\cite{guibas2021adaptive}, the latter serving as a strong adaptive Fourier baseline motivated by its use in FourCastNet for global weather forecasting~\cite{pathak2022fourcastnet}.

We first train the network to predict the next time step, and then run 500-step autoregressive forecasts, corresponding to 125 days, from 35 deterministically spaced initial conditions in the test period.
Rather than comparing a pointwise prediction, we compare the temporal mean of the first \(T\) predicted fields for the horizons \(T\in\{100,300,500\}\), with the corresponding ground-truth mean.
 This corresponds to trend or climate prediction \cite{petri2015impacts}.
A rollout is classified as divergent if the prediction exceeds \(10\times\) the ground-truth absolute maximum at any step up to 500. 
By this criterion, 11 of 35 FNO rollouts diverge, whereas MFC-resnet (resnet~\cite{he2016deep} with MFCs) and AFNO complete all 35 rollouts. 
Table~\ref{tab:climatology} reports latitude-weighted RMSE normalized by the latitude-weighted spatial standard deviation of the ground-truth temporal mean, \(\sigma_{\mathrm{GT}}\). 
MFC-resnet and AFNO statistics are computed over all 35 initial conditions, while FNO statistics are reported only on its 24 non-divergent rollouts.

Table~\ref{tab:climatology} shows that MFC-resnet gives the lowest climatology error across all variables and horizons. 
Its advantage is strongest for the wind fields, where AFNO’s long-time mean error grows more rapidly with horizon, especially for \(V_{10}\). 
On \(T_{850}\), MFC-resnet and AFNO remain close, with MFC-resnet slightly lower throughout. 
FNO behaves differently: even after excluding divergent rollouts, its climatology error is much larger and exceeds the ground-truth climatological variability by \(T=300\) for all variables. 
Overall, MFC-resnet matches AFNO’s rollout stability while better preserving the long-time mean state, particularly for winds.

\begin{table}[h]
\centering
\caption{Latitude-weighted climatology error on WeatherBench2.
NRMSE is RMSE normalized by the ground-truth spatial standard deviation,
\(\sigma_{\mathrm{GT}}\), and reported as a percentage.}
\label{tab:climatology}
\sisetup{detect-weight, mode=text}
\small
\begin{tabular}{l S[table-format=3.0] S[table-format=2.3] S[table-format=2.3] S[table-format=3.1] S[table-format=2.3] S[table-format=3.1] S[table-format=2.3] S[table-format=3.1]}
\toprule
& & & \multicolumn{2}{c}{\textbf{MFC-resnet}} & \multicolumn{2}{c}{\textbf{AFNO}} & \multicolumn{2}{c}{\textbf{FNO}} \\
\cmidrule(lr){4-5} \cmidrule(lr){6-7} \cmidrule(lr){8-9}
{Variable} & {Horizon \(T\)} & {$\sigma_{\mathrm{GT}}$} & {RMSE} & {NRMSE (\%)} & {RMSE} & {NRMSE (\%)} & {RMSE} & {NRMSE (\%)} \\
\midrule
  \multirow{3}{*}{$U_{10}$ (\si{m/s})} & 100 & 3.509 & \bfseries 0.569 & 16.2 & 0.572 & 16.3 & 8.227 & 234.5 \\
   & 300 & 3.514 & \bfseries 0.836 & 23.8 & 1.134 & 32.3 & 10.159 & 289.1 \\
   & 500 & 3.525 & \bfseries 1.029 & 29.2 & 1.422 & 40.3 & 10.075 & 285.8 \\
\midrule
  \multirow{3}{*}{$V_{10}$ (\si{m/s})} & 100 & 1.703 & \bfseries 0.388 & 22.8 & 0.412 & 24.2 & 3.679 & 216.0 \\
   & 300 & 1.708 & \bfseries 0.512 & 30.0 & 0.856 & 50.1 & 5.292 & 309.9 \\
   & 500 & 1.714 & \bfseries 0.663 & 38.7 & 1.083 & 63.2 & 6.020 & 351.1 \\
\midrule
  \multirow{3}{*}{$T_{850}$ (\si{K})} & 100 & 10.894 & \bfseries 1.161 & 10.7 & 1.336 & 12.3 & 7.797 & 71.6 \\
   & 300 & 10.909 & \bfseries 2.014 & 18.5 & 2.283 & 20.9 & 11.805 & 108.2 \\
   & 500 & 10.901 & \bfseries 2.750 & 25.2 & 2.825 & 25.9 & 12.950 & 118.8 \\
\bottomrule
\end{tabular}

\vspace{2pt}
{\footnotesize MFC/AFNO statistics are computed over all 35/35 ICs.; FNO statistics use only non-divergent rollouts: 24/35}
\end{table}

\begin{figure}[h]
    \centering
    \setlength{\tabcolsep}{2pt}
    \renewcommand{\arraystretch}{0.95}

    \newcommand{\imgw}{0.13
    
    \textwidth}
    \newcommand{\cbarw}{0.135\textwidth}
    \newcommand{\figdir}{Figures/wb2/ic_stability_500/globes}

    \begin{tabular}{c c c c c c}
        & Ground truth & MFC-resnet & AFNO & MFC $-$ GT & AFNO $-$ GT \\[0.2em]

        \(U10\) &
        \includegraphics[width=\imgw]{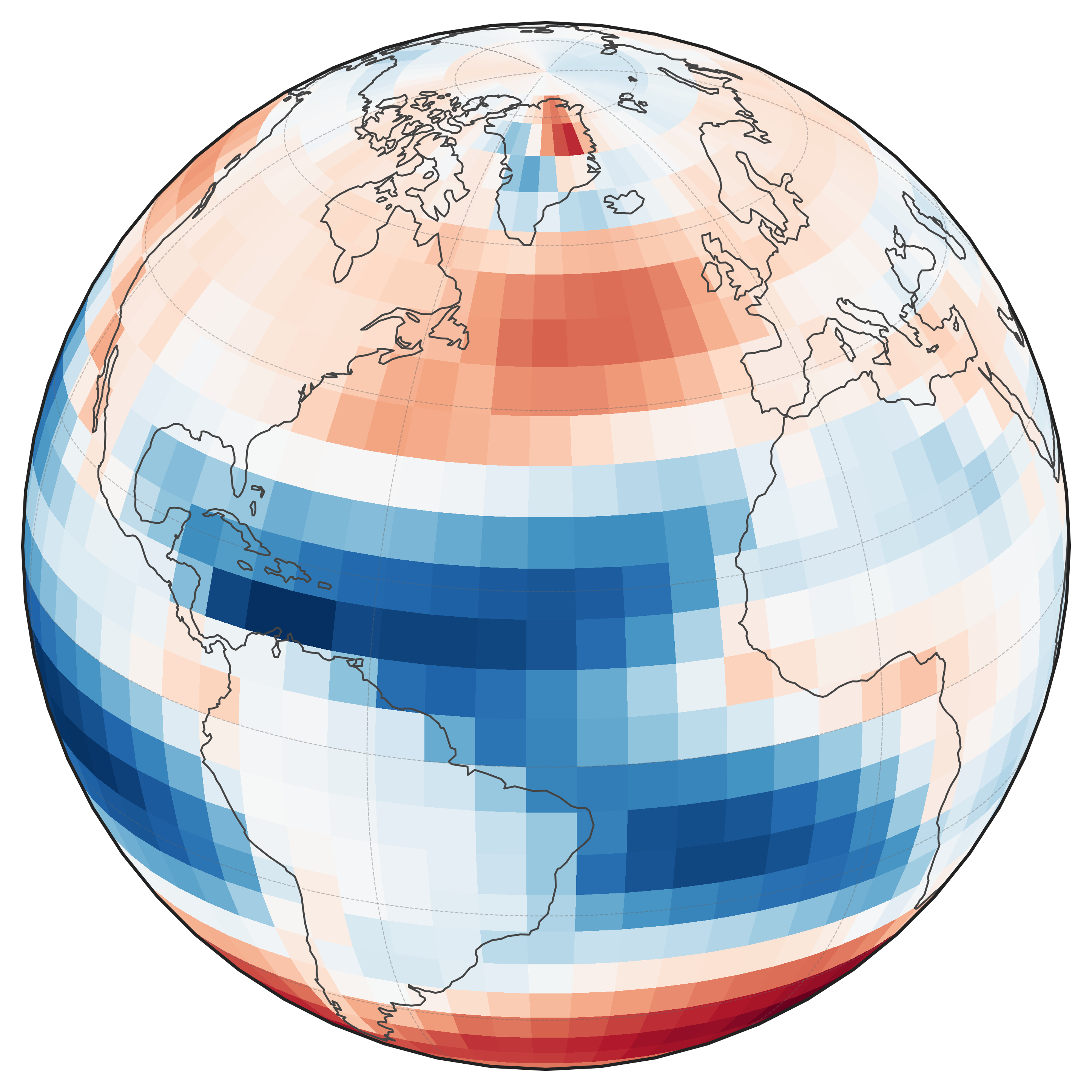} &
        \includegraphics[width=\imgw]{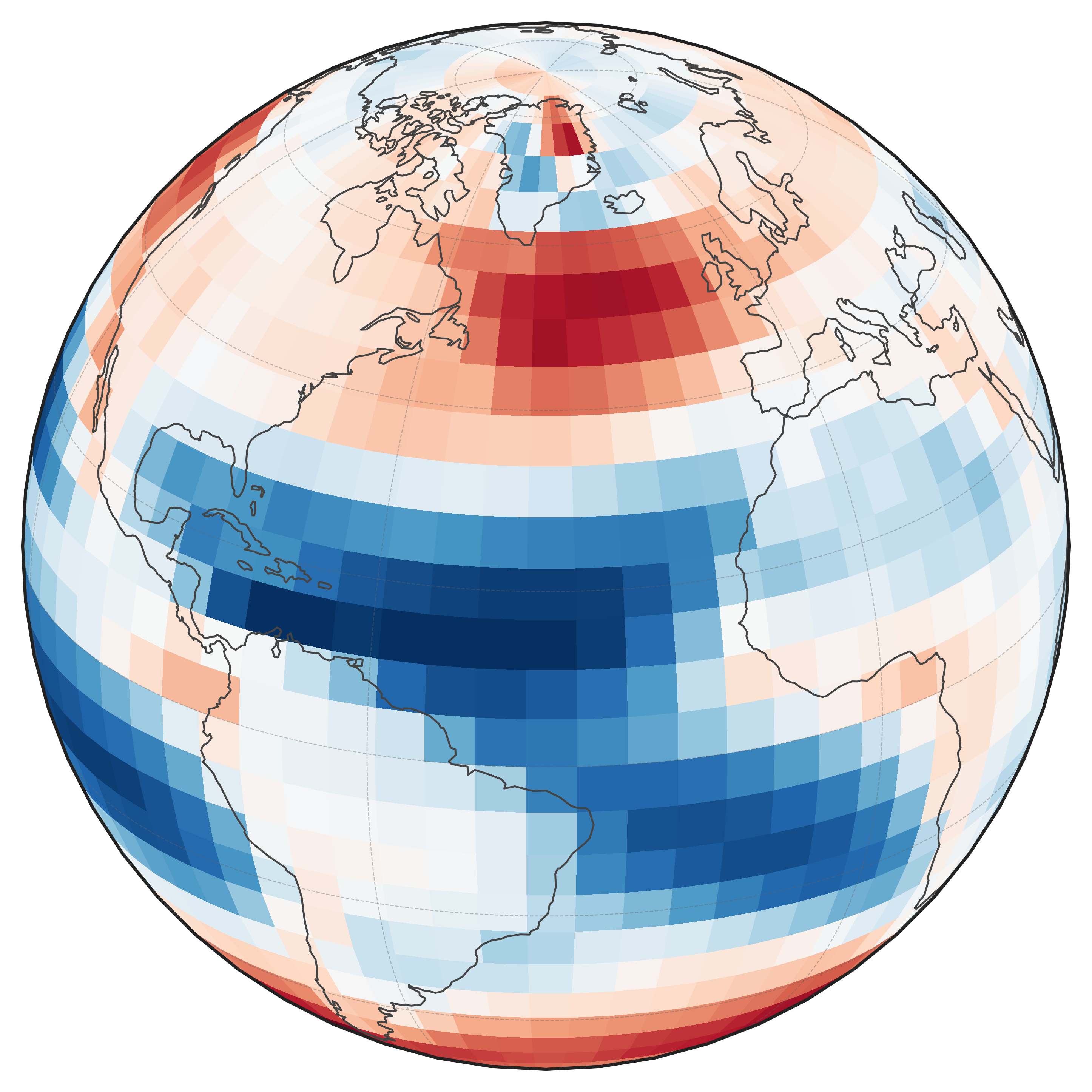} &
        \includegraphics[width=\imgw]{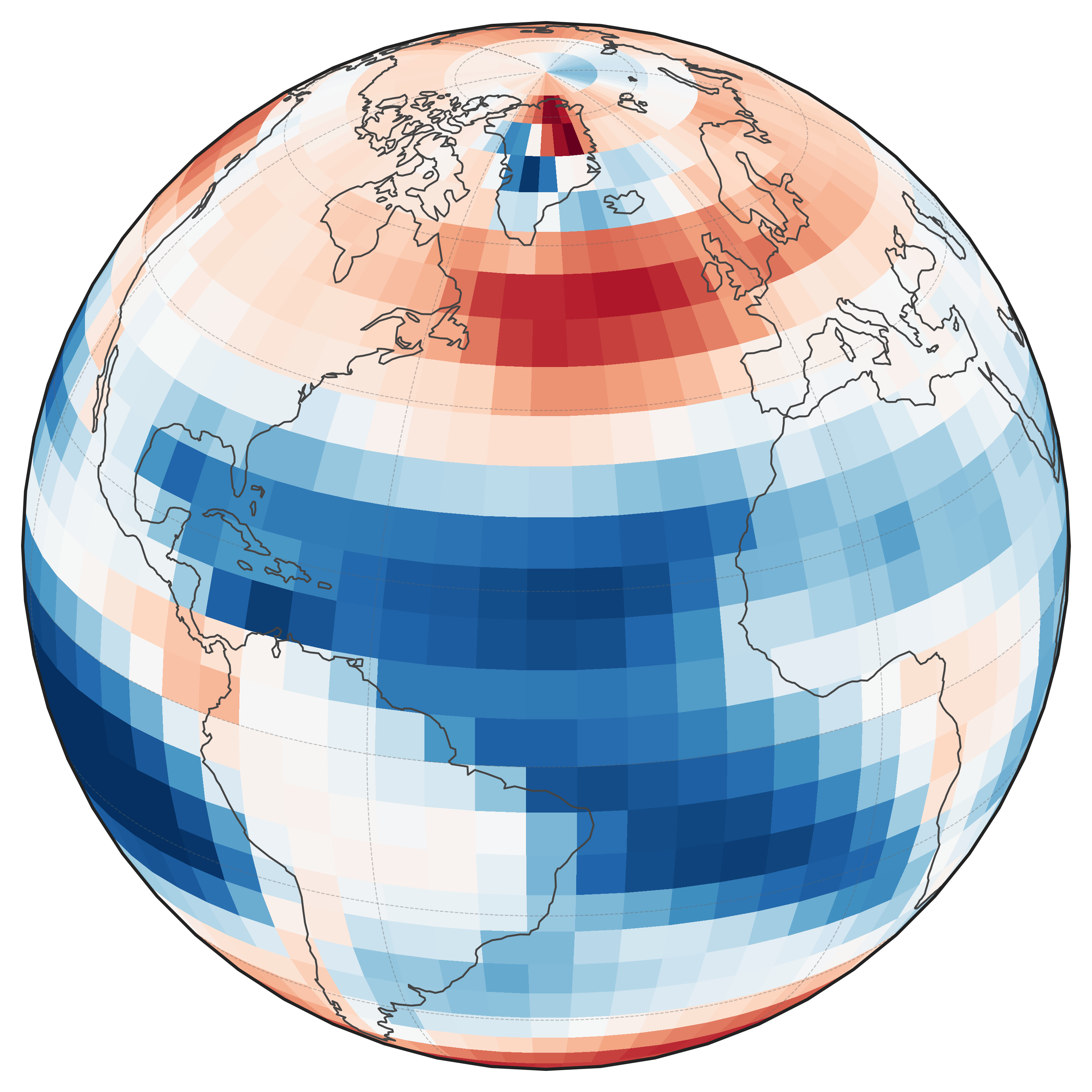} &
        \includegraphics[width=\imgw]{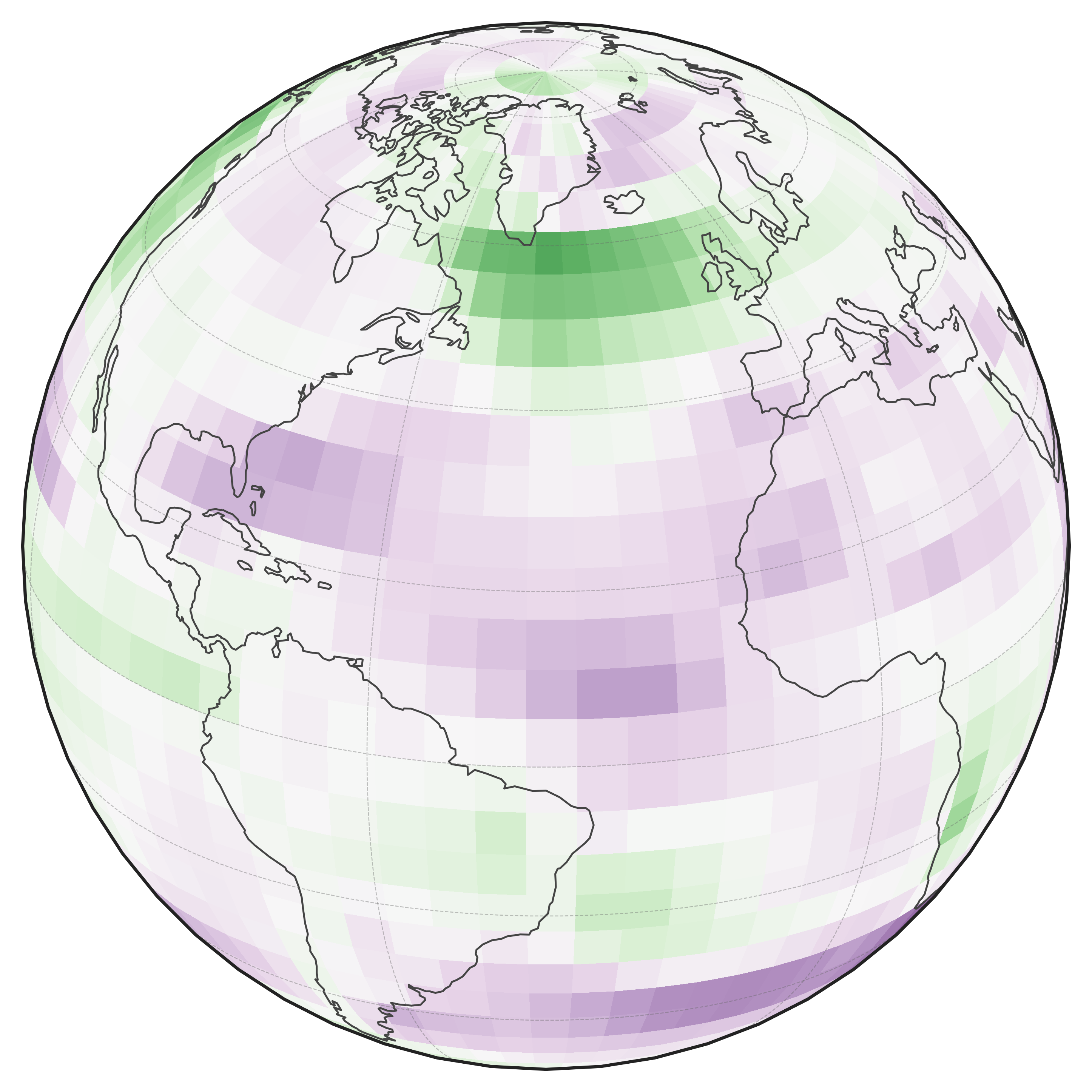} &
        \includegraphics[width=\imgw]{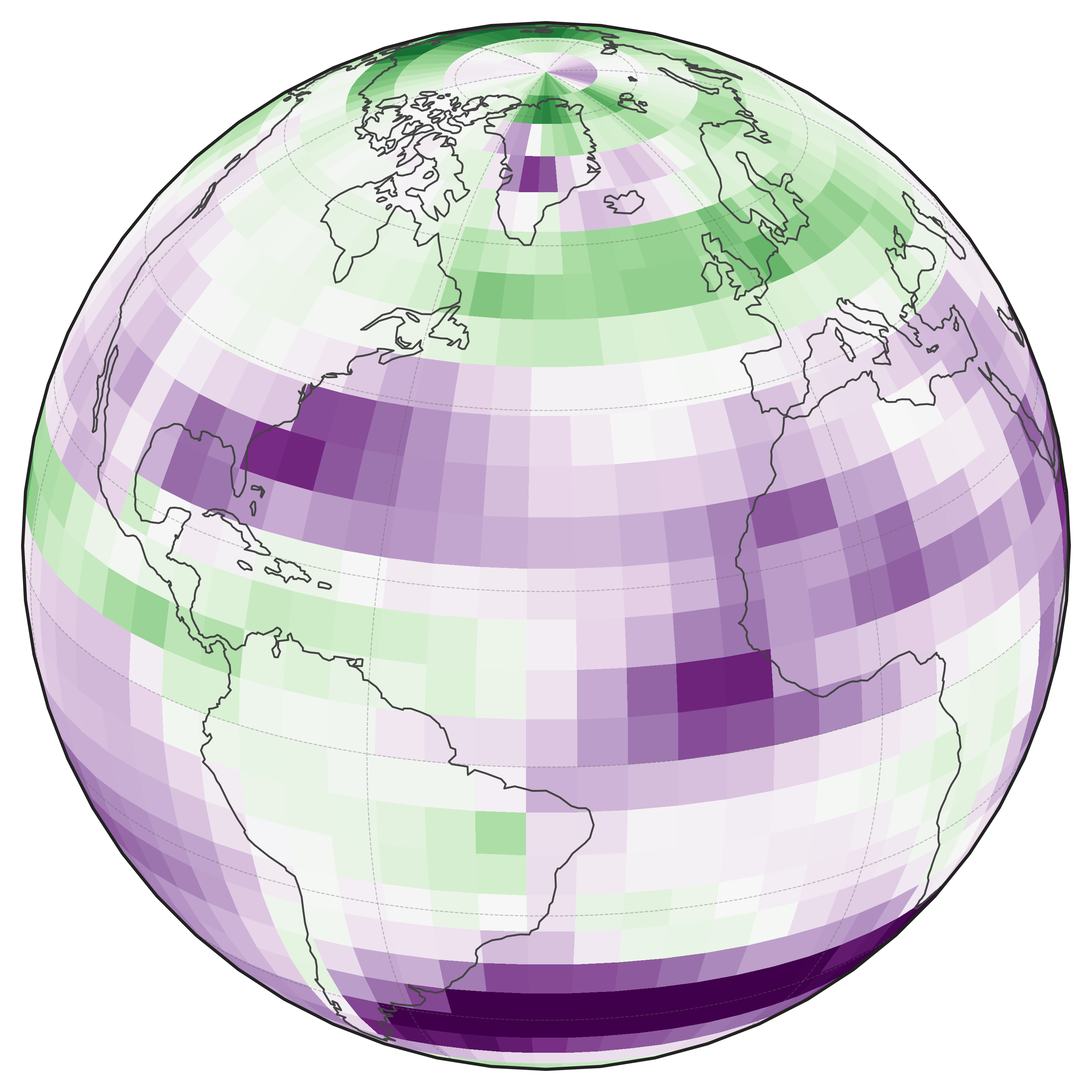}
        \\[-0.35em]
        &
        \multicolumn{3}{c}{
            \includegraphics[width=0.22\textwidth]{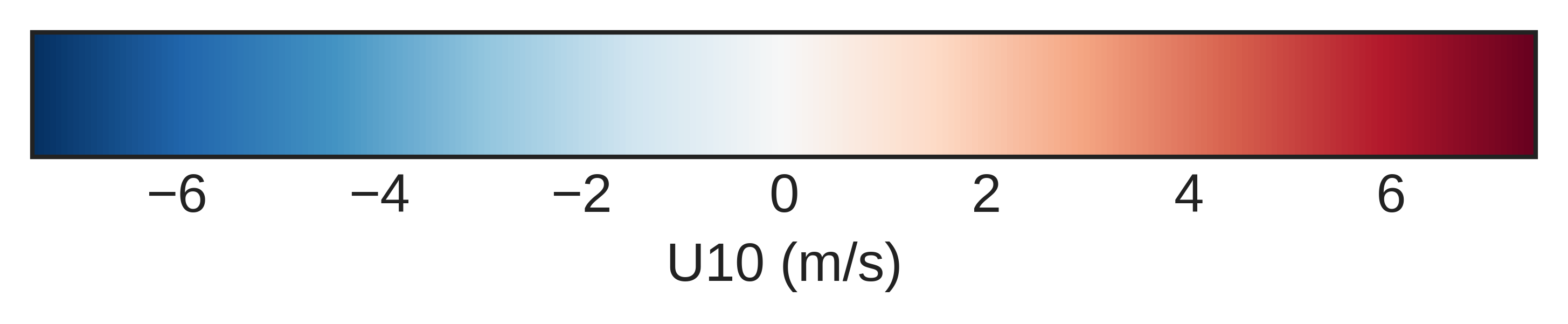}
        }
        &
        \multicolumn{2}{c}{
            \includegraphics[width=0.22\textwidth]{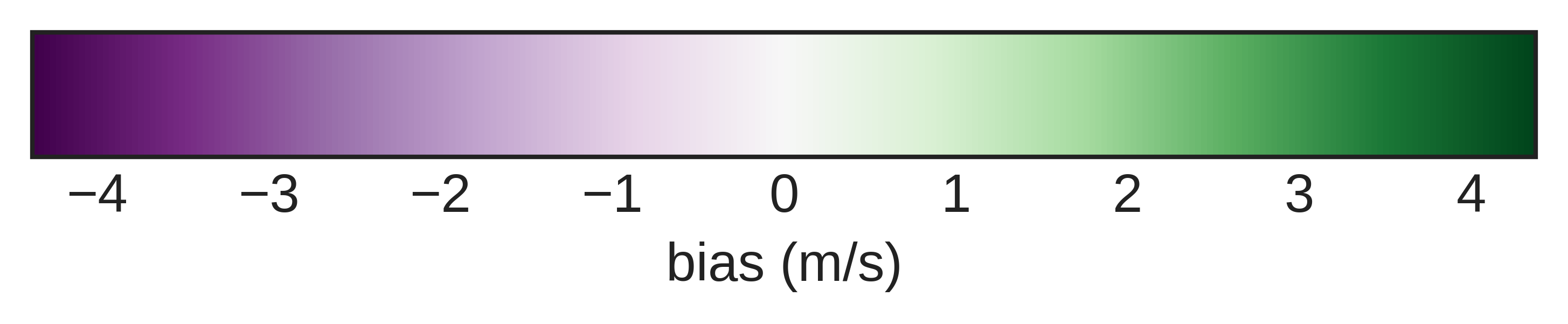}
        }
        \\[0.5em]

        \(V10\) &
        \includegraphics[width=\imgw]{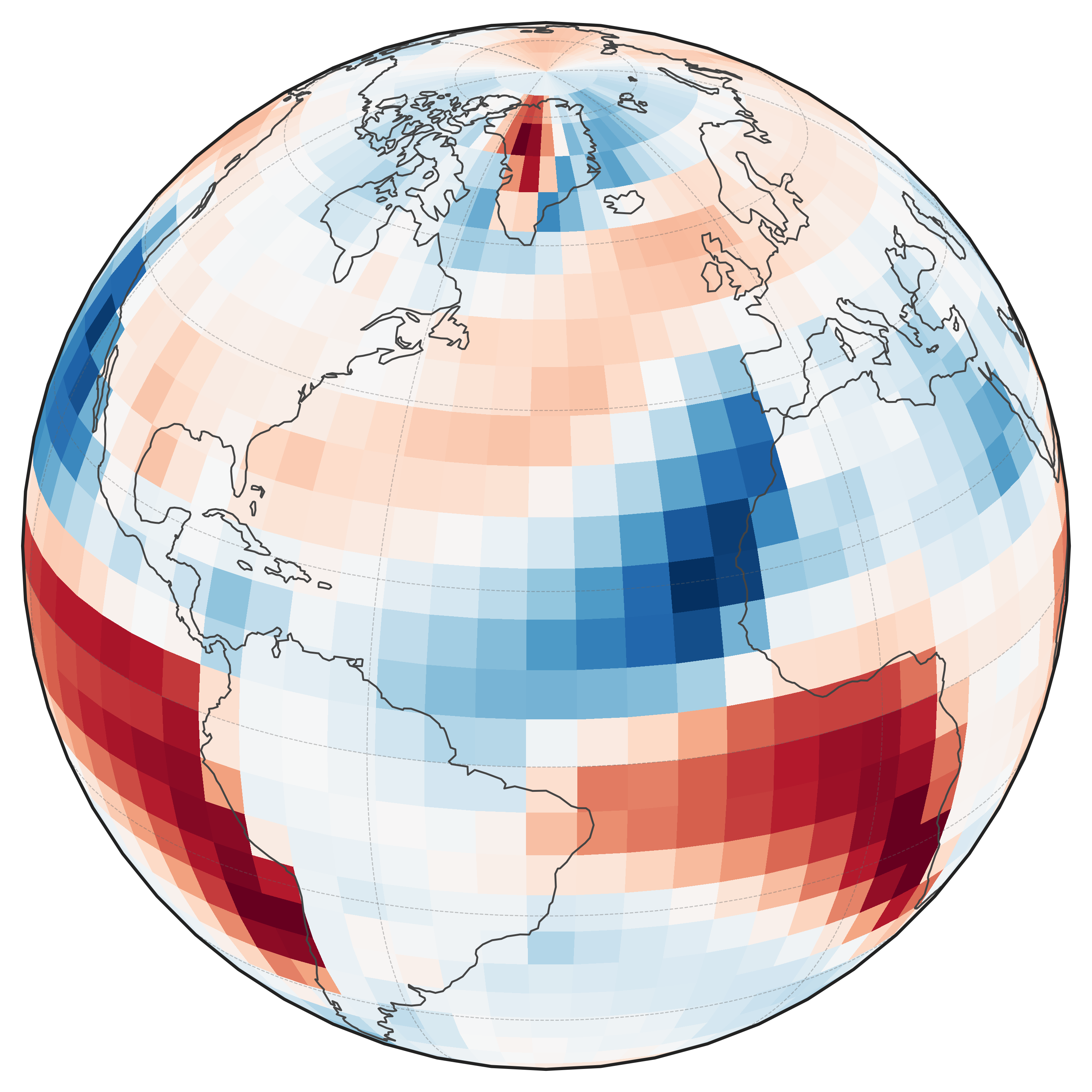} &
        \includegraphics[width=\imgw]{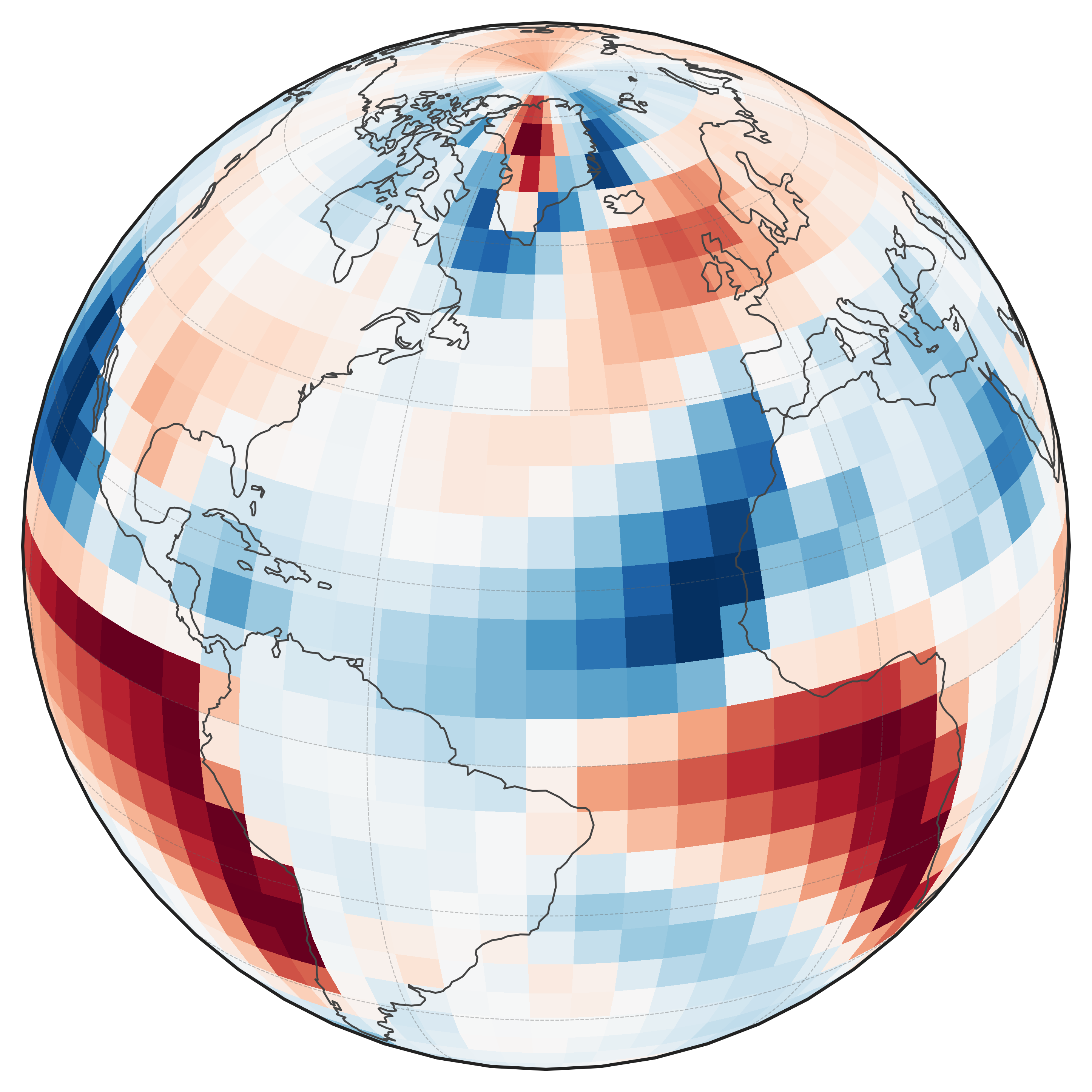} &
        \includegraphics[width=\imgw]{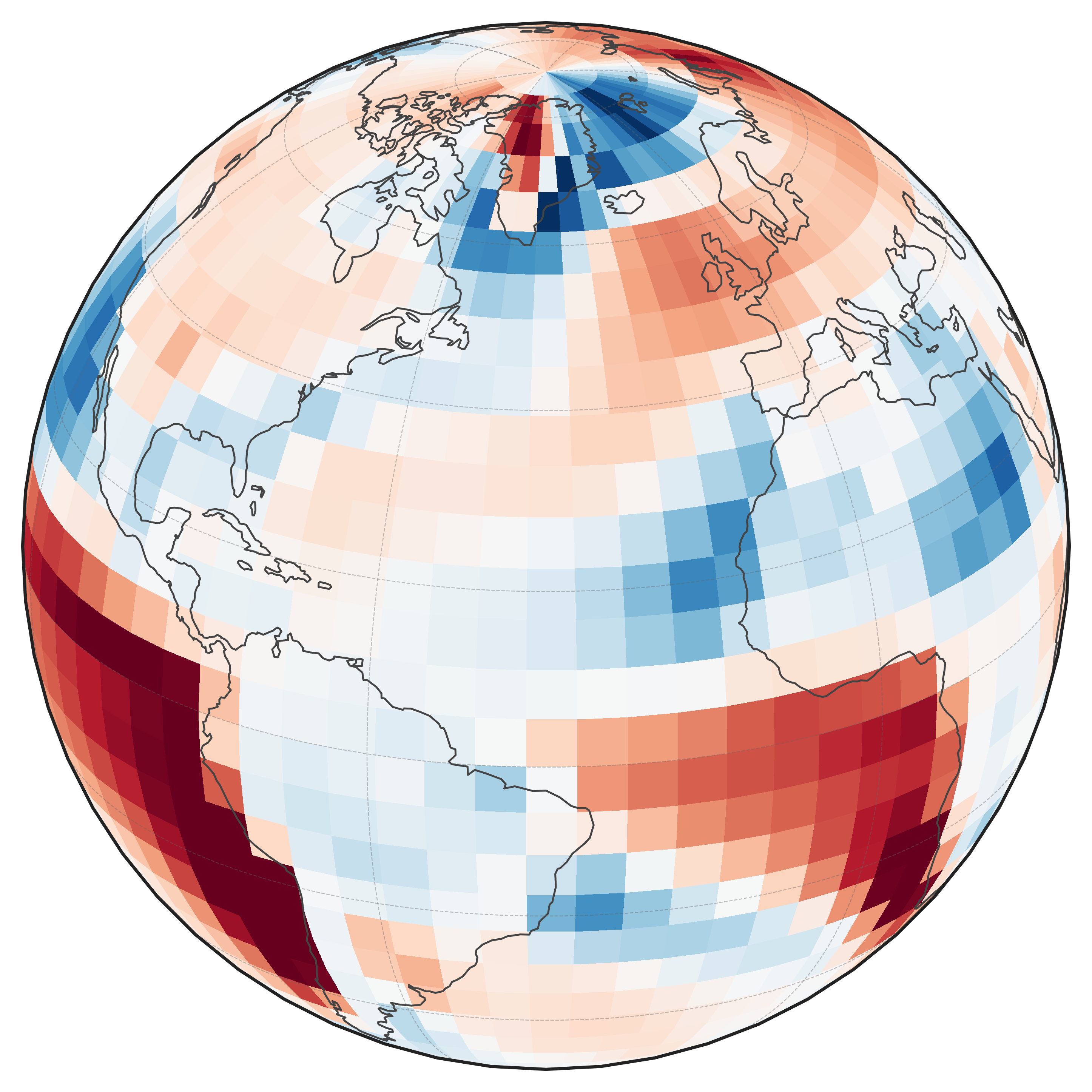} &
        \includegraphics[width=\imgw]{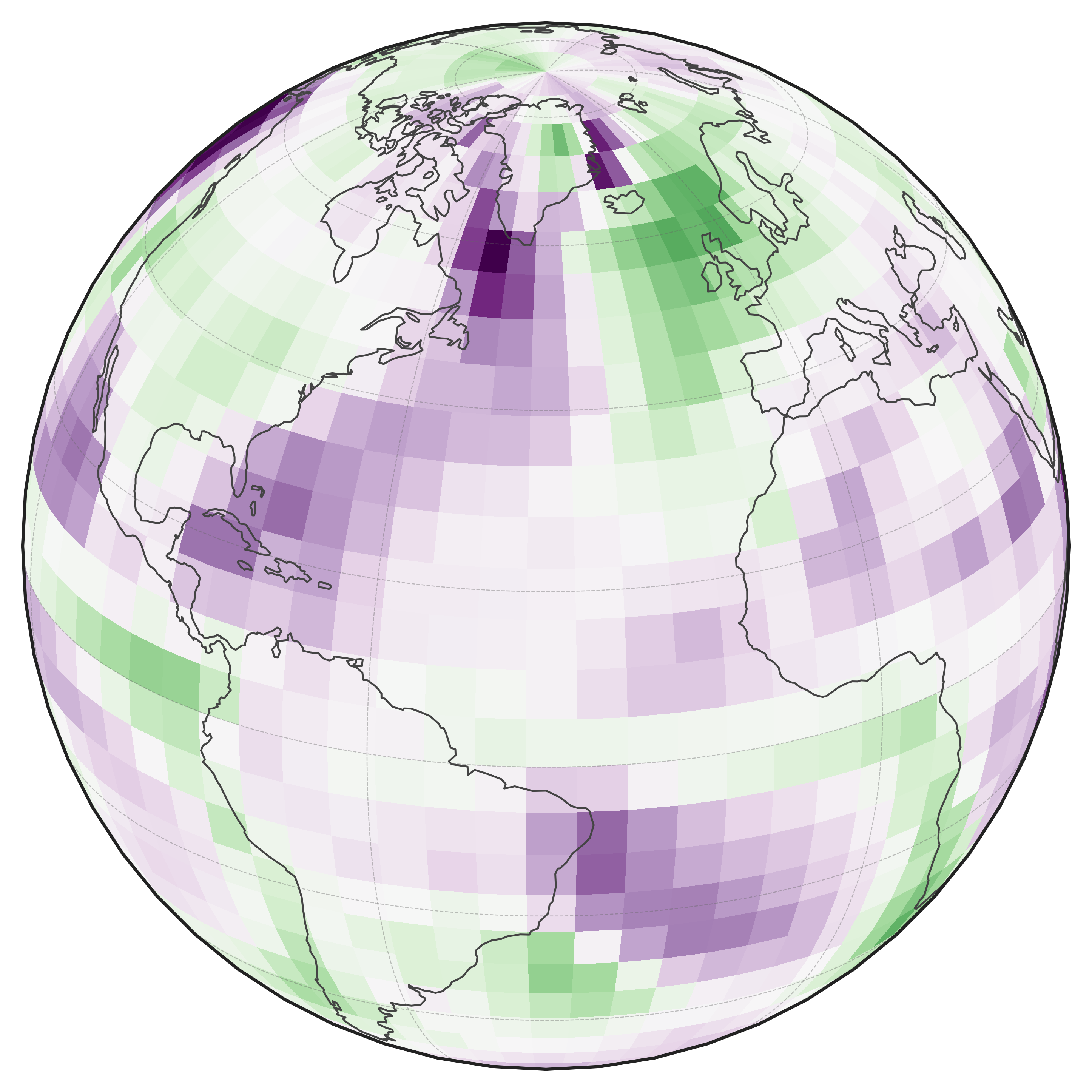} &
        \includegraphics[width=\imgw]{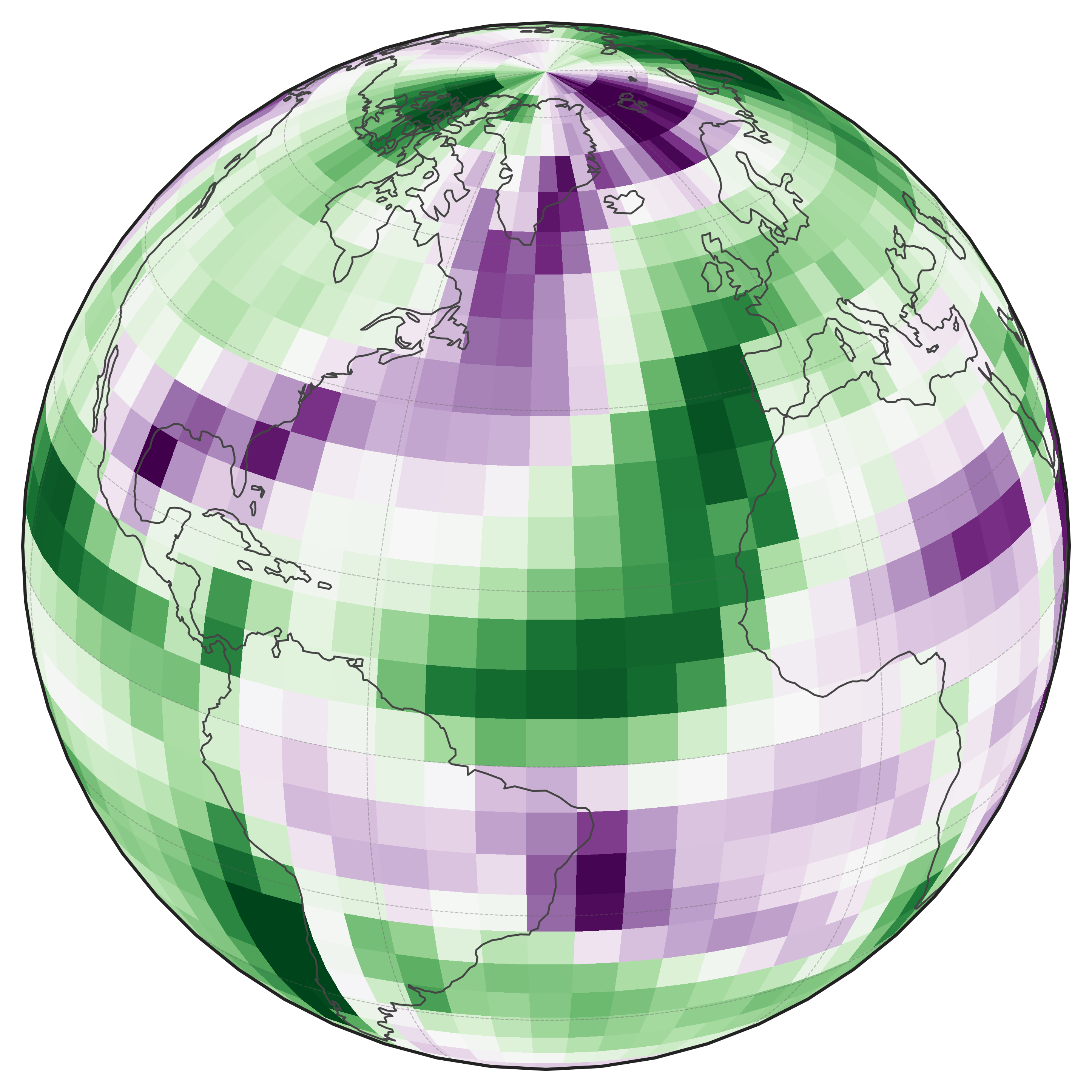}
        \\[-0.35em]
        &
        \multicolumn{3}{c}{
            \includegraphics[width=0.22\textwidth]{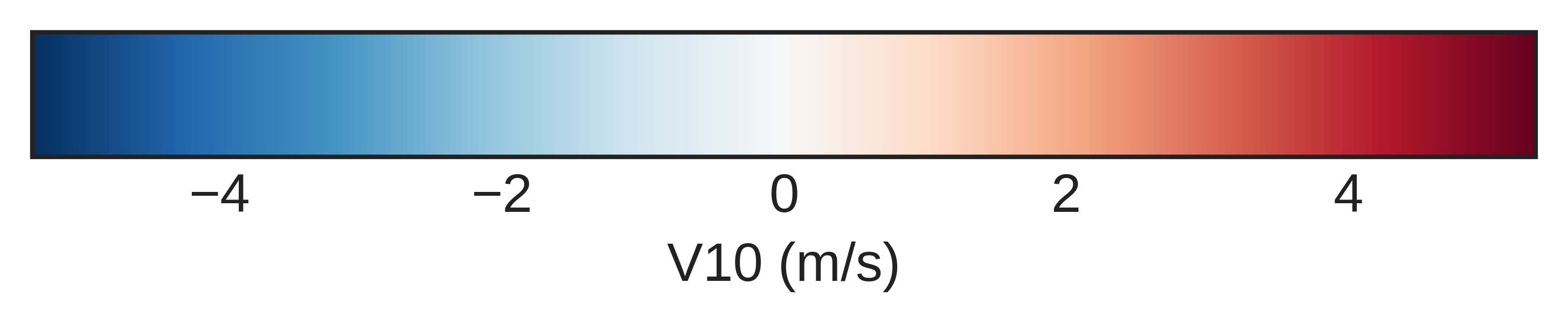}
        }
        &
        \multicolumn{2}{c}{
            \includegraphics[width=0.22\textwidth]{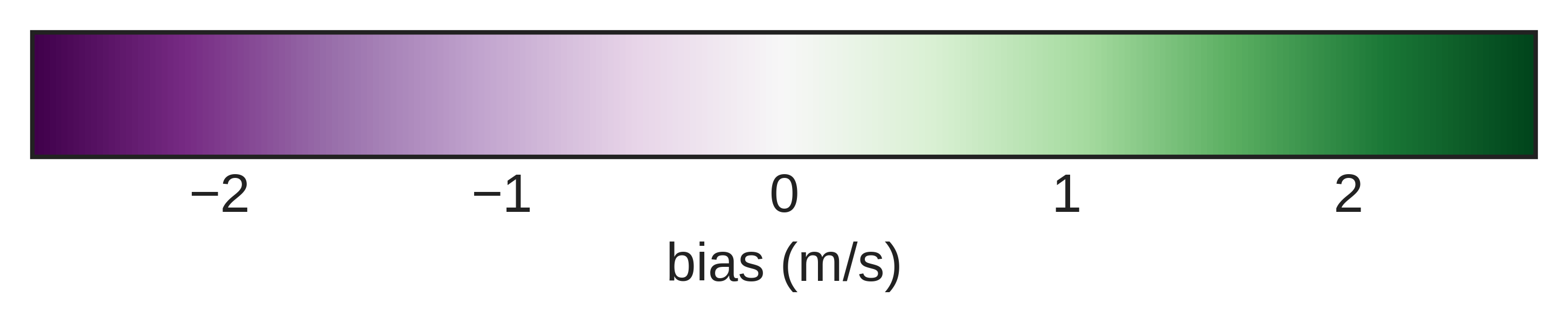}
        }
        \\[0.5em]

        \(T850\) &
        \includegraphics[width=\imgw]{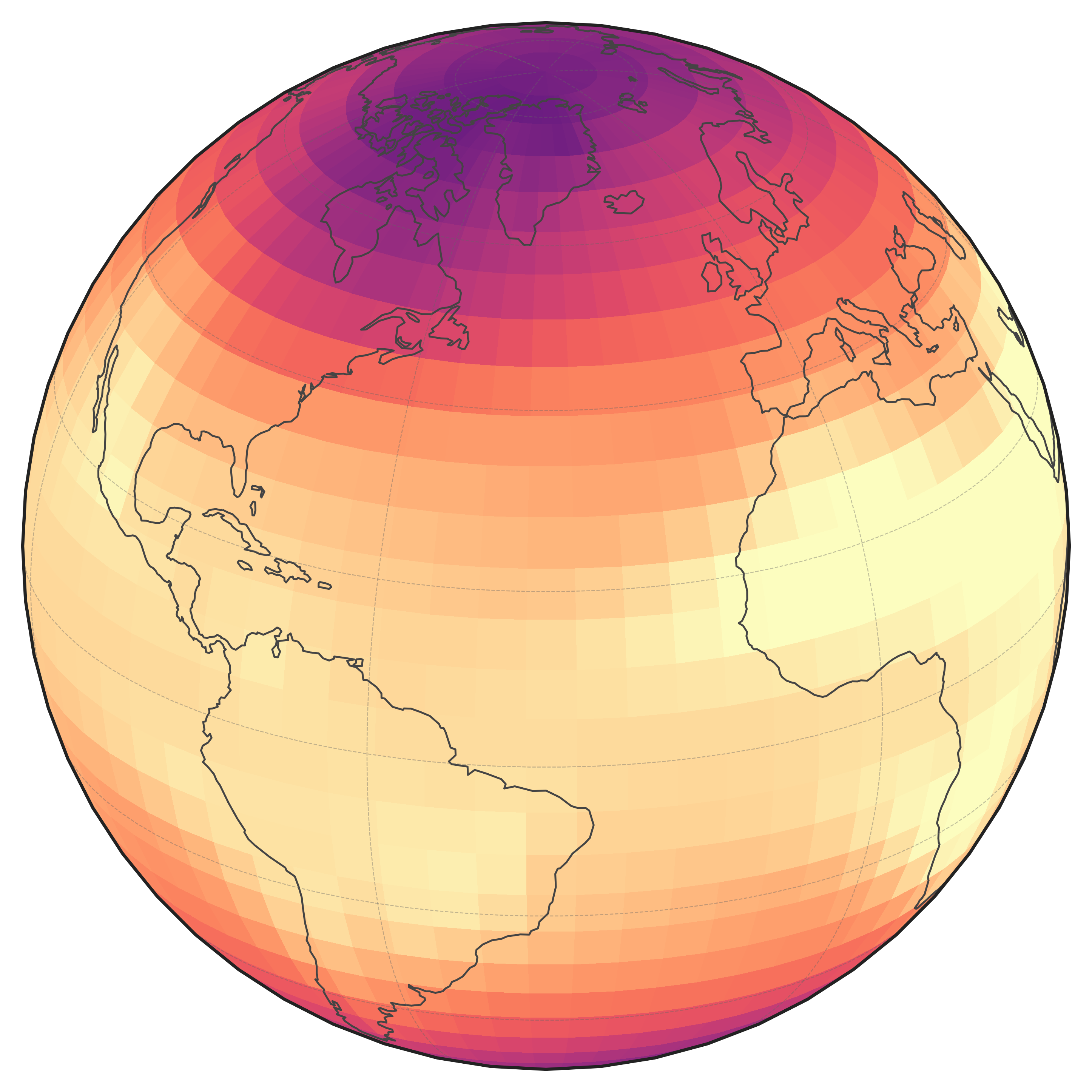} &
        \includegraphics[width=\imgw]{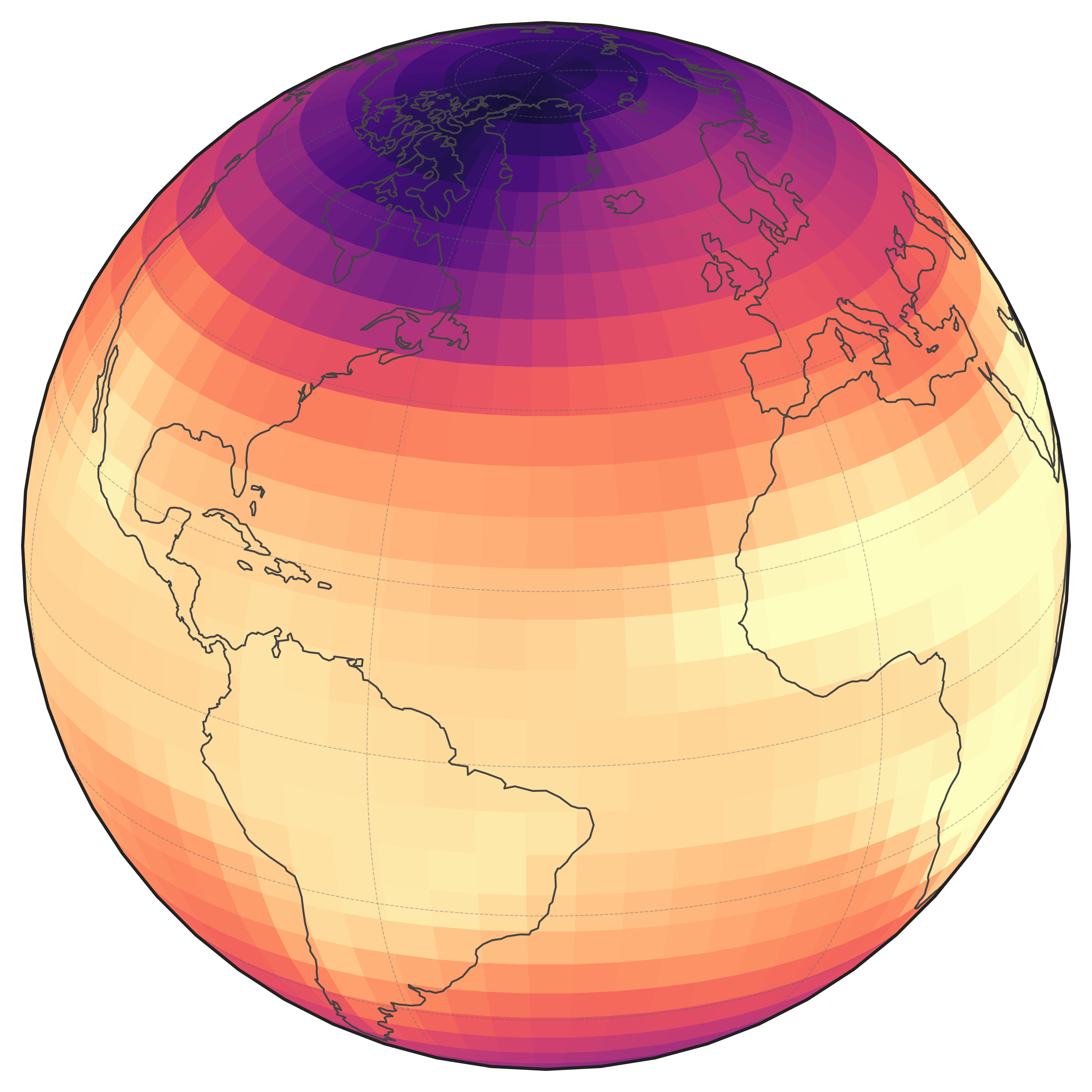} &
        \includegraphics[width=\imgw]{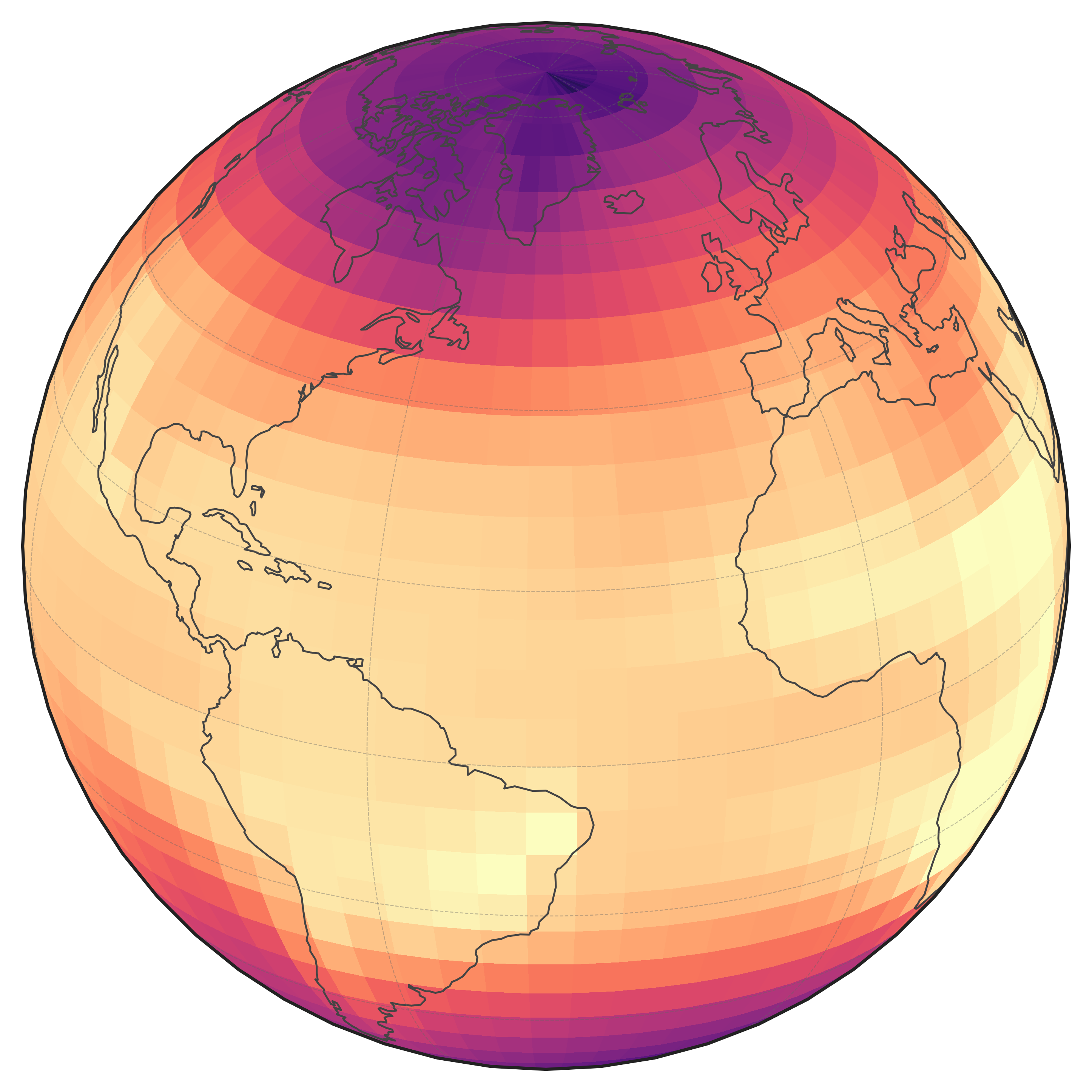} &
        \includegraphics[width=\imgw]{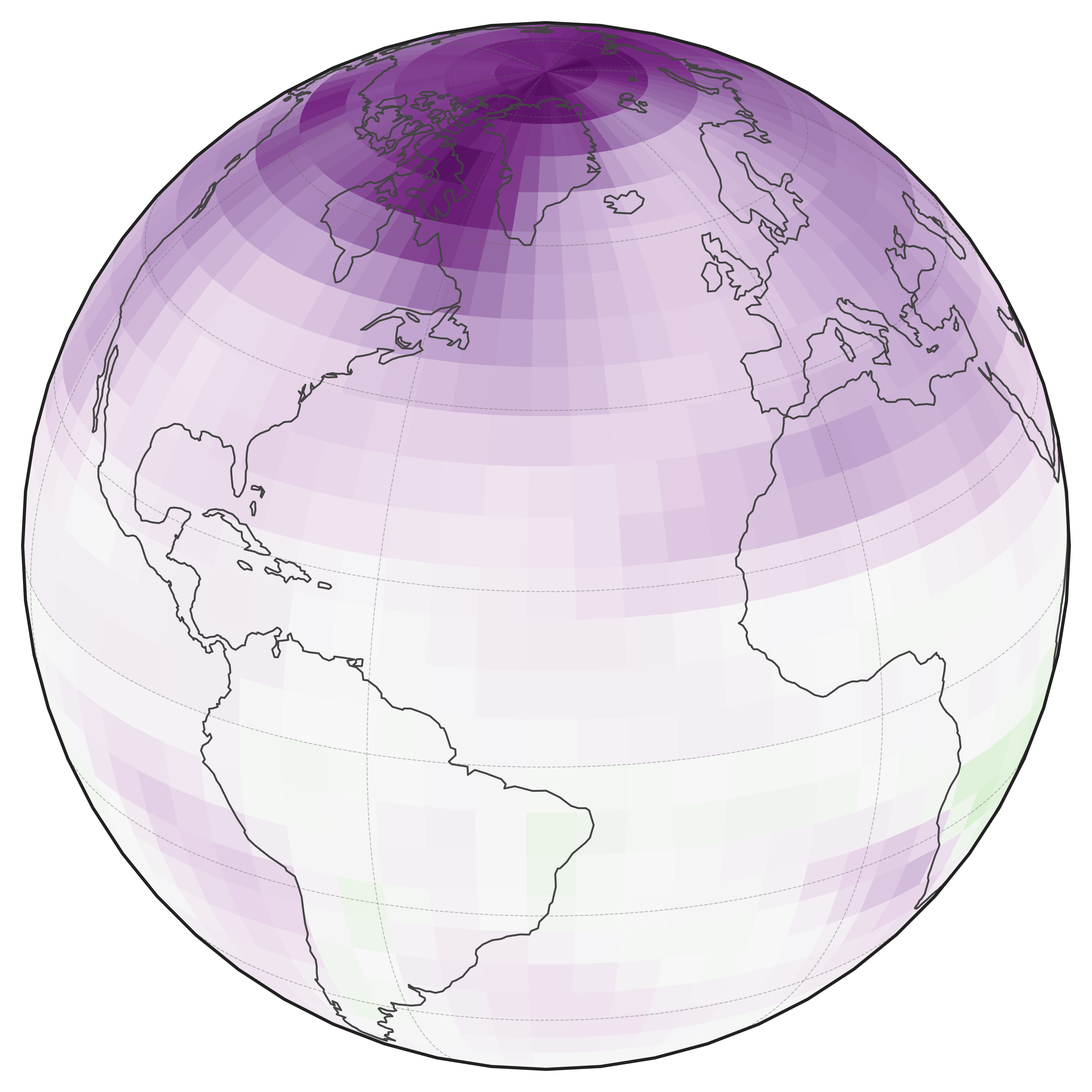} &
        \includegraphics[width=\imgw]{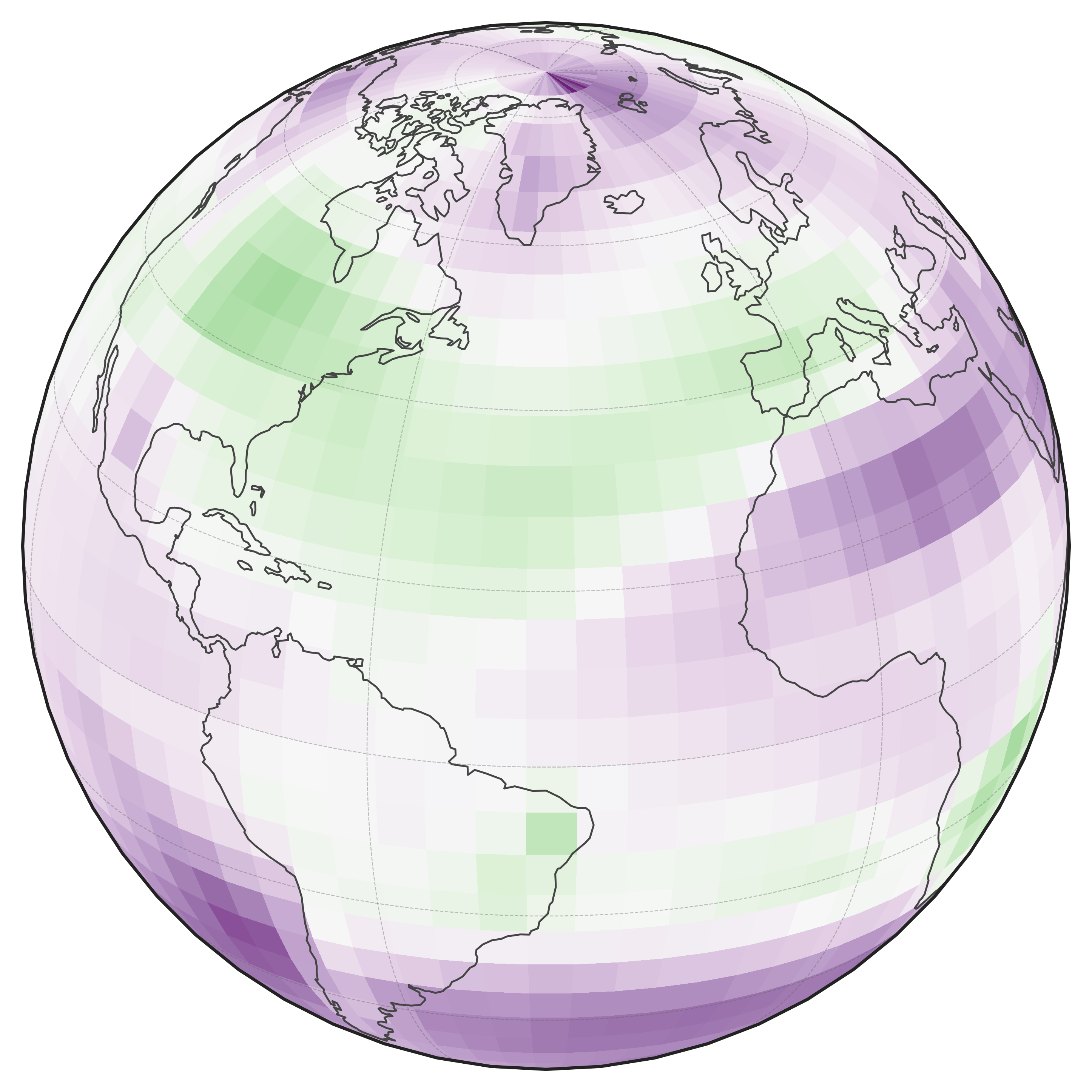}
        \\[-0.35em]
        &
        \multicolumn{3}{c}{
            \includegraphics[width=0.22\textwidth]{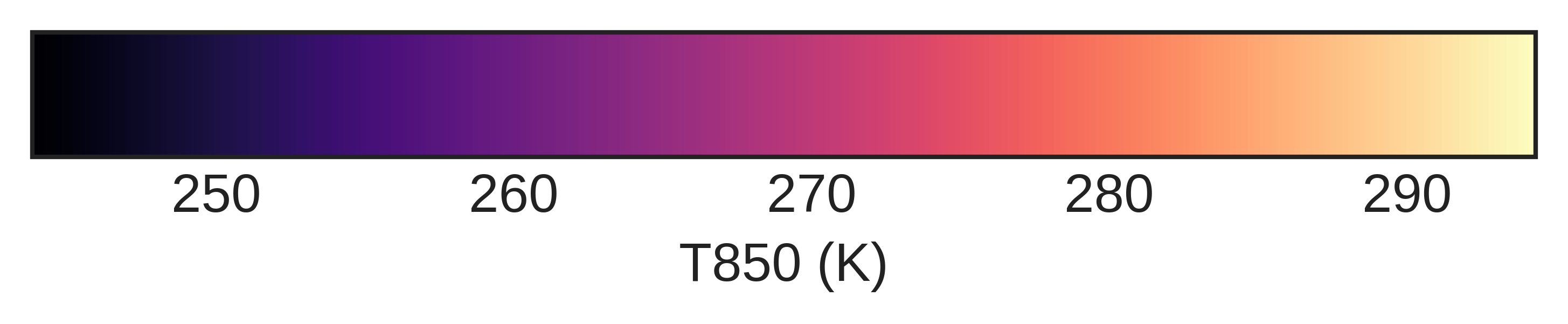}
        }
        &
        \multicolumn{2}{c}{
            \includegraphics[width=0.22\textwidth]{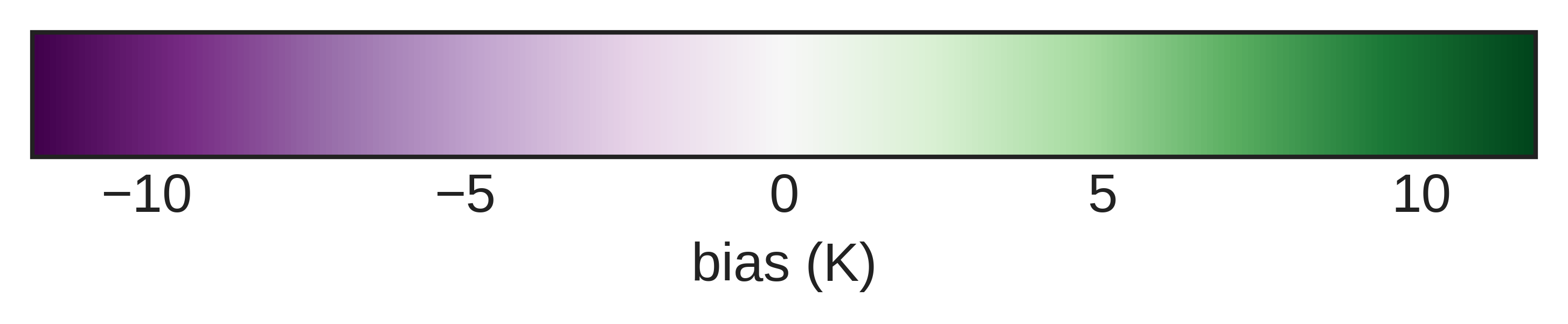}
        }
        \\
    \end{tabular}

    \caption{
    WeatherBench2 500-step time-mean fields. Rows show \(U10\), \(V10\), and \(T850\).
    The first three columns show the ground truth, MFC-resnet, and AFNO time means with a shared
    color scale. The last two columns show the corresponding climatological biases,
    \( \text{MFC} - \text{GT} \) and \( \text{AFNO} - \text{GT} \), with a separate shared bias scale.
    }
    \label{fig:wb2_climatology_globes}
\end{figure}

\subsection{Zero-shot cross-resolution transfer}
\label{sec:exp-crossres}
In this subsection, we test the generalization of an operator trained on one resolution to a finer grid without retraining.


\paragraph{WeatherBench2.} We train three models on 64x32 and evaluate them via 50-step autoregressive rollouts at the training resolution and under zero-shot transfer to 240x121. For MFC at the higher resolution, we apply tapered spectral windowing near the training Nyquist limit. All errors are latitude-weighted; we report the median.

Table~\ref{tab:wb2_crossres} shows that MFC and AFNO are comparable at the training resolution and both improve over FNO. 
Under zero-shot transfer to \(240{\times}121\), however, MFC remains close to its native-resolution accuracy, whereas AFNO degrades substantially, especially on \(T_{850}\). 
This suggests that although AFNO is effective for scalable high-resolution weather modeling, its adaptive Fourier mixing does not by itself guarantee robustness to changes in evaluation resolution. 
MFC gives the lowest high-resolution RMSE for all variables, with degradation ratios close to one. 
Qualitative comparisons are provided in Appendix~\ref{app:wb2_qualitative}.

\begin{table}[h]
\centering
\caption{Cross-resolution zero-shot transfer at step 50, reported as median RMSE over initial conditions.
Models are trained at \(64{\times}32\) and evaluated at both native and \(240{\times}121\) resolution.
The degradation ratio \(\mathrm{RMSE}_{240{\times}121}/\mathrm{RMSE}_{64{\times}32}\) measures sensitivity to resolution change, while the absolute \(240{\times}121\) RMSE is the main high-resolution accuracy metric.}
\label{tab:wb2_crossres}
\small
\begin{tabular}{l rrr rrr rrr}
\toprule
 & \multicolumn{3}{c}{\textbf{MFC-resnet}} & \multicolumn{3}{c}{\textbf{AFNO}} & \multicolumn{3}{c}{\textbf{FNO}} \\
\cmidrule(lr){2-4} \cmidrule(lr){5-7} \cmidrule(lr){8-10}
Variable & \(64{\times}32\) & \(240{\times}121\) & Ratio 
         & \(64{\times}32\) & \(240{\times}121\) & Ratio 
         & \(64{\times}32\) & \(240{\times}121\) & Ratio \\
\midrule
\(U_{10} \) (m/s)  & 4.47 & 4.80 & \textbf{1.07} & 4.50 & 9.40  & 2.09 & 5.54 & 6.06  & 1.09 \\
\(V_{10} \) (m/s)  & 4.66 & 4.93 & 1.06          & 4.71 & 6.06  & 1.29 & 5.53 & 5.60  & \textbf{1.01} \\
\(T_{850} \) (K) & 4.25 & 4.33 & \textbf{1.02} & 4.43 & 13.14 & 2.97 & 7.01 & 10.53 & 1.50 \\
\bottomrule
\end{tabular}
\end{table}

\subsection{Localized structure preservation under distributional shift}
\label{sec:exp-localized}

This subsection addresses our third contribution: directional first-order terms that allow asymmetric, transport-like responses. 
We test whether these terms, together with smooth attenuation, help preserve localized advection-driven structures under two distribution shifts: lower viscosity, which produces sharper gradients, and higher spatial resolution. 
The latter also provides a second test of cross-resolution transfer, corresponding to contribution \label{sec:exp-crossres}.

\paragraph{Burgers' equation.}
We study the 2D viscous Burgers equation
\(\partial_t u + (u \cdot \nabla)u = \nu\Delta u\)
on a periodic domain. 
The training set contains 200 trajectories at \(64{\times}64\) with
\(\nu \in [10^{-2},\,2{\cdot}10^{-2}]\). 
To probe generalization along orthogonal axes, we construct four test sets of 50 trajectories each:
\textbf{(A)} in-distribution;
\textbf{(B)} out-of-distribution (OOD) viscosity, with \(\nu \in [10^{-3},\,7{\cdot}10^{-3}]\), producing sharper structures;
\textbf{(C)} OOD resolution at \(128{\times}128\);
and \textbf{(D)} both shifts combined.

We compare parameter-matched MFC-UNet, FNO, and F-FNO models with approximately \(10^6\) parameters. The stored training arguments of the three evaluated checkpoints agree in every field (optimizer, learning rate and schedule, epochs, batch size, gradient clipping, 5-step unrolled relative-\(\ell_2\) loss, and seed; Table~\ref{tab:burgers-training}), so no rollout stabilization was applied to MFC that was not applied to the baselines. F-FNO serves as a stronger Fourier baseline with factorized spectral mixing.
At test time, we roll out for 100 steps, a \(20{\times}\) extrapolation beyond the training horizon, and report the geometric mean of per-step normalized RMSE following APEBench~\citep{koehler2024apebench}. F-FNO attains the lowest in-distribution error at short horizons (GMean@20 on sets A and B); MFC's advantage is long-horizon stability and OOD/cross-resolution robustness.

The baselines' long-horizon divergence is a spectral instability rather than a tuning artifact (Appendix~\ref{app:divergence-mechanism}): FNO develops a \({\sim}10^4\) amplification hump just inside its retained-mode boundary by step 20, flooding all wavenumbers by step 50; F-FNO fails the same way, delayed to step \({\geq}50\); MFC stays within a factor of \({\approx}40\) of the true spectrum through 100 steps. An FNO retrained at modes 32 (effectively untruncated in the resolved band, \(3{\times}\) the parameters) still diverges on every trajectory of every test set by step 50, while MFC under an FNO-style hard cutoff remains stable on all 200 trajectories. The driver is unconstrained spectral weights, not truncation~\citep{mccabe2306towards,lippe2023pde,koehler2024apebench}.

\begin{table}[h]
\centering
\caption{Aggregated rollout error on 2D Burgers across the four test sets.
We report the geometric mean of normalized RMSE at 20, 50, and 100 autoregressive rollout steps~\citep{koehler2024apebench}; lower is better.}
\label{tab:gmean_ffno}
\setlength{\tabcolsep}{5pt}
\begin{tabular}{c c c c c c c c c c c}
\toprule
& & \multicolumn{3}{c}{GMean@20} 
& \multicolumn{3}{c}{GMean@50} 
& \multicolumn{3}{c}{GMean@100} \\
\cmidrule(lr){3-5}
\cmidrule(lr){6-8}
\cmidrule(lr){9-11}
Test & Res. 
& FNO & FFNO & MFC 
& FNO & FFNO & MFC 
& FNO & FFNO & MFC \\
\midrule
A & $64^2$  
& 0.4153 & \textbf{0.0135} & 0.0841
& 1381.5232 & \textbf{0.1969} & 0.6660
& $5.75{\times}10^9$ & 8.7762 & \textbf{1.7009} \\

B & $64^2$  
& 0.5353 & \textbf{0.0694} & 0.2183
& 1414.6782 & \textbf{0.3758} & 0.8718
& $4.97{\times}10^9$ & 9.7672 & \textbf{1.5673} \\

C & $128^2$ 
& 0.7712 & \textbf{0.2592} & 0.2681
& 2384.3485 & 1.2758 & \textbf{1.1350}
& $9.20{\times}10^9$ & 44.6557 & \textbf{2.3423} \\

D & $128^2$ 
& 0.9035 & \textbf{0.3454} & 0.4090
& 2261.6899 & 1.2914 & \textbf{1.2314}
& $7.15{\times}10^9$ & 39.7726 & \textbf{2.1275} \\
\bottomrule
\end{tabular}
\end{table}

\subsection{Component ablations}
\label{sec:exp-ablations}

We ablate each component of the MFC symbol on 2D Burgers, retraining every variant under the identical recipe, seed, and best-validation rule of Table~\ref{tab:burgers-training}: \textbf{$-$\,shift} freezes \(\delta \equiv 0\); \textbf{$-$\,first-order} freezes \(\gamma_x = \gamma_y \equiv 0\); \textbf{$-$\,anisotropy} freezes \(\alpha_{xy} \equiv 0\); and \textbf{hard cutoff} replaces \(\exp(-\tau\,\bfk^\top\bfA\bfk)\) with \(\mathbf{1}[|\bfk| \leq 17]\), matching FNO's modes-18 truncation at equal bandwidth.

Retraining the full model under an identical seed shifts GMean@20 by \(+52\%\) and GMean@100 by \(-43\%\). We therefore report two full-model draws as a retraining-noise band and treat only \({\geq}2\times\) degradation relative to the band, or divergence, as signal (four-seed study in Appendix~\ref{app:seeds}). No ablated variant diverges.

\begin{table}[h]
\centering
\caption{Component ablations on 2D Burgers. Validation relative \(\ell_2\) is the 5-step unrolled average used for checkpoint selection. GMean@50/@100 and checkpoint provenance: Appendix~\ref{app:ablation-details}.}
\label{tab:ablations}
\small
\setlength{\tabcolsep}{4.5pt}
\begin{tabular}{l l c cccc}
\toprule
& & & \multicolumn{4}{c}{GMean@20} \\
\cmidrule(lr){4-7}
Variant & Modification & Val.\ rel.\ \(\ell_2\) & A & B & C & D \\
\midrule
MFC-full (released) & --- & 0.0113 & 0.084 & 0.218 & 0.268 & 0.409 \\
MFC-full (retrain)  & --- & 0.0131 & 0.128 & 0.279 & 0.332 & 0.486 \\
\midrule
$-$ shift           & \(\delta \equiv 0\) & 0.0144 & 0.214 & 0.304 & 0.359 & 0.461 \\
$-$ first-order     & \(\gamma_x = \gamma_y \equiv 0\) & 0.0127 & 0.070 & 0.173 & 1.012 & 0.998 \\
$-$ anisotropy      & \(\alpha_{xy} \equiv 0\) & 0.0166 & 0.115 & 0.253 & 1.017 & 0.970 \\
hard cutoff         & \(e^{-\tau \bfk^\top \bfA \bfk} \to \mathbf{1}[|\bfk| \leq 17]\) & 0.0720 & 0.195 & 0.270 & 0.522 & 0.535 \\
\bottomrule
\end{tabular}
\end{table}

Smooth attenuation dominates one-step accuracy (hard cutoff: \(5.5\)--\(6.4\times\) validation degradation) but does not cause divergence. The shift is systematically learned (96\% of channels have \(|\delta|\geq 0.1\) cells) and contributes modestly at GMean@20. First-order and anisotropy terms sit inside the noise band in-distribution but degrade zero-shot \(128^2\) transfer by \(2\)--\(3\times\) (GMean@20 on C/D). Learned-parameter inspections: Appendix~\ref{app:learned-params}.

\section{Discussion}
We introduced MFCs, a novel operator learning framework in which convolutional filters are defined in continuous space, drawing on the connection between convolution, Green’s functions, and linear PDEs. By construction, the approach avoids reliance on grid-dependent stencils or spectral truncation, enabling flexibility across discretizations while better preserving structure over long autoregressive roll outs. When augmented with directional components to capture transport-dominated phenomena, MFCs encode a strong inductive bias toward the structure of PDE solutions.

Beyond pointwise improvements in predictions, empirical evaluation indicates that MFCs remain stable over long autoregressive rollouts without resorting to excessive spatial smoothing. This distinction is critical: a model may remain numerically bounded over extended horizons by becoming overly dissipative, yet such behavior often erodes the fine-scale structures that govern future dynamics. In contrast, MFCs faithfully preserve these structures while also robustly capturing sharp localized features. Moreover, they demonstrate reliable zero-shot generalization to higher-resolution grids, consistently outperforming baseline models.

\paragraph{Computational cost.}
Runtime and peak memory versus FNO, F-FNO, and AFNO are in Appendix~\ref{app:runtime}. MFC-UNet is slower than F-FNO at \(64^2\) but faster at \(128^2\), \(256^2\), and \(240{\times}121\); only vanilla FNO is consistently cheaper. Cost grows \(16.9\times\) from \(64^2\) to \(256^2\) (\(16\times\) grid points), consistent with \({\cal O}(cn\log n + c^2 n)\). Symbol evaluation is resolution-independent (\({\approx}2.2\)\,ms).

\paragraph{Beyond periodic rectangular domains.}
The FFT assumption is shared by spectral operator learning. Padding~\citep{li2021fno} and domain extension~\citep{liu2023domain} apply directly. More fundamentally, MFC is the symbol of a parabolic operator: the same construction applies in any eigenbasis of the underlying elliptic operator (spherical harmonics~\citep{bonev2306spherical}, DCT/DST, FEM eigenbases). The spatial kernel is a closed-form shifted anisotropic Gaussian, so it can also be evaluated on scattered points.

\paragraph{Limitations.}
The current implementation targets periodic rectangular grids; the extensions above, and 3D evaluation, are future work. The stability theory is layer-level: Lemma~\ref{app:mfc-stability} bounds the symbol, and Appendix~\ref{app:system-stability} gives a conditional rollout statement whose hypotheses (\(L_\Phi\leq 1\), one-step error bound) are not automatic for the released checkpoints. Observed long-horizon stability is empirical. Pointwise error in chaotic regimes is governed by Lyapunov decorrelation, which is why we report correlation, spectra, and long-time statistics.

\newpage
{
\FloatBarrier
\bibliographystyle{plain}
\bibliography{biblio}

@article{kovachki2023neural,
  title   = {Neural Operator: Learning Maps Between Function Spaces With Applications to {PDEs}},
  author  = {Kovachki, Nikola and Li, Zongyi and Liu, Burigede and Azizzadenesheli, Kamyar and Bhattacharya, Kaushik and Stuart, Andrew and Anandkumar, Anima},
  journal = {Journal of Machine Learning Research},
  volume  = {24},
  number  = {89},
  pages   = {1--97},
  year    = {2023}
}

@article{lu2021learning,
  title   = {Learning nonlinear operators via {DeepONet} based on the universal approximation theorem of operators},
  author  = {Lu, Lu and Jin, Pengzhan and Pang, Guofei and Zhang, Zhongqiang and Karniadakis, George Em},
  journal = {Nature Machine Intelligence},
  volume  = {3},
  number  = {3},
  pages   = {218--229},
  year    = {2021}
}

@inproceedings{li2020fourier,
  title     = {Fourier Neural Operator for Parametric Partial Differential Equations},
  author    = {Li, Zongyi and Kovachki, Nikola and Azizzadenesheli, Kamyar and Liu, Burigede and Bhattacharya, Kaushik and Stuart, Andrew and Anandkumar, Anima},
  booktitle = {International Conference on Learning Representations},
  year      = {2021}
}

@inproceedings{tran2021factorized,
  title     = {Factorized Fourier Neural Operators},
  author    = {Tran, Alasdair and Mathews, Alexander and Xie, Lexing and Ong, Cheng Soon},
  booktitle = {International Conference on Learning Representations},
  year      = {2023}
}

@article{li2023fourier,
  title   = {Fourier Neural Operator with Learned Deformations for {PDEs} on General Geometries},
  author  = {Li, Zongyi and Huang, Daniel Zhengyu and Liu, Burigede and Anandkumar, Anima},
  journal = {Journal of Machine Learning Research},
  volume  = {24},
  number  = {388},
  pages   = {1--26},
  year    = {2023}
}

@article{guibas2021adaptive,
  title   = {Adaptive Fourier Neural Operators: Efficient Token Mixers for Transformers},
  author  = {Guibas, John and Mardani, Morteza and Li, Zongyi and Tao, Andrew and Anandkumar, Anima and Catanzaro, Bryan},
  journal = {arXiv preprint arXiv:2111.13587},
  year    = {2021}
}

@article{gao2025dynamic,
  title   = {Dynamic Schwartz-Fourier Neural Operator for Enhanced Expressive Power},
  author  = {Gao, Wenhan and Luo, Jian and Xu, Ruichen and Liu, Yi},
  journal = {Transactions on Machine Learning Research},
  year    = {2025}
}

@inproceedings{takamoto2022pdebench,
  title     = {{PDEBench}: An Extensive Benchmark for Scientific Machine Learning},
  author    = {Takamoto, Makoto and Praditia, Timothy and Leiteritz, Raphael and MacKinlay, Daniel and Alesiani, Francesco and Pfl{\"u}ger, Dirk and Niepert, Mathias},
  booktitle = {Advances in Neural Information Processing Systems},
  volume    = {35},
  pages     = {1596--1611},
  year      = {2022}
}

@article{petri2015impacts,
  title={Impacts of global warming on residential heating and cooling degree-days in the United States},
  author={Petri, Yana and Caldeira, Ken},
  journal={Scientific reports},
  volume={5},
  number={1},
  pages={12427},
  year={2015},
  publisher={Nature Publishing Group UK London}
}

@inproceedings{ronneberger2015u,
  title={U-net: Convolutional networks for biomedical image segmentation},
  author={Ronneberger, Olaf and Fischer, Philipp and Brox, Thomas},
  booktitle={International Conference on Medical image computing and computer-assisted intervention},
  pages={234--241},
  year={2015},
  organization={Springer}
}

@inproceedings{he2016deep,
  title={Deep residual learning for image recognition},
  author={He, Kaiming and Zhang, Xiangyu and Ren, Shaoqing and Sun, Jian},
  booktitle={Proceedings of the IEEE conference on computer vision and pattern recognition},
  pages={770--778},
  year={2016}
}

@article{li2020multipole,
  title={Multipole graph neural operator for parametric partial differential equations},
  author={Li, Zongyi and Kovachki, Nikola and Azizzadenesheli, Kamyar and Liu, Burigede and Stuart, Andrew and Bhattacharya, Kaushik and Anandkumar, Anima},
  journal={Advances in Neural Information Processing Systems},
  volume={33},
  pages={6755--6766},
  year={2020}
}

@book{EvansPDE,
	Address = {San Francisco},
	Author = {L. C. Evans},
	Publisher = {American Mathematical Society},
	Title = {Partial Differential Equations},
	Year = {1998}}

@book{tos,
	Author = {U. Trottenberg and C. Oosterlee and A. Schuller},
	Publisher = {Academic Press},
	Title = {Multigrid},
	Year = {2001}}

@article{lr,
	Author = {A.I. Liapis and D.W.T. Rippin},
	Journal = {Chem. Eng. Science},
	Pages = {619--627},
	Title = {A general model for the simulation of multi-component adsorption from a finite bath},
	Volume = {32},
	Year = {1977}}

@article{mn,
	Author = {P.C. Meek and J. Norbury},
	Journal = {IMA J. Num. Anal.},
	Pages = {335-356},
	Title = {Two-stage, two-level finite difference schemes for nonlinear parabolic equations},
	Volume = {2},
	Year = {1982}}

@article{ll,
	Author = {Ljung, Stefan and Ljung, Lennart},
	Journal = {Automatica},
	Number = 2,
	Pages = {157--167},
	Title = {Error Propagation Properties of Recursive Least-squares Adaptation Algorithms},
	Volume = 21,
	Year = 1985}

@inproceedings{mccabe2023multiple,
  title={Multiple Physics Pretraining for Physical Surrogate Models},
  author={McCabe, Michael and R{\'e}galdo-Saint Blancard, Bruno and Parker, Liam Holden and Ohana, Ruben and Cranmer, Miles and Bietti, Alberto and Eickenberg, Michael and Golkar, Siavash and Krawezik, Geraud and Lanusse, Francois and others},
  booktitle={NeurIPS 2023 AI for Science Workshop},
  year={2023}
}

@article{li2020neural,
  title={Neural operator: Graph kernel network for partial differential equations},
  author={Li, Zongyi and Kovachki, Nikola and Azizzadenesheli, Kamyar and Liu, Burigede and Bhattacharya, Kaushik and Stuart, Andrew and Anandkumar, Anima},
  journal={arXiv preprint arXiv:2003.03485},
  year={2020}
}

@article{li2021fno,
  title={Fourier Neural Operator for Parametric Partial Differential Equations},
  author={Li, Zongyi and Kovachki, Nikola and Azizzadenesheli, Kamyar and Liu, Burigede and Bhattacharya, Kaushik and Stuart, Andrew and Anandkumar, Anima},
  journal={ICLR},
  year={2021}
}

@misc{kovachki2023data,
  title={Learning Dissipative Dynamics in Chaotic Systems Dataset},
  author={Kovachki, Nikola and others},
  year={2023},
  howpublished={Zenodo},
  note={\url{https://zenodo.org/records/7495555}}
}

@article{rasp2024weatherbench,
  title={Weatherbench 2: A benchmark for the next generation of data-driven global weather models},
  author={Rasp, Stephan and Hoyer, Stephan and Merose, Alexander and Langmore, Ian and Battaglia, Peter and Russell, Tyler and Sanchez-Gonzalez, Alvaro and Yang, Vivian and Carver, Rob and Agrawal, Shreya and others},
  journal={Journal of Advances in Modeling Earth Systems},
  volume={16},
  number={6},
  pages={e2023MS004019},
  year={2024},
  publisher={Wiley Online Library}
}

@article{koehler2024apebench,
  title={Apebench: A benchmark for autoregressive neural emulators of pdes},
  author={Koehler, Felix and Niedermayr, Simon and Westermann, R{\"u}diger and Thuerey, Nils},
  journal={Advances in Neural Information Processing Systems},
  volume={37},
  pages={120252--120310},
  year={2024}
}

@article{lippe2023pde,
  title={Pde-refiner: Achieving accurate long rollouts with neural pde solvers},
  author={Lippe, Phillip and Veeling, Bas and Perdikaris, Paris and Turner, Richard and Brandstetter, Johannes},
  journal={Advances in Neural Information Processing Systems},
  volume={36},
  pages={67398--67433},
  year={2023}
}

@article{zhou2024masked,
  title={Masked autoencoders are PDE learners},
  author={Zhou, Anthony and Farimani, Amir Barati},
  journal={arXiv preprint arXiv:2403.17728},
  year={2024}
}

@article{pathak2022fourcastnet,
  title={Fourcastnet: A global data-driven high-resolution weather model using adaptive fourier neural operators},
  author={Pathak, Jaideep and Subramanian, Shashank and Harrington, Peter and Raja, Sanjeev and Chattopadhyay, Ashesh and Mardani, Morteza and Kurth, Thorsten and Hall, David and Li, Zongyi and Azizzadenesheli, Kamyar and others},
  journal={arXiv preprint arXiv:2202.11214},
  year={2022}
}

@article{cao2021choose,
  title={Choose a transformer: Fourier or galerkin},
  author={Cao, Shuhao},
  journal={Advances in neural information processing systems},
  volume={34},
  pages={24924--24940},
  year={2021}
}

@article{rahman2022u,
  title={U-no: U-shaped neural operators},
  author={Rahman, Md Ashiqur and Ross, Zachary E and Azizzadenesheli, Kamyar},
  journal={arXiv preprint arXiv:2204.11127},
  year={2022}
}

@article{wen2022u,
  title={U-FNO—An enhanced Fourier neural operator-based deep-learning model for multiphase flow},
  author={Wen, Gege and Li, Zongyi and Azizzadenesheli, Kamyar and Anandkumar, Anima and Benson, Sally M},
  journal={Advances in Water Resources},
  volume={163},
  pages={104180},
  year={2022},
  publisher={Elsevier}
}

@article{kossaifi2025librarylearningneuraloperators,
   author    = {Jean Kossaifi and
                  Nikola Kovachki and
                  Zongyi Li and
                  David Pitt and
                  Miguel Liu-Schiaffini and
                  Robert Joseph George and
                  Boris Bonev and
                  Kamyar Azizzadenesheli and
                  Julius Berner and
                  Valentin Duruisseaux and
                  Anima Anandkumar},
   title     = {A Library for Learning Neural Operators},
   journal   = {arXiv preprint arXiv:2412.10354},
   year      = {2025},
}

@article{duruisseaux2025guide,
   author    = {Valentin Duruisseaux and
                  Jean Kossaifi and
                  Anima Anandkumar},
   title     = {Fourier Neural Operators Explained: A Practical Perspective},
   journal   = {arXiv preprint arXiv:2512.01421},
   year      = {2025},
}

@article{raonic2023convolutional,
  title={Convolutional neural operators for robust and accurate learning of pdes},
  author={Raonic, Bogdan and Molinaro, Roberto and De Ryck, Tim and Rohner, Tobias and Bartolucci, Francesca and Alaifari, Rima and Mishra, Siddhartha and De B{\'e}zenac, Emmanuel},
  journal={Advances in Neural Information Processing Systems},
  volume={36},
  pages={77187--77200},
  year={2023}
}

@article{smets2023pde,
  title={PDE-based group equivariant convolutional neural networks},
  author={Smets, Bart MN and Portegies, Jim and Bekkers, Erik J and Duits, Remco},
  journal={Journal of Mathematical Imaging and Vision},
  volume={65},
  number={1},
  pages={209--239},
  year={2023},
  publisher={Springer}
}

@inproceedings{long2018pde,
  title={Pde-net: Learning pdes from data},
  author={Long, Zichao and Lu, Yiping and Ma, Xianzhong and Dong, Bin},
  booktitle={International conference on machine learning},
  pages={3208--3216},
  year={2018},
  organization={PMLR}
}

@book{folland1995introduction,
  title={Introduction to partial differential equations},
  author={Folland, Gerald B},
  volume={102},
  year={1995},
  publisher={Princeton university press}
}

@article{mcgreivy2024weak,
  title={Weak baselines and reporting biases lead to overoptimism in machine learning for fluid-related partial differential equations},
  author={McGreivy, Nick and Hakim, Ammar},
  journal={Nature machine intelligence},
  volume={6},
  number={10},
  pages={1256--1269},
  year={2024},
  publisher={Nature Publishing Group UK London}
}

@article{mccabe2306towards,
  title={Towards stability of autoregressive neural operators, 2023},
  author={McCabe, Michael and Harrington, Peter and Subramanian, Shashank and Brown, Jed},
  journal={URL https://arxiv. org/abs/2306.10619}
}

@article{liu2023domain,
  title={Domain agnostic fourier neural operators},
  author={Liu, Ning and Jafarzadeh, Siavash and Yu, Yue},
  journal={Advances in neural information processing systems},
  volume={36},
  pages={47438--47450},
  year={2023}
}

@article{bonev2306spherical,
  title={Spherical fourier neural operators: Learning stable dynamics on the sphere, 2023},
  author={Bonev, Boris and Kurth, Thorsten and Hundt, Christian and Pathak, Jaideep and Baust, Maximilian and Kashinath, Karthik and Anandkumar, Anima},
  journal={URL https://arxiv. org/abs/2306.03838}
}
}


\newpage
\appendix

\section{Background on the solution of Parabolic Equations}
\label{app:background}

The convolution we use is defined by the solution of the parabolic PDE
\begin{eqnarray}
    \label{conv}
    v_t = {\cal L} v, \quad \quad v(0,\bfx) = \bfR u,
\end{eqnarray}
equipped with periodic boundary conditions
and on the bounded domain $[0,1]^2$.
We now review the basic principles of parabolic PDEs~\citep{EvansPDE} to shed light on this choice.

The operator ${\cal L}$ that
has the form
\begin{eqnarray}
    \label{soPDE}
{\cal L}v =     \alpha_{xx} {\frac {\partial^2 v}{\partial x^2}} + 2\alpha_{xy}{\frac {\partial^2 v}{\partial x \partial y}} + \alpha_{yy} {\frac {\partial^2 v}{\partial y^2}}.
\end{eqnarray}
We say that the operator is elliptic if
\begin{eqnarray}
\label{condel}
  \alpha_{xx}\alpha_{yy} - \alpha_{xy}^2 >0.
\end{eqnarray}
The ellipticity of the operator guarantees that it has an inverse (up to a constant) and that the parabolic  \Eqref{soPDE} is well-posed.

Using the separation of variables, the solution of \Eqref{conv} can be written in the following form
$$v(t,x,y) = \sum_{mn} v_{mn}(t) \phi_{mn}(x,y),$$
where $\phi_{mn}(x,y)$ are spatial eigenfunctions given by 
$$\phi_{mn}(x,y) = \exp(2\pi i (mx + ny)).$$
The associated temporal coefficient is given by 
$$v_{mn}(t) = \exp(-\lambda_{mn}t) v_{mn}(0),$$
where
$$\lambda_{mn} = 4\pi^2\left( \alpha_{xx}m^2 + \alpha_{yy}n^2 + 2\alpha_{xy}nm  \right)$$
controls the decay of the associated eigensolution. Note that \Eqref{condel} ensures that $\lambda_{mn} > 0$, guaranteeing the stability of the solution. Additionally, since larger $m,n$ yield eigenfunctions with higher spatial frequency as well as larger $\lambda_{mn}$, high frequency components in the solution decay faster.

\section{Equivalence between regular convolutions and discretized MFCs on a fixed mesh}
\label{app:mfc_cnn_equivalence}

\begin{lemma}
\label{equivalence_lemma}
Let the grid have \(n \times n\) points with spacing \(h = 1/n\), and let \(\bfK\) be any \(k \times k\)-stencil convolution with weights \(w_{ij}\) on the offset set \(S\). For every \(\epsilon > 0\) there exists a discretized MFC \(\bfK_{\bftheta}\) such that
\[
    \|\bfK_{\bftheta} - \bfK\|_2 \le \epsilon
\]
in the operator norm (a \(\bfu\)-independent additive bound cannot hold for all inputs).
\end{lemma}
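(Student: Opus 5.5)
On the periodic \(n\times n\) grid, both \(\bfK\) and any discretized MFC are circulant. Both are therefore diagonalized by the 2D DFT \(\bfF\), and the operator 2-norm of their difference equals the maximum modulus of the difference of their symbols over the \(n^2\) discrete frequencies:
\[
\|\bfK_{\bftheta}-\bfK\|_2=\max_{\bfk^h}\bigl|G_{\bftheta}(\bfk^h)-\widehat w(\bfk^h)\bigr|,
\qquad
\widehat w(\bfk)=\sum_{(p,q)\in S} w_{pq}\,e^{-2\pi i (k_x p+k_y q)h}.
\]
The plan is to make this maximum at most \(\epsilon\), with \(\epsilon\) arbitrary. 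It suffices to exhibit an MFC whose discrete symbol approximates \(\widehat w\) on the finite frequency grid.

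\textbf{Decomposition into shift terms.} The target symbol \(\widehat w\) is a linear combination of pure phase factors \(e^{-2\pi i\,\bfk\cdot \boldsymbol{\delta}_{pq}}\) with shifts \(\boldsymbol{\delta}_{pq}=(ph,qh)\). Each such factor is exactly the learned-shift part of \eqref{eq:mfc_symbol_full}. I will use one latent channel per offset \((p,q)\in S\), so \(k^2\) channels. In channel \((p,q)\) I set \(\boldsymbol{\delta}=\boldsymbol{\delta}_{pq}\), \(\gamma_0=w_{pq}\) and \(\gamma_x=\gamma_y=0\). The channel mixing \(\bfR\) copies the input into each latent channel, and the output pointwise mixing \(\bfS\) sums the latent channels. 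By \eqref{eq:mfc_channel_symbol} and linearity, the resulting discrete symbol is
\[
G_{\bftheta}(\bfk)=\sum_{(p,q)\in S} w_{pq}\,e^{-\tau\bfk^\top\bfA\bfk}\,e^{-2\pi i\,\bfk\cdot\boldsymbol{\delta}_{pq}}.
\]
If the paper's normalization of \(\bfk\) (integer versus physical frequency) differs, I absorb the difference into the choice of \(\boldsymbol{\delta}\).

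\textbf{The error bound.} With this choice,
\[
|G_{\bftheta}-\widehat w|\le \Bigl(\sum_{(p,q)\in S}|w_{pq}|\Bigr)\max_{\bfk^h}\bigl(1-e^{-\tau\bfk^\top\bfA\bfk}\bigr).
\]
I fix \(\bfA=I\), which satisfies the positivity constraints. On the \(n\times n\) grid, \(|\bfk^h|^2\le 2(n/2)^2\) up to the \(2\pi\) normalization constant, so the maximum is at most \(1-e^{-C\tau n^2}\le C\tau n^2\). Choosing
\[
\tau\le \frac{\epsilon}{C n^2\sum_{(p,q)\in S}|w_{pq}|}
\]
gives the bound \(\epsilon\).

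\textbf{Main obstacle.} The difficulty is the constraint \(\tau>0\) together with strict positive definiteness of \(\bfA\). If the parametrization allows \(\tau=0\), as the stated \(\tau\ge0\) suggests, the representation is exact. Otherwise the attenuation factor can never equal one, which is why the lemma asks only for approximation. The step that needs care is therefore the uniform control of \(1-e^{-\tau\bfk^\top\bfA\bfk}\). This works only because the frequency set is finite and bounded by the Nyquist frequency, which is where the fixed-mesh assumption is used.

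A secondary point is justifying the identity between the circulant 2-norm and the symbol maximum. This follows from unitarity of the normalized DFT. The same argument shows why no \(\bfu\)-independent additive bound can hold: the error scales with \(\|\bfu\|\).
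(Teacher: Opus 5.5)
Your proposal is correct and follows the paper's proof: one channel per stencil offset realizing the integer shift exactly through $\boldsymbol{\delta}=(ph,qh)$, then a uniform bound on $1-e^{-\tau\bfk^\top\bfA\bfk}$ over the finite (Nyquist-bounded) frequency grid, unitarity of the DFT, and $\tau$ chosen proportional to $\epsilon/(n^2\|w\|_1)$. The only differences are cosmetic. You place $w_{pq}$ in $\gamma_0$ rather than in the $1\times 1$ mixing layer, you fix $\bfA=I$ rather than a general SPD $\bfA$, and you write the operator norm as the exact maximum of the symbol difference rather than as an upper bound.
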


\begin{proof}
Take MFC channels indexed by \((i,j)\in S\), each with \(\gamma_0=1\), \(\gamma_x=\gamma_y=0\), shift \(\delta_{ij}=(ih,jh)\), and any fixed SPD \(\bfA\); mix them with a \(1{\times}1\) layer carrying \(w_{ij}\).

\emph{Step 1 (exact shifts at \(\tau=0\)).} On the discrete frequency grid, \(e^{-2\pi i(k_x i+k_y j)/n}\) is the DFT symbol of the integer shift \((i,j)\). The mixed output equals \(\bfK\bfu\) exactly.

\emph{Step 2 (rate for \(\tau>0\)).} Each channel differs from the pure shift only by \(e^{-\tau\bfk^\top\bfA\bfk}\), and \(\lvert e^{-\tau\bfk^\top\bfA\bfk}-1\rvert\le\tau\lambda_{\max}(\bfA)\,\bar K^2\) uniformly on the grid. The DFT is unitary, so
\[
    \|\bfK_{\bftheta}-\bfK\|_2
    \le
    \tau\,\lambda_{\max}(\bfA)\,\bar K^2\,\|w\|_1.
\]
Choosing \(\tau\le\epsilon/(\lambda_{\max}(\bfA)\,\bar K^2\,\|w\|_1)\) gives an explicit first-order rate. Taking \(k=n\) covers every periodic discrete convolution on the grid.
\end{proof}

\section{Stability of the MFC symbol}

\begin{lemma}[Stability of the MFC symbol]
\label{app:mfc-stability}
Let \(\tau>0\) and \(\bfA\succ 0\).
Then the MFC symbol \(G_{\bfxi}\) and its parameter
derivatives are bounded by a polynomial times a Gaussian,
and hence are integrable. In particular, the Fourier-space
functional \(\ell^*(\bfxi)\) defined below and its gradient
\(\nabla_{\bfxi}\ell^*\) are finite for integrable inputs.
\end{lemma}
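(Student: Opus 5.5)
We will bound $G_{\bfxi}$ and each of its parameter derivatives pointwise in $\bfk$ by an explicit polynomial times a Gaussian, and then read off integrability and finiteness of $\ell^*$ and $\nabla_{\bfxi}\ell^*$. The starting point is the factorization in \eqref{eq:mfc_symbol_full}, $G_{\bfxi}(\bfk)=E(\bfk)\,P(\bfk)\,S(\bfk)$, where $E=\exp(-\tau\bfk^\top\bfA\bfk)$, $P=\gamma_0+i\gamma_xk_x+i\gamma_yk_y$, and $S=\exp(-2\pi i\bfk\cdot\boldsymbol\delta)$. Since $\bfA\succ0$, we have $\bfk^\top\bfA\bfk\ge\lambda_{\min}(\bfA)|\bfk|^2$, so $0<E\le e^{-\tau\lambda_{\min}|\bfk|^2}$. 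Also $|S|=1$ because $\boldsymbol\delta$ is real, and $|P|\le|\gamma_0|+|\gamma_x||k_x|+|\gamma_y||k_y|\le C_\gamma(1+|\bfk|)$. Together these give $|G_{\bfxi}(\bfk)|\le C_\gamma(1+|\bfk|)e^{-c|\bfk|^2}$ with $c=\tau\lambda_{\min}(\bfA)>0$.

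For the derivatives we go parameter by parameter, using the product rule on the three factors.

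\begin{itemize}
\item $\partial_{\gamma_0}G=ES$ and $\partial_{\gamma_{x,y}}G=ik_{x,y}ES$, each bounded by $(1+|\bfk|)e^{-c|\bfk|^2}$.
\item $\partial_{\delta_{x,y}}G=-2\pi ik_{x,y}G$, which gains one power of $|\bfk|$.
\item $\partial_\tau G=-(\bfk^\top\bfA\bfk)G$ and $\partial_{\alpha_{xx}}G=-\tau k_x^2G$ (similarly for $\alpha_{yy}$ and $2\alpha_{xy}$), each gaining a factor of order $|\bfk|^2$.
\end{itemize}

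So every first derivative is bounded by $C(1+|\bfk|)^3e^{-c|\bfk|^2}$, and higher derivatives gain further polynomial factors only. The channel-mixing parameters $\bfR$ and $\bfS$ enter linearly through \eqref{eq:mfc_channel_symbol} and the pointwise projection. They therefore contribute only constant matrix factors and do not affect the Gaussian envelope.

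Integrability over $\mathbb R^2$, and summability over the frequency lattice, then follows from $\int(1+|\bfk|)^pe^{-c|\bfk|^2}d\bfk<\infty$ for every $p$. The constants are uniform on compact parameter sets on which $\tau\ge\tau_0>0$ and $\lambda_{\min}(\bfA)\ge\lambda_0>0$.

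For $\ell^*$ and its gradient, we write the functional in Fourier space as an integral or sum of $G_{\bfxi}(\bfk)\widehat u(\bfk)$ paired with some bounded quantity. If it is quadratic, the same argument applies to $|G|^2$, which has envelope $e^{-2c|\bfk|^2}$. For integrable $u$ we have $\|\widehat u\|_\infty\le\|u\|_1$, so the integrand is dominated by (polynomial)$\times$Gaussian$\times\|u\|_1$ and is integrable. The local uniform domination above then lets us differentiate under the integral sign by dominated convergence, which gives finiteness of $\nabla_{\bfxi}\ell^*$.

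The main obstacle is that $\ell^*$ is only ``defined below'', so its exact form (linear versus quadratic in $G$, and whether it involves a target $\widehat y$) is not visible yet. We plan to handle a generic form $\int \Phi(G_{\bfxi}(\bfk),\widehat u(\bfk),\widehat y(\bfk))\,d\bfk$, with $\Phi$ smooth and of at most polynomial growth, and to rely on the uniform Gaussian domination throughout. A secondary point is that the bounds degenerate as $\tau\to0$. This is why the statement assumes $\tau>0$, and the constants must be taken locally in $\bfxi$.
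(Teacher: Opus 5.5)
Your proposal is correct and follows the paper's proof essentially step for step: the same three-factor bound $|G_{\bfxi}(\bfk)|\le C(1+\|\bfk\|_2)\,e^{-\tau\lambda_{\min}(\bfA)\|\bfk\|_2^2}$, the same observation that parameter derivatives only add polynomial factors, and the same use of boundedness of Fourier transforms of $L^1$ inputs. Your locally uniform constants and dominated-convergence step actually justify the interchange of derivative and integral, which the paper leaves implicit.

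The functional defined in the paper is the linear pairing $\ell^*(\bfxi)=\int_{\mathbb R^2}G_{\bfxi}(\bfk)\,\widehat u(\bfk)\,\overline{\widehat y(\bfk)}\,d\bfk$, so your ``bounded pairing'' case is exactly what is needed. The generic-$\Phi$ extension is unnecessary, and as stated it would need an extra hypothesis such as $\Phi$ vanishing at $G=0$, since a term like $\int|\widehat y|^2$ need not be finite for $y\in L^1$.
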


\begin{proof}
Recall that the MFC Fourier symbol is
\begin{equation}
    \label{eq:mfc_symbol_proof}
    G_{\bfxi}(\bfk)
    =
    \left(
    \gamma_0+i\gamma_x k_x+i\gamma_y k_y
    \right)
    \exp\left(
    -\tau \bfk^\top \bfA \bfk
    \right)
    \exp\left(-2\pi i\,\bfk\cdot\boldsymbol{\delta}\right),
\end{equation}
where
\begin{equation}
    \bfA
    =
    \begin{pmatrix}
        \alpha_{xx} & \alpha_{xy}\\
        \alpha_{xy} & \alpha_{yy}
    \end{pmatrix}.
\end{equation}

Since \(\alpha_{xx}>0\), \(\alpha_{yy}>0\), and
\(\alpha_{xx}\alpha_{yy}-\alpha_{xy}^2>0\),
the matrix \(\bfA\) is symmetric positive definite.
Hence,
\begin{equation}
    \bfk^\top \bfA \bfk
    \ge
    \lambda_{\min}(\bfA)\|\bfk\|_2^2,
\end{equation}
where \(\lambda_{\min}(\bfA)>0\) denotes the smallest
eigenvalue of \(\bfA\).

Also, by the triangle inequality and Cauchy--Schwarz,
\begin{equation}
\begin{aligned}
    \left|
    \gamma_0+i\gamma_x k_x+i\gamma_y k_y
    \right|
    &\le
    |\gamma_0|+|\gamma_x||k_x|+|\gamma_y||k_y|\\
    &\le
    |\gamma_0|+\|\bfgamma\|_2\|\bfk\|_2\\
    &\le
    C(1+\|\bfk\|_2),
\end{aligned}
\end{equation}
where \(\bfgamma=(\gamma_x,\gamma_y)^\top\)
and \(C>0\) is a constant.

Furthermore,
\begin{equation}
    \left|
    \exp\left(-2\pi i\,\bfk\cdot\boldsymbol{\delta}\right)
    \right|=1.
\end{equation}

Therefore,
\begin{equation}
    |G_{\bfxi}(\bfk)|
    \le
    C(1+\|\bfk\|_2)
    \exp\left(
    -\tau\lambda_{\min}(\bfA)\|\bfk\|_2^2
    \right).
\end{equation}

Since Gaussian decay dominates polynomial growth,
the right-hand side is integrable.

Consider the Fourier-space functional
\begin{equation}
    \ell^*(\bfxi)
    =
    \int_{\mathbb R^2}
    G_{\bfxi}(\bfk)\,
    \widehat u(\bfk)\,
    \overline{\widehat y(\bfk)}
    \,d\bfk.
\end{equation}

For \(u,y\in L^1(\mathbb R^2)\), their Fourier transforms
are bounded. Since \(G_{\bfxi}\) is integrable,
it follows that \(\ell^*(\bfxi)\) is finite.

The same argument applies to the gradient.
Differentiating \(G_{\bfxi}\) with respect to any
parameter in \(\bfxi\) introduces only polynomial
factors in \(\bfk\). In particular, differentiating
the shift parameter contributes a factor proportional
to \(k_x\) or \(k_y\), while differentiating the
attenuation term contributes quadratic factors.
Thus, for each parameter \(\xi_j\),
\begin{equation}
    |\partial_{\xi_j}G_{\bfxi}(\bfk)|
    \le
    C_j(1+\|\bfk\|_2^{m_j})
    \exp\left(
    -\tau\lambda_{\min}(\bfA)\|\bfk\|_2^2
    \right),
\end{equation}
for some constant \(C_j>0\) and integer \(m_j\ge 0\).
This is again integrable, so each component of
\(\nabla_{\bfxi}\ell^*\) is finite.

In the multi-channel case, the operator is applied
componentwise in Fourier space via
\begin{equation}
    \widehat v_j(\bfk)
    =
    \sum_{i=1}^{c_{\rm in}}
    G_{ij}(\bfk)\,\widehat u_i(\bfk),
    \qquad
    j=1,\ldots,c_{\rm out}.
\end{equation}

Since each entry \(G_{ij}(\bfk)\) satisfies the
same bound, the same conclusion holds channelwise.
\end{proof}

In closed form, for \(\tau>0\) and \(\bfA\succ0\),
we also obtain the uniform-in-frequency bound
\[
    \sup_{\bfk}|G_{\bfxi}(\bfk)|
    \le
    \left(
    \gamma_0^2+
    \frac{\|\bfgamma\|_2^2}
    {2e\tau\lambda_{\min}(\bfA)}
    \right)^{1/2},
\]
since
\[
    \sup_{r\ge0}r^2e^{-2ar^2}
    =\frac{1}{2ea},\qquad a>0.
\]
This highlights the role of \(\tau>0\):
without Gaussian attenuation, the first-order
readout can be unbounded when
\(\bfgamma\neq0\).

\paragraph{Conditional rollout stability.}
\label{app:system-stability}
With 1-Lipschitz activations, residual blocks compose to a certified one-step Lipschitz constant \(L_\Phi\) from the layer bounds above and the spectral norms of the mixing matrices.

\begin{theorem}[Conditional rollout stability]
\label{thm:rollout-stability}
If \(\Phi\) is \(L_\Phi\)-Lipschitz on a set containing the exact trajectory and the one-step error along that trajectory is at most \(\varepsilon\), then
\[
    e_n \le L_\Phi^n e_0 + \varepsilon\sum_{j<n} L_\Phi^j,
\]
which is uniformly bounded if \(L_\Phi<1\) and at most linear if \(L_\Phi=1\).
\end{theorem}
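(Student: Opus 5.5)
We set up notation, derive a one-step error recursion, and unroll it. Let \(u_n\) be the exact trajectory and \(\hat u_n\) the rollout, with \(\hat u_{n+1}=\Phi(\hat u_n)\). Let \(e_n=\|\hat u_n-u_n\|\) in the norm in which \(\Phi\) is \(L_\Phi\)-Lipschitz. We read the hypothesis ``one-step error at most \(\varepsilon\)'' as \(\|\Phi(u_j)-u_{j+1}\|\le\varepsilon\) for all \(j\).

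\textbf{Step 1 (one-step recursion).} Insert \(\Phi(u_n)\) and apply the triangle inequality:
\[
e_{n+1}=\|\Phi(\hat u_n)-u_{n+1}\|\le \|\Phi(\hat u_n)-\Phi(u_n)\|+\|\Phi(u_n)-u_{n+1}\|\le L_\Phi e_n+\varepsilon .
\]

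\textbf{Step 2 (unroll).} Induct on \(n\). The case \(n=0\) is trivial, since the sum is empty. If \(e_n\le L_\Phi^n e_0+\varepsilon\sum_{j<n}L_\Phi^j\), then Step 1 gives
\[
e_{n+1}\le L_\Phi^{n+1}e_0+\varepsilon\sum_{j<n}L_\Phi^{j+1}+\varepsilon=L_\Phi^{n+1}e_0+\varepsilon\sum_{j<n+1}L_\Phi^j .
\]

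\textbf{Step 3 (consequences).} If \(L_\Phi<1\), the geometric sum is at most \(1/(1-L_\Phi)\), so \(e_n\le e_0+\varepsilon/(1-L_\Phi)\) uniformly in \(n\). If \(L_\Phi=1\), the bound reads \(e_n\le e_0+n\varepsilon\), which is linear in \(n\).

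\textbf{Main obstacle.} Step 1 needs \(\hat u_n\) to lie in the set on which \(\Phi\) is \(L_\Phi\)-Lipschitz. The statement gives that set only as one ``containing the exact trajectory''. I would resolve this in one of two ways:
\begin{itemize}
\item take the Lipschitz condition on a tube \(\{w:\operatorname{dist}(w,\{u_j\})\le R\}\) and check inductively, using the bound from Step 2, that \(e_n\le R\) (guaranteed if \(e_0+\varepsilon/(1-L_\Phi)\le R\) when \(L_\Phi<1\));
\item or simply assume the set is forward-invariant and contains the iterates.
\end{itemize}
I would state this explicitly as the interpretation of the hypothesis.
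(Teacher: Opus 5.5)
Your proof is correct and is the standard argument. The paper states this theorem without a proof, and your recursion $e_{n+1}\le L_\Phi e_n+\varepsilon$ (from the triangle inequality), unrolled by induction and bounded by the geometric sum, is exactly what it implicitly relies on.

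The obstacle you flag is genuine. Read literally, the hypothesis does not give the bound: take $\Phi(x)=10x$ with exact trajectory $u_n\equiv 0$, so $\varepsilon=0$ and $\Phi$ is vacuously $\tfrac12$-Lipschitz on $\{0\}$, yet $e_n=10^n e_0$. So your tube condition $e_0+\varepsilon/(1-L_\Phi)\le R$, or forward invariance, is the hypothesis the statement actually requires. The case $L_\Phi=1$ needs forward invariance anyway beyond the finite horizon $n\le (R-e_0)/\varepsilon$.
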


The released checkpoints do not satisfy \(L_\Phi\le 1\). For the Burgers MFC-UNet the certified product bound is \({\sim}10^{59}\) (LayerNorm worst-case \(1/\sqrt{\epsilon_{\rm LN}}\) and unconstrained \(1{\times}1\) norms); the empirical local Jacobian norm on test-A states is median \(4.7\); realized per-step error expansion is median \(1.005\times\). Observed stability is therefore on-distribution, not an architectural contraction. \(L_\Phi\le 1\) is enforceable by residual rescaling; that training variant is future work.

\section{Component ablations: full results}
\label{app:ablation-details}

\begin{table}[h]
\centering
\caption{Component ablations on 2D Burgers at all horizons (test sets A/B/C/D).}
\label{tab:ablations-full}
\footnotesize
\setlength{\tabcolsep}{3pt}
\begin{tabular}{l cccc cccc cccc}
\toprule
& \multicolumn{4}{c}{GMean@20} & \multicolumn{4}{c}{GMean@50} & \multicolumn{4}{c}{GMean@100} \\
\cmidrule(lr){2-5} \cmidrule(lr){6-9} \cmidrule(lr){10-13}
Variant & A & B & C & D & A & B & C & D & A & B & C & D \\
\midrule
MFC-full (released) & 0.084 & 0.218 & 0.268 & 0.409 & 0.666 & 0.872 & 1.135 & 1.231 & 1.701 & 1.567 & 2.342 & 2.128 \\
MFC-full (retrain)  & 0.128 & 0.279 & 0.332 & 0.486 & 0.621 & 0.806 & 1.133 & 1.195 & 0.972 & 0.998 & 1.359 & 1.287 \\
$-$ shift           & 0.214 & 0.304 & 0.359 & 0.461 & 0.834 & 0.852 & 1.075 & 1.086 & 1.512 & 1.266 & 1.833 & 1.623 \\
$-$ first-order     & 0.070 & 0.173 & 1.012 & 0.998 & 0.459 & 0.601 & 1.149 & 1.099 & 1.139 & 1.074 & 1.311 & 1.200 \\
$-$ anisotropy      & 0.115 & 0.253 & 1.017 & 0.970 & 0.764 & 0.934 & 1.618 & 1.433 & 1.933 & 1.705 & 2.228 & 1.847 \\
hard cutoff         & 0.195 & 0.270 & 0.522 & 0.535 & 0.771 & 0.785 & 1.312 & 1.196 & 1.411 & 1.189 & 2.120 & 1.788 \\
\bottomrule
\end{tabular}
\end{table}

The $-$\,first-order run was interrupted at 271/300 by a disk fault after its best-validation epoch (131); the selected checkpoint is unaffected.

\section{Learned-parameter inspection}
\label{app:learned-params}

Released Burgers MFC-UNet (14 layers, 2{,}288 channels): 96\% of channels have \(|\delta|\ge 0.1\) cells (median \(0.49\)); median \(|\gamma_{x,y}|/|\gamma_0|=1.11\); median axis ratio \(a/b=3.11\). Navier--Stokes MFC-UNet (2{,}464 channels): shift median \(0.46\) cells; first-order medians \(\approx 0.12\); median \(a/b=1.80\). Figure~\ref{fig:learned-params}.

\begin{figure}[h]
    \centering
    \begin{minipage}{0.49\textwidth}
        \includegraphics[width=\linewidth]{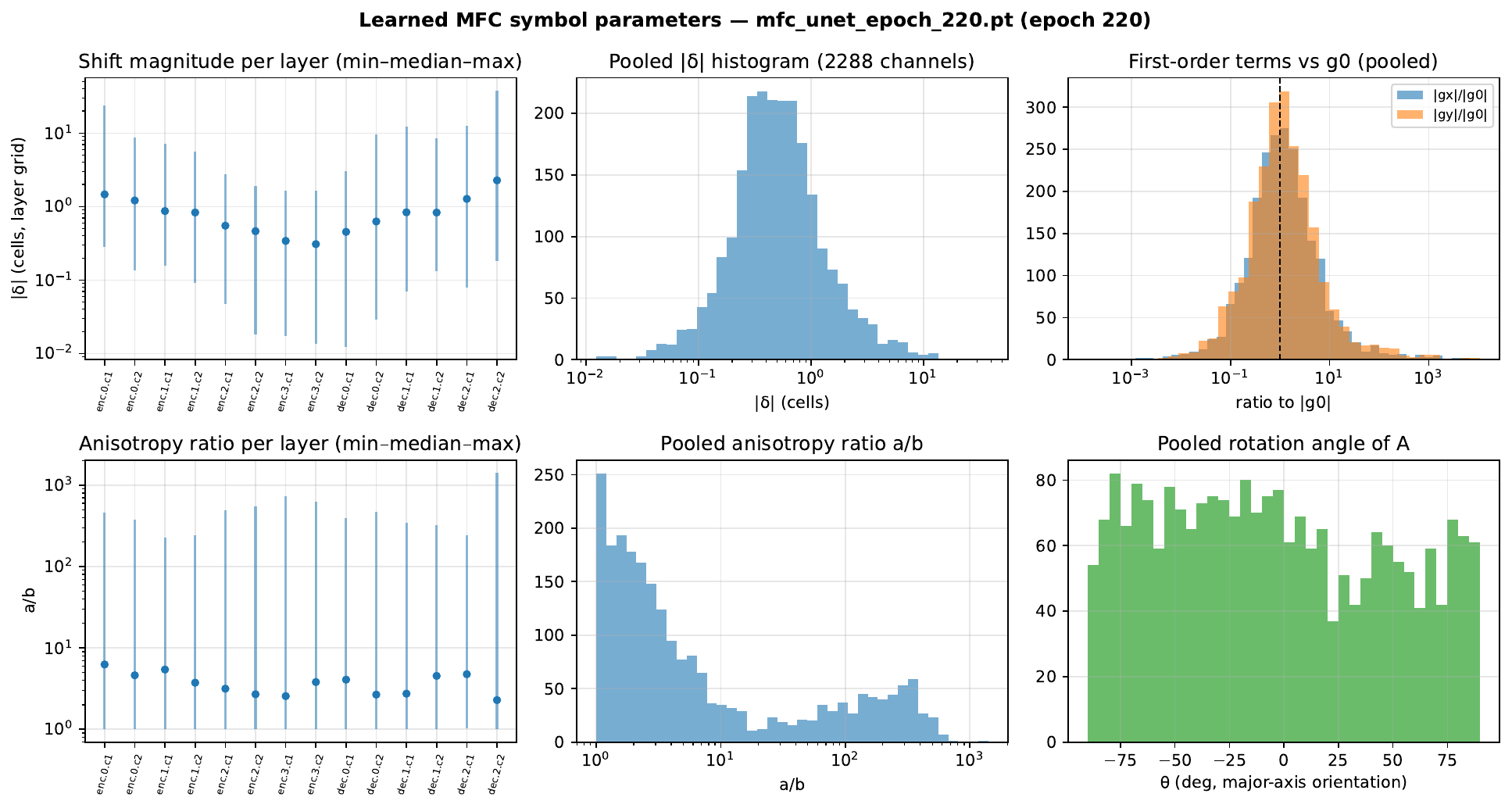}
    \end{minipage}
    \hfill
    \begin{minipage}{0.49\textwidth}
        \includegraphics[width=\linewidth]{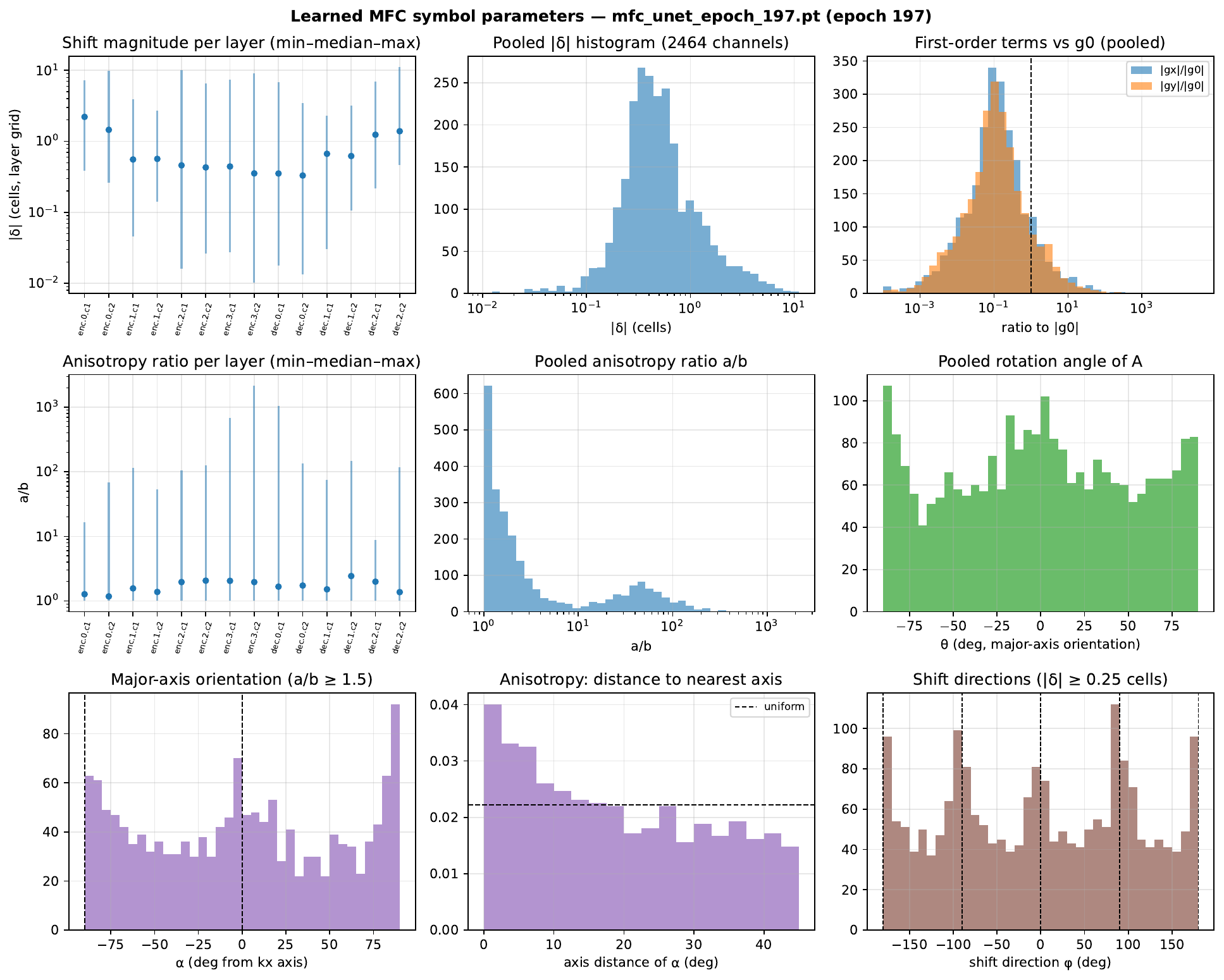}
    \end{minipage}
    \caption{Learned MFC parameters: Burgers (left) and Navier--Stokes (right).}
    \label{fig:learned-params}
\end{figure}

\section{Mechanism of the baseline divergence on Burgers}
\label{app:divergence-mechanism}

Figure~\ref{fig:spectral-diagnostic} shows \(E_{\rm model}(k,t)/E_{\rm GT}(k,t)\) on test A. FNO peaks just inside \(k=18\) by \(t=20\) and floods the band by \(t=50\) (\(2.6\times 10^{51}\) at \(t=100\)). F-FNO fails later (max \(2.0\times 10^{11}\)). MFC stays within \(\approx 40\) of the true spectrum.

\begin{figure}[h]
    \centering
    \includegraphics[width=\linewidth]{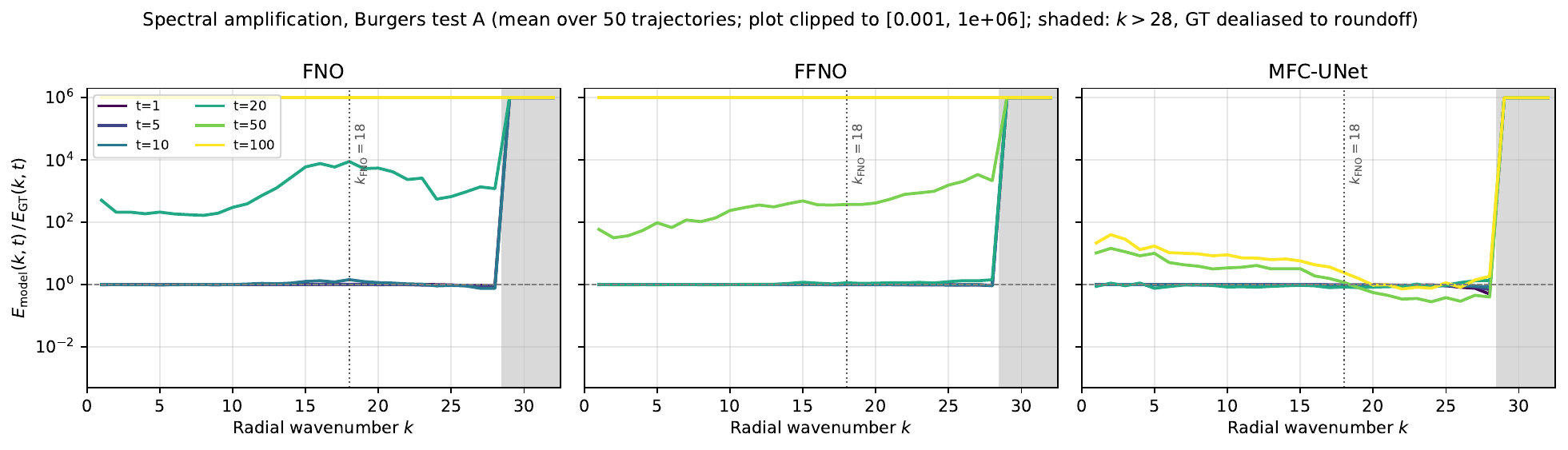}
    \caption{Spectral amplification on Burgers test A. Vertical line: FNO mode cutoff \(k=18\); shaded: unresolved \(k>28\).}
    \label{fig:spectral-diagnostic}
\end{figure}

FNO retrained at modes 32 (3.28M params; truncation beyond the dealiasing limit): validation rel.\ \(\ell_2=0.0053\), GMean@20 on A \(0.178\), yet 50/50 trajectories diverge by step 50 on every test set. Peak amplification \(3.1\times 10^{47}\) at \((k=20,t=100)\).

\section{Seed robustness}
\label{app:seeds}

Four independent MFC-full draws remain bounded on all 200 trajectories. Every draw retains a \(\ge 10\times\) GMean@100 gap to every baseline (worst case \(12.3\times\) vs.\ F-FNO on A). Long-horizon GMean varies by up to \(\approx 2\times\) across seeds.

\section{Runtime and memory}
\label{app:runtime}

One idle RTX 4090, batch 16, fp32, median of 100 iterations after warm-up.

\begin{table}[h]
\centering
\caption{Runtime (ms/iter) and peak memory (GiB).}
\label{tab:runtime}
\small
\begin{tabular}{l rrrrr rrr}
\toprule
& \multicolumn{5}{c}{ms / iteration} & \multicolumn{3}{c}{Peak memory} \\
\cmidrule(lr){2-6} \cmidrule(lr){7-9}
Model & \(64^2\) & \(128^2\) & \(256^2\) & \(64{\times}32\) & \(240{\times}121\) & \(64^2\) & \(128^2\) & \(256^2\) \\
\midrule
FNO      & 0.9  & 2.3   & 18.8  & 0.8  & 4.8   & 0.12 & 0.44 & 1.12 \\
F-FNO    & 2.8  & 19.5  & 83.0  & 1.5  & 38.2  & 1.23 & 4.81 & 2.67 \\
MFC-UNet & 4.5  & 12.9  & 75.9  & 4.2  & 29.7  & 0.42 & 1.14 & 4.51 \\
AFNO     & 28.7 & 133.3 & 547.9 & 10.7 & 207.8 & 0.53 & 2.09 & 8.29 \\
\bottomrule
\end{tabular}
\end{table}

\section{WeatherBench2 latitude-boundary analysis}
\label{app:latitude}

\begin{figure}[h]
    \centering
    \includegraphics[width=\linewidth]{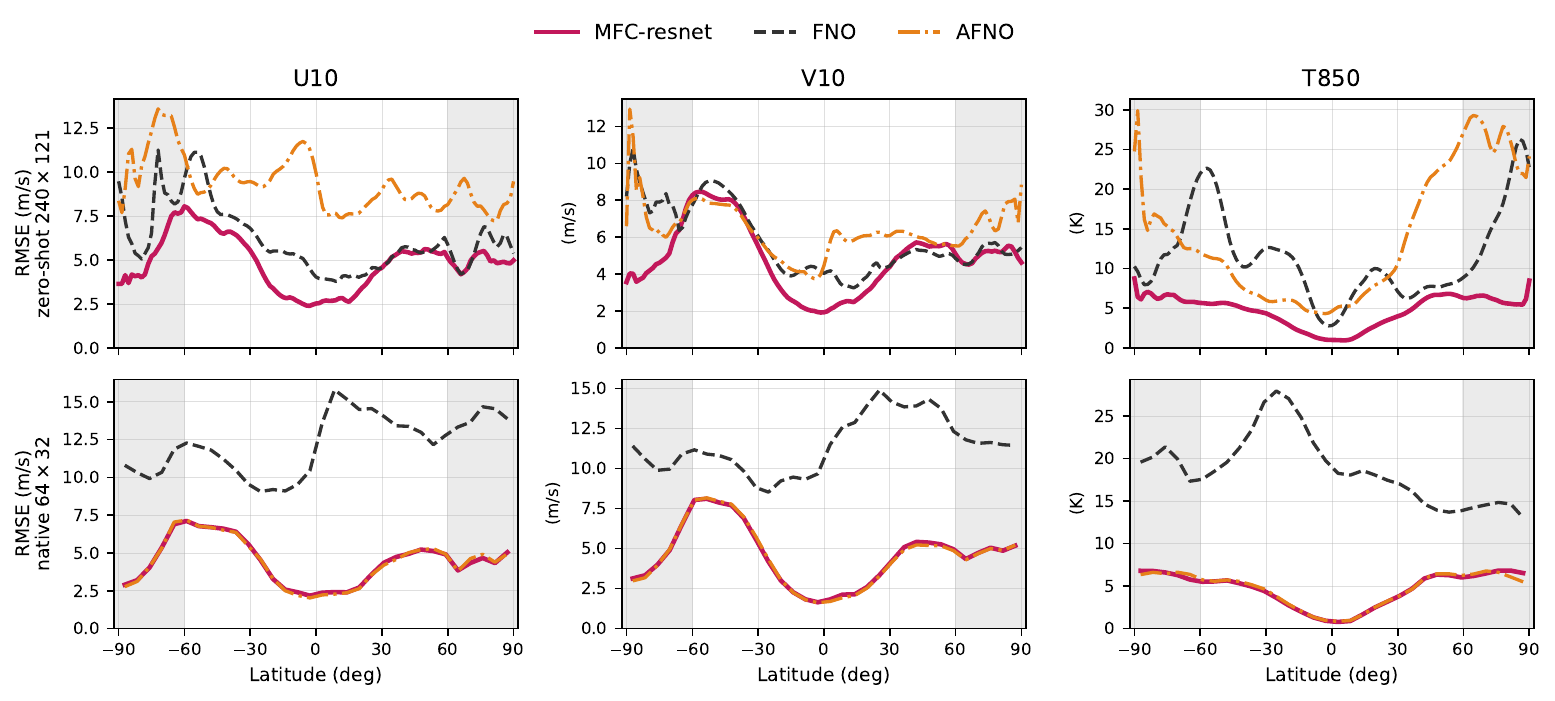}
    \caption{Per-latitude unweighted RMSE at step 50. Top: zero-shot \(240{\times}121\) (poles exist). Bottom: native \(64{\times}32\). Shaded: \(|{\rm lat}|\ge 60^\circ\).}
    \label{fig:lat-binned}
\end{figure}

At step 50 on \(240{\times}121\), pole amplification (worst pole-adjacent row / median \(|{\rm lat}|\le 60^\circ\)) is \(1.0\)--\(2.1\times\) (MFC), \(1.7\)--\(2.9\times\) (FNO), \(1.1\)--\(3.8\times\) (AFNO); \(T_{850}\) at \(-90^\circ\) is \(8.8\), \(10.3\), \(24.8\)\,K. Taper-ratio sweep at \(240{\times}121\) changes step-50 RMSE by at most \(1.6\%\); 0/39 ICs diverge.

\section{Longer rollouts}
\label{app:longer-rollouts}

Burgers to 200 steps (50 ICs): FNO 50/50 diverged by step 50, F-FNO 50/50 by step 100, MFC 0/50; MFC GMean \(1.70\to 3.41\) at step 195. WeatherBench2 to 750 steps (130 ICs): FNO 42/130 diverge, AFNO and MFC 0/130; at step 750 MFC is ahead on all three variables (\(U_{10}\) 5.06 vs.\ 5.36, \(V_{10}\) 4.91 vs.\ 5.27, \(T_{850}\) 8.95 vs.\ 9.06\,K).

\section{Complementary Quantitative and Qualitative Evaluation}

\subsection{Navier--Stokes}
\label{sec:appendix-ns}

\paragraph{Qualitative rollout comparison.}
Figure~\ref{fig:ns_appendix_rollout_panels} compares autoregressive rollouts of MFC-UNet, FNO, and UNet
on the 2D Navier--Stokes Kolmogorov-flow benchmark at horizons
$\Delta t \in \{3, 6, 9, 12\}$.
MFC-UNet faithfully preserves coherent vortical structures and their spatial locations
throughout the rollout, whereas FNO exhibits substantial degradation at longer horizons,
consistent with the rapid error growth shown in Figure~\ref{fig:ns_rollout_error}.
UNet remains numerically stable but shows increasing smoothing and structural drift.

\paragraph{Representative sample with error maps.}
To provide a fair and reproducible qualitative assessment, we select the test trajectory
whose ground-truth initial condition has \emph{median} high-frequency energy content.
Specifically, for each test trajectory we compute the fraction of energy in wavenumbers
$k > k_c$ (with $k_c = 12$) from the 2D power spectrum of the ground-truth initial
vorticity field:
\begin{equation}
  f_{\mathrm{HF}} = \frac{\sum_{k > k_c} E(k)}{\sum_{k} E(k)},
  \label{eq:hf-frac}
\end{equation}
where $E(k)$ is the angularly averaged energy spectrum.  We then select the trajectory
at the 50th percentile (P50) of the $f_{\mathrm{HF}}$ distribution.  This criterion
depends only on the input data and is independent of any model output, making it a
natural, model-free proxy for ``this trajectory has typical fine-scale content.''

Figure~\ref{fig:ns_appendix_rollout_panels} shows the selected P50 sample at rollout horizons
$\Delta t \in \{3, 6, 9, 12\}$.  Each model block contains a prediction row and a
signed pointwise error row.  Error maps display
$(u_{\mathrm{pred}} - u_{\mathrm{gt}}) / \mathrm{RMS}(u_{\mathrm{gt}})$
as a percentage, with a shared diverging colorscale across all models and timesteps.
The relative $\ell_2$ error,
$\|u_{\mathrm{pred}} - u_{\mathrm{gt}}\|_2 / \|u_{\mathrm{gt}}\|_2$,
is annotated in each error panel.

At early horizons ($\Delta t = 3$), all three models produce visually plausible
predictions.  As the rollout progresses, FNO develops large-scale structural errors:
vortex positions drift and fine-scale features are lost, yielding relative $\ell_2$
errors that grow from roughly 40\% at $\Delta t = 3$ to over 95\% at $\Delta t = 12$.
Annotated errors are relative \(\ell_2\); RMS enters only the error-map normalization.
At \(Re=5000\), long-horizon pointwise error is set by chaotic decorrelation, not scheme accuracy.
UNet retains overall stability but progressively smooths the vorticity field, resulting
in a broadband loss of energy and a steady increase in error.  MFC-UNet maintains the
sharpest vortex cores and the closest spatial alignment with the ground truth across all
horizons, with the lowest relative $\ell_2$ error at every timestep.

A black arrow marks a representative positive (cyclonic) vortex in the lower portion of
the domain.  Tracking this feature across models reveals that FNO and UNet exhibit
noticeable spatial drift and deformation of the vortex, while MFC-UNet preserves both
its position and intensity more accurately.

\begin{figure*}[!t]
    \centering

    \newcommand{\labw}{0.09\textwidth}
    \newcommand{\imgw}{0.205\textwidth}
    \newcommand{\panel}[2]{%
        \begin{minipage}{\imgw}
            \centering
            \includegraphics[width=\linewidth]{#1}\\[-0.5mm]
            \includegraphics[width=\linewidth]{#2}
        \end{minipage}%
    }

    \makebox[\labw][c]{}%
    \makebox[\imgw][c]{$\Delta t = 3$}%
    \makebox[\imgw][c]{$\Delta t = 6$}%
    \makebox[\imgw][c]{$\Delta t = 9$}%
    \makebox[\imgw][c]{$\Delta t = 12$}

    \vspace{1mm}

    \begin{minipage}{\labw}
        \centering
        \rotatebox{90}{\scalebox{0.8}{\textbf{Ground truth}}}
    \end{minipage}%
    \begin{minipage}{\imgw}
        \includegraphics[width=\linewidth]{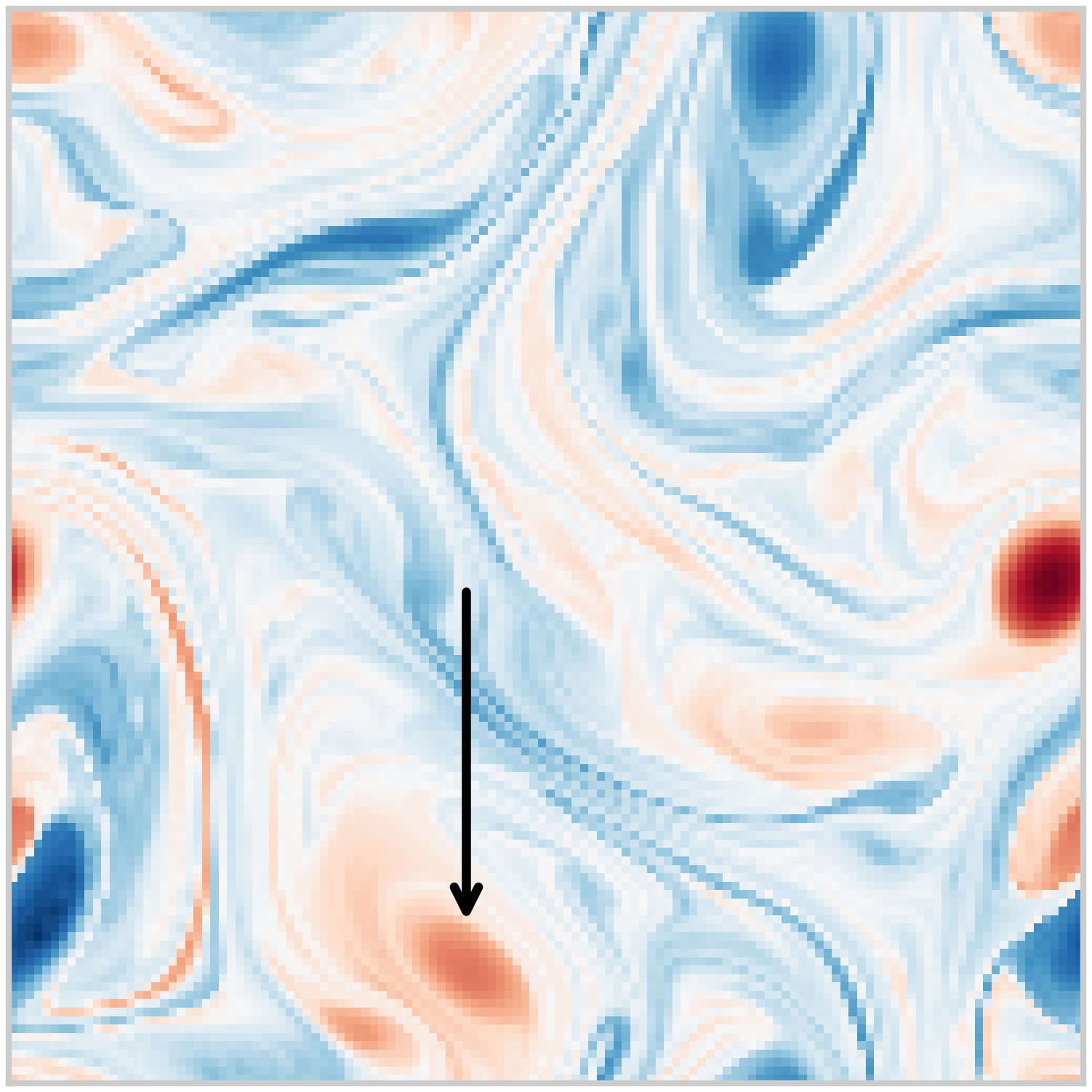}
    \end{minipage}%
    \begin{minipage}{\imgw}
        \includegraphics[width=\linewidth]{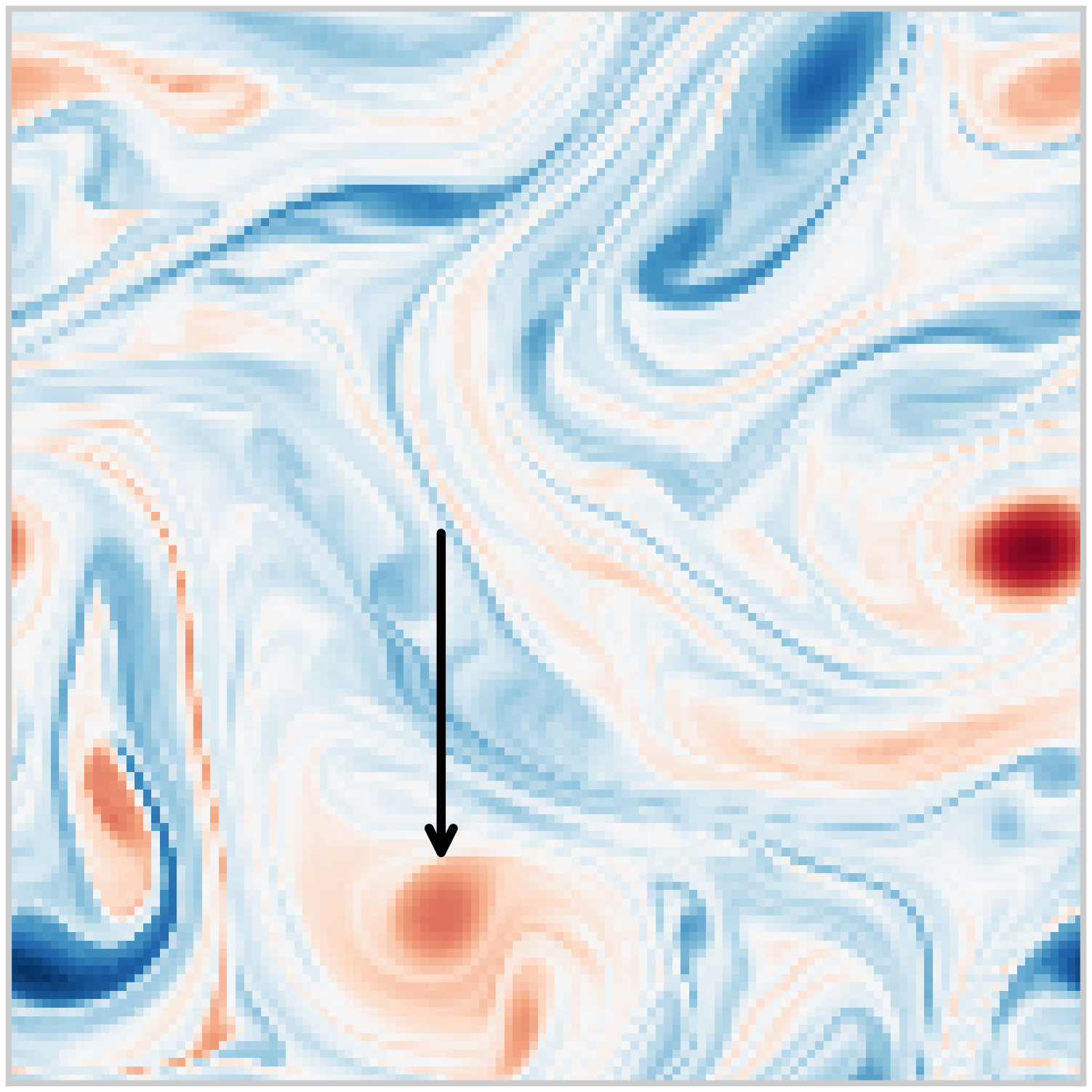}
    \end{minipage}%
    \begin{minipage}{\imgw}
        \includegraphics[width=\linewidth]{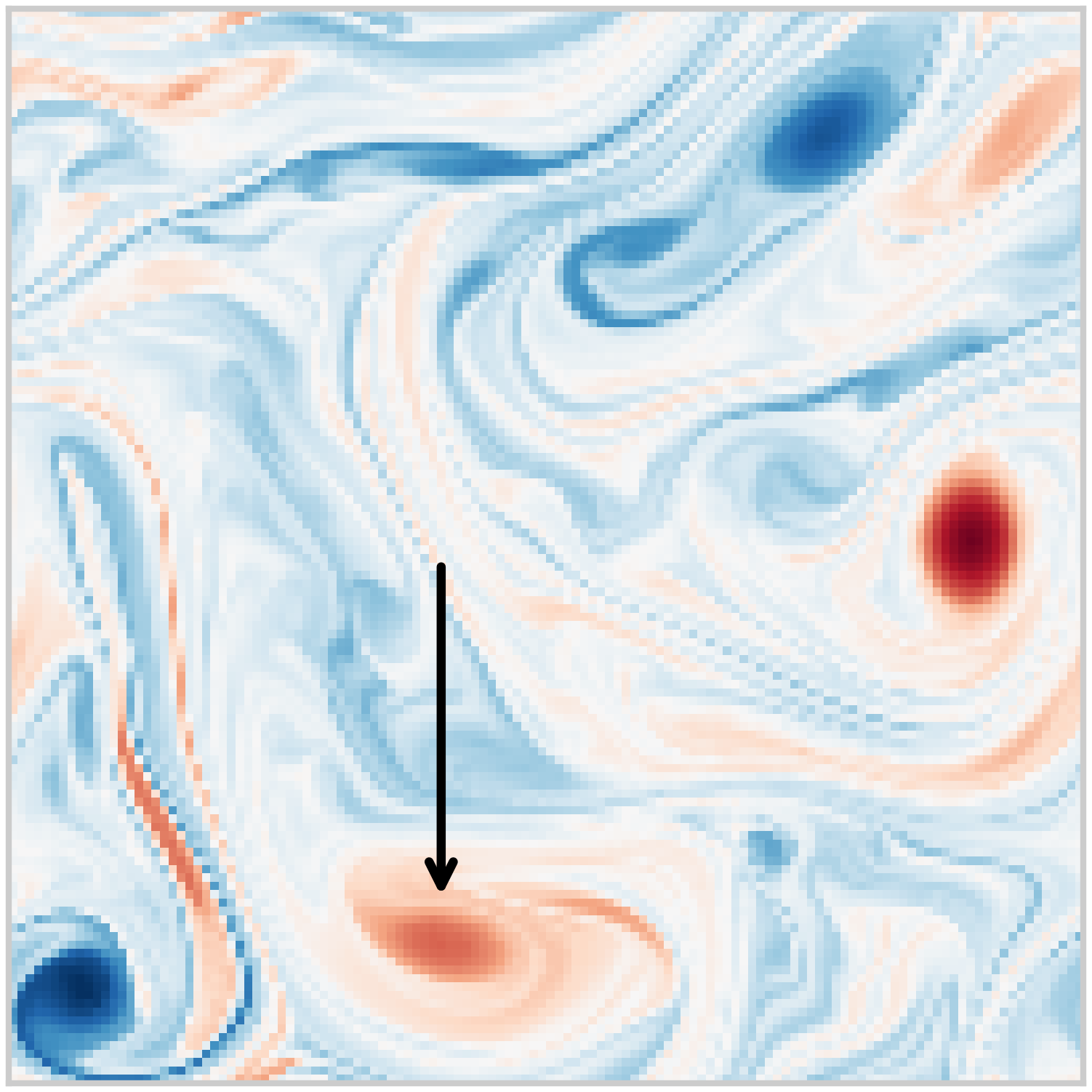}
    \end{minipage}%
    \begin{minipage}{\imgw}
        \includegraphics[width=\linewidth]{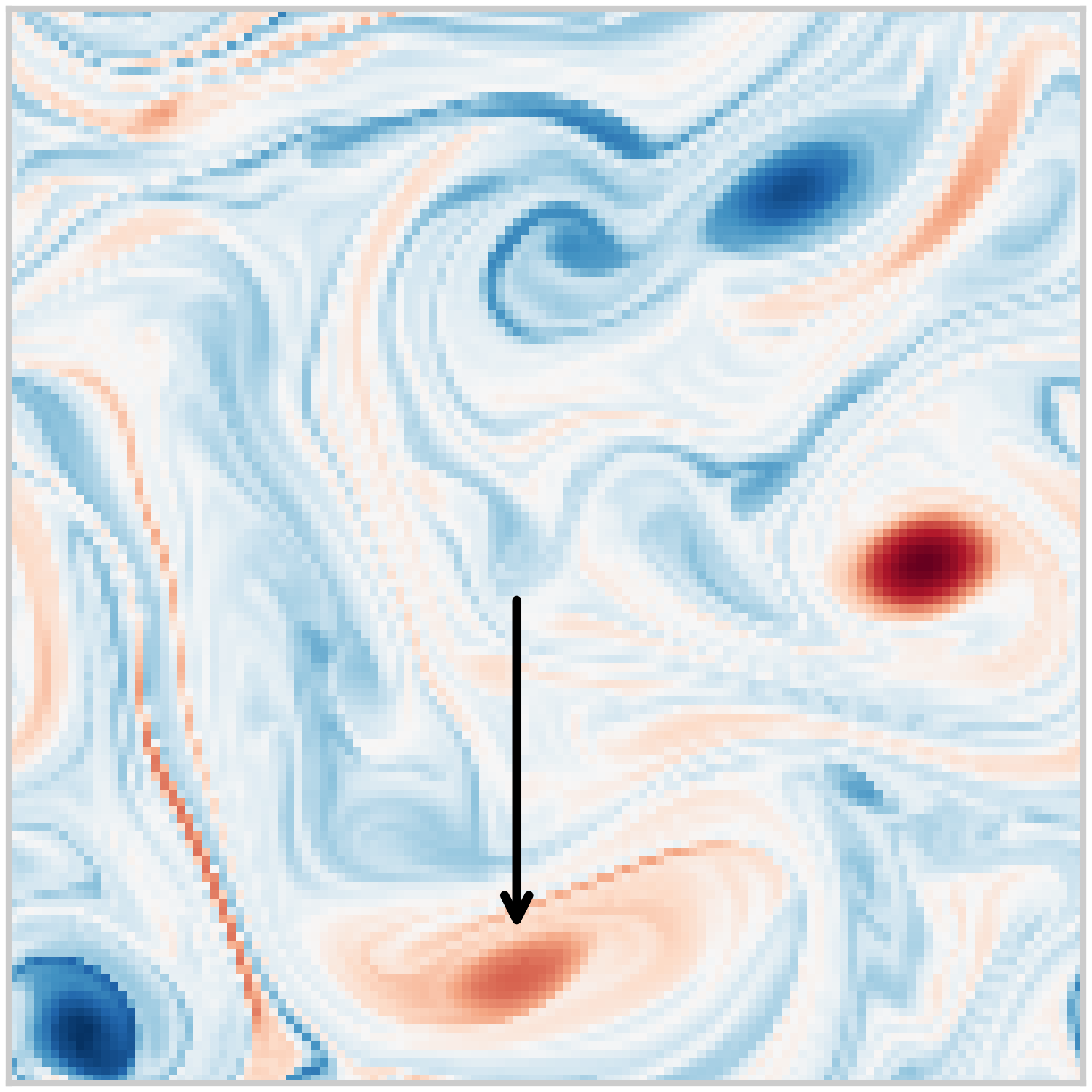}
    \end{minipage}

    \vspace{1.5mm}

    \begin{minipage}{\labw}
        \centering
        \rotatebox{90}{\scalebox{0.8}{\textbf{MFC-UNet}}}
    \end{minipage}%
    \panel
    {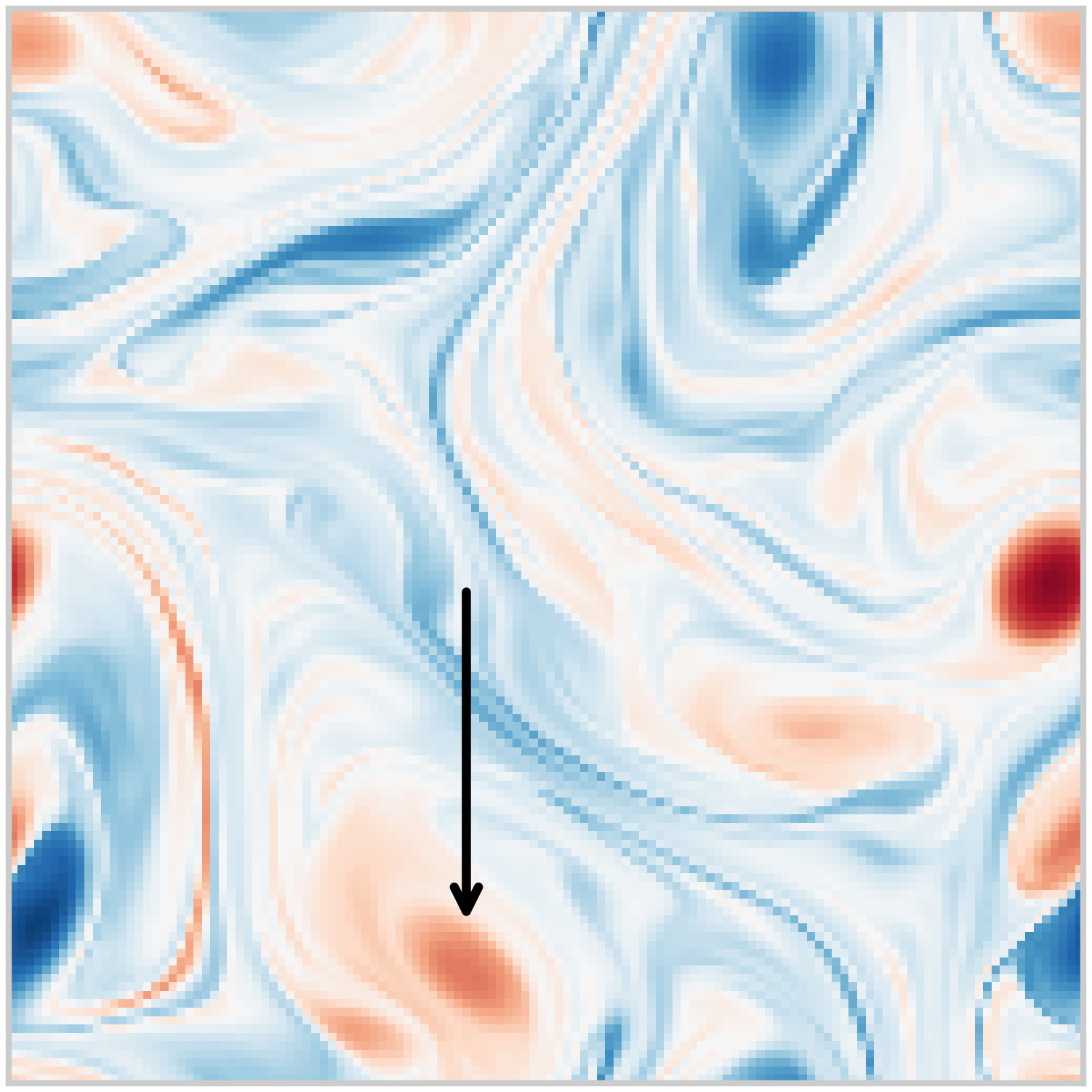}
    {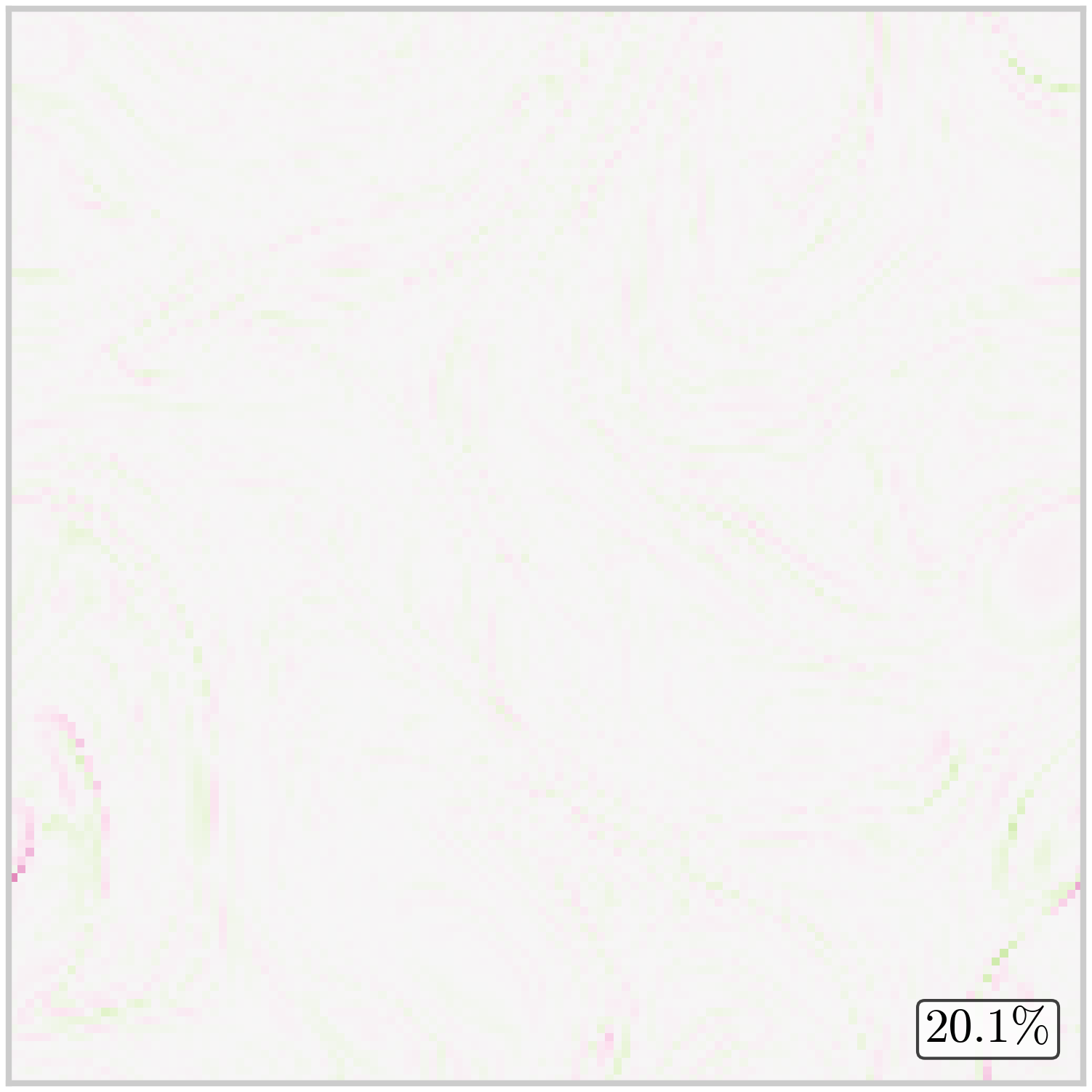}
    \panel
    {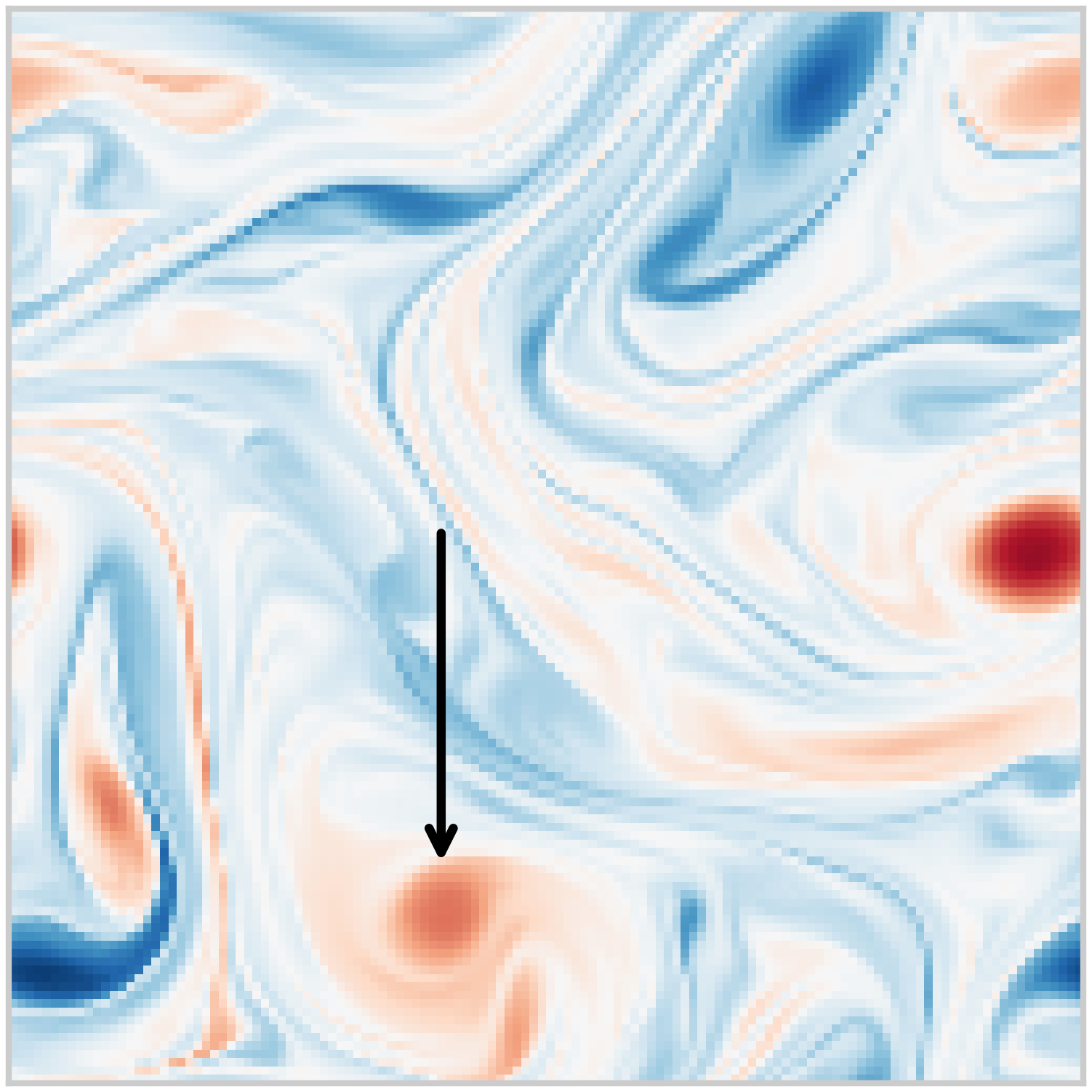}
    {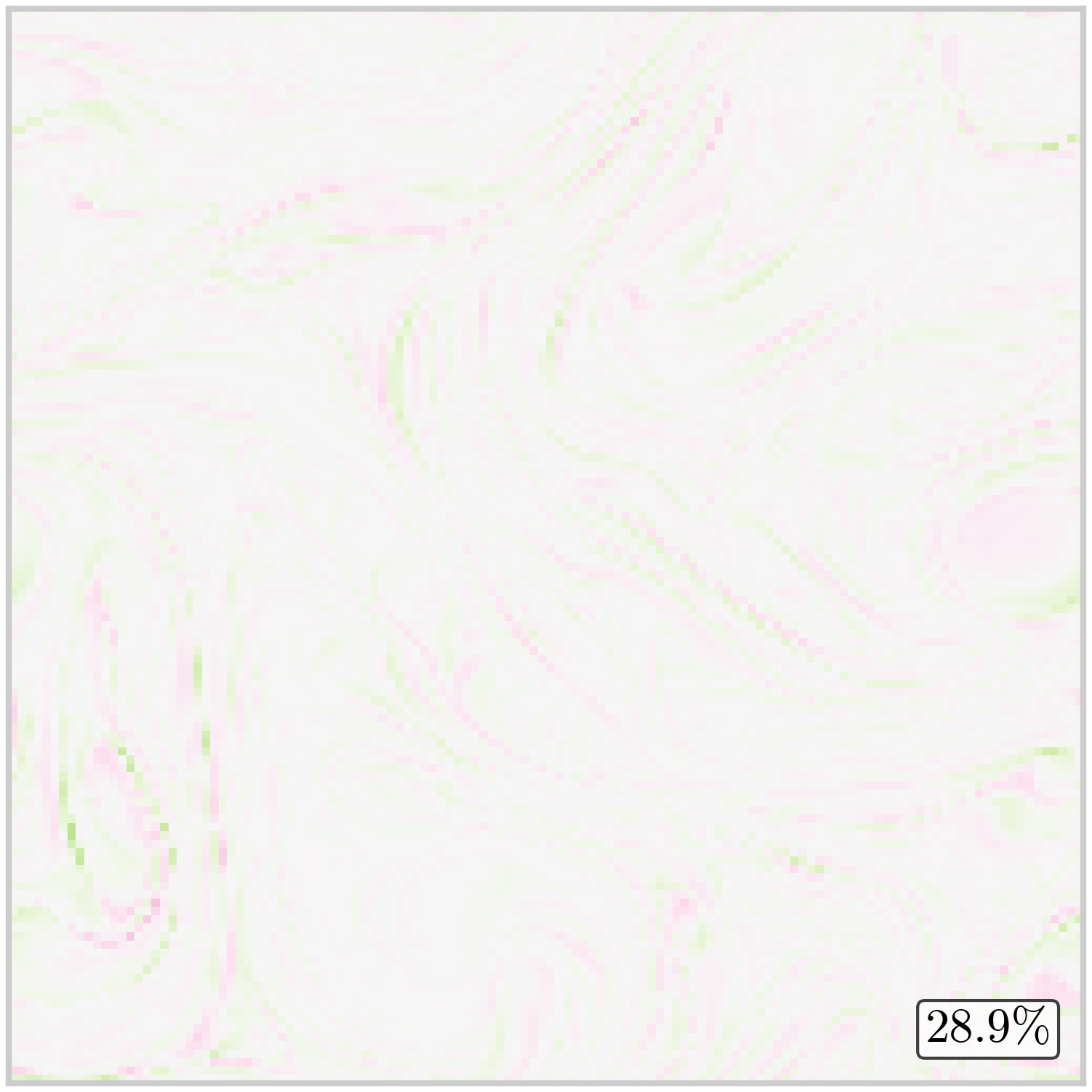}
    \panel
    {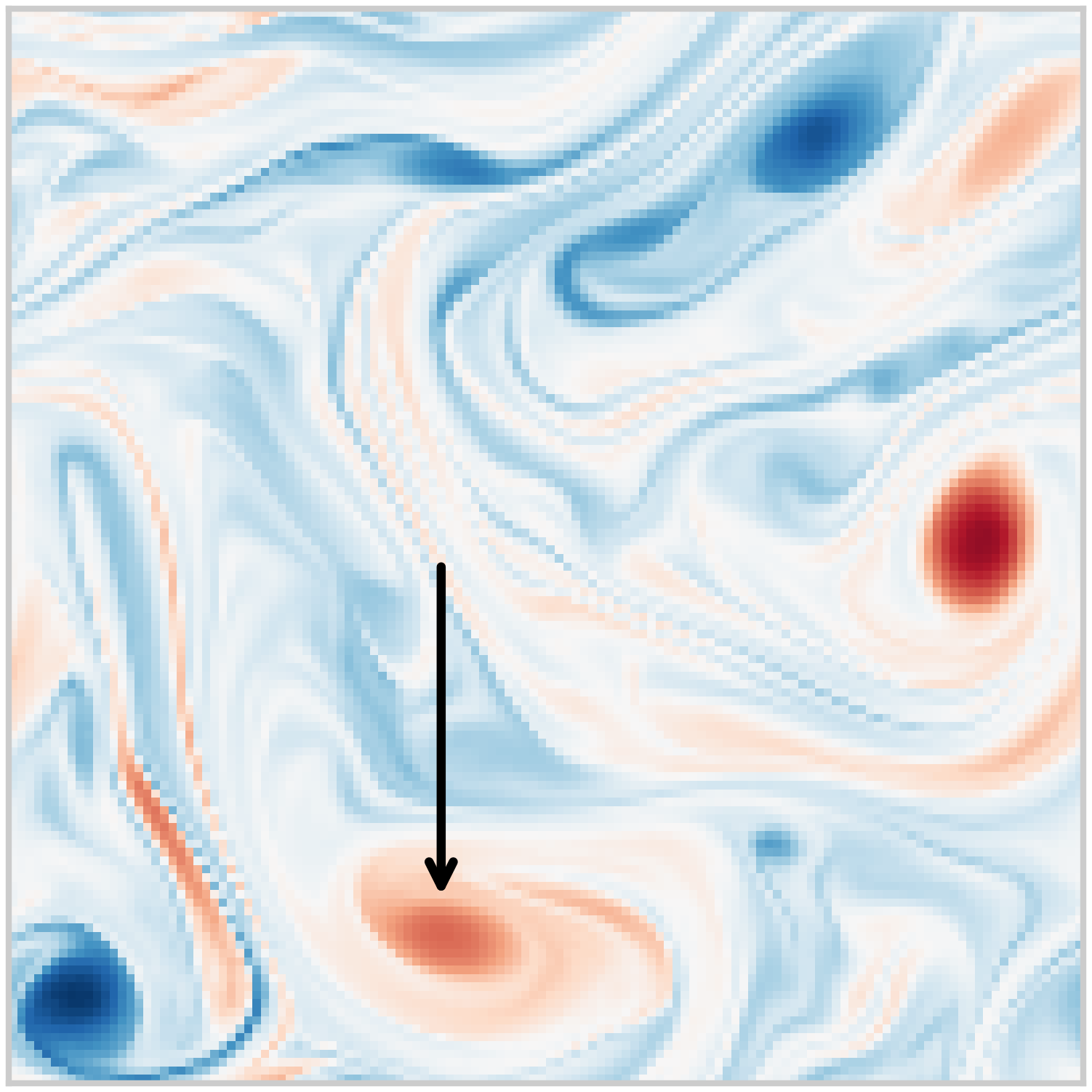}
    {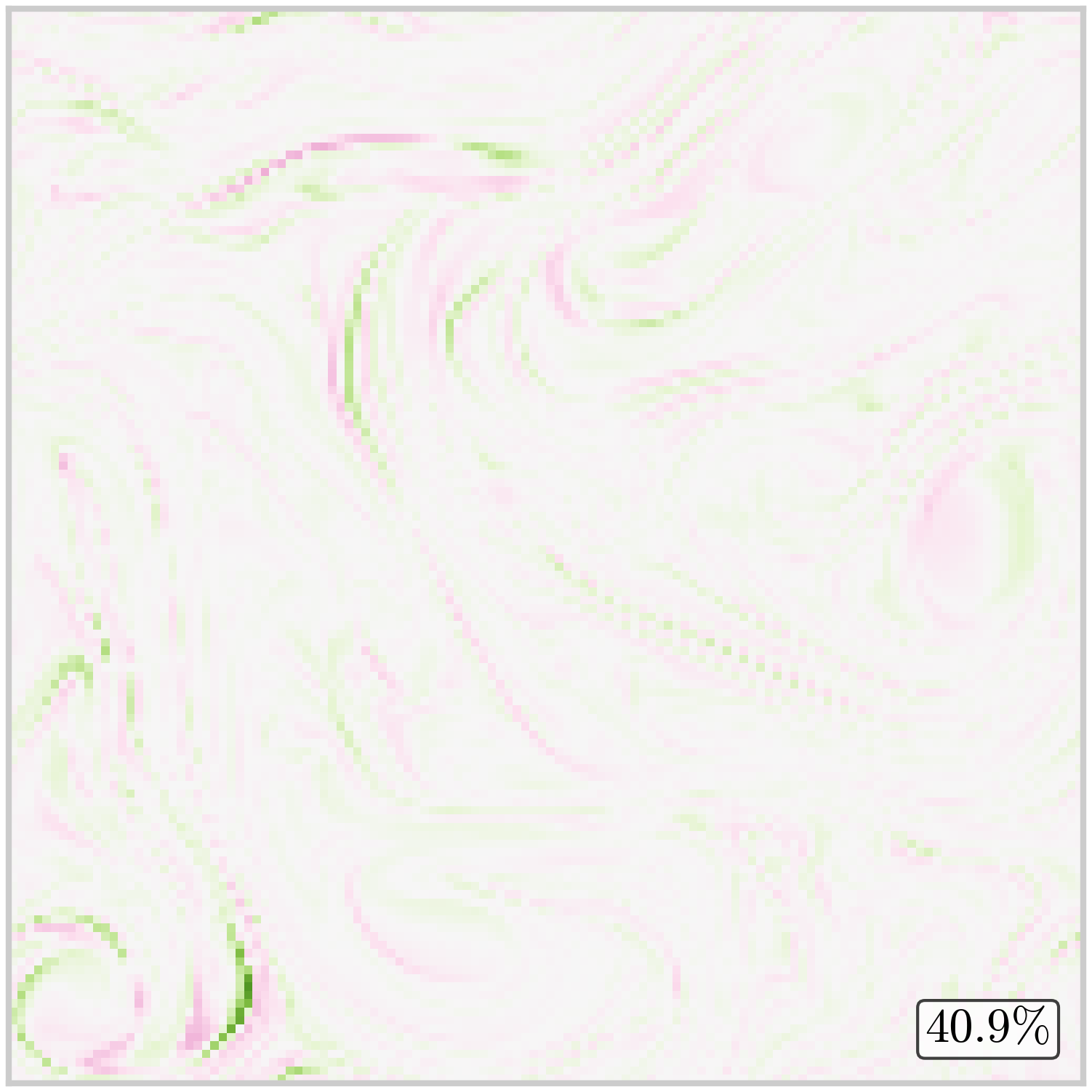}
    \panel
    {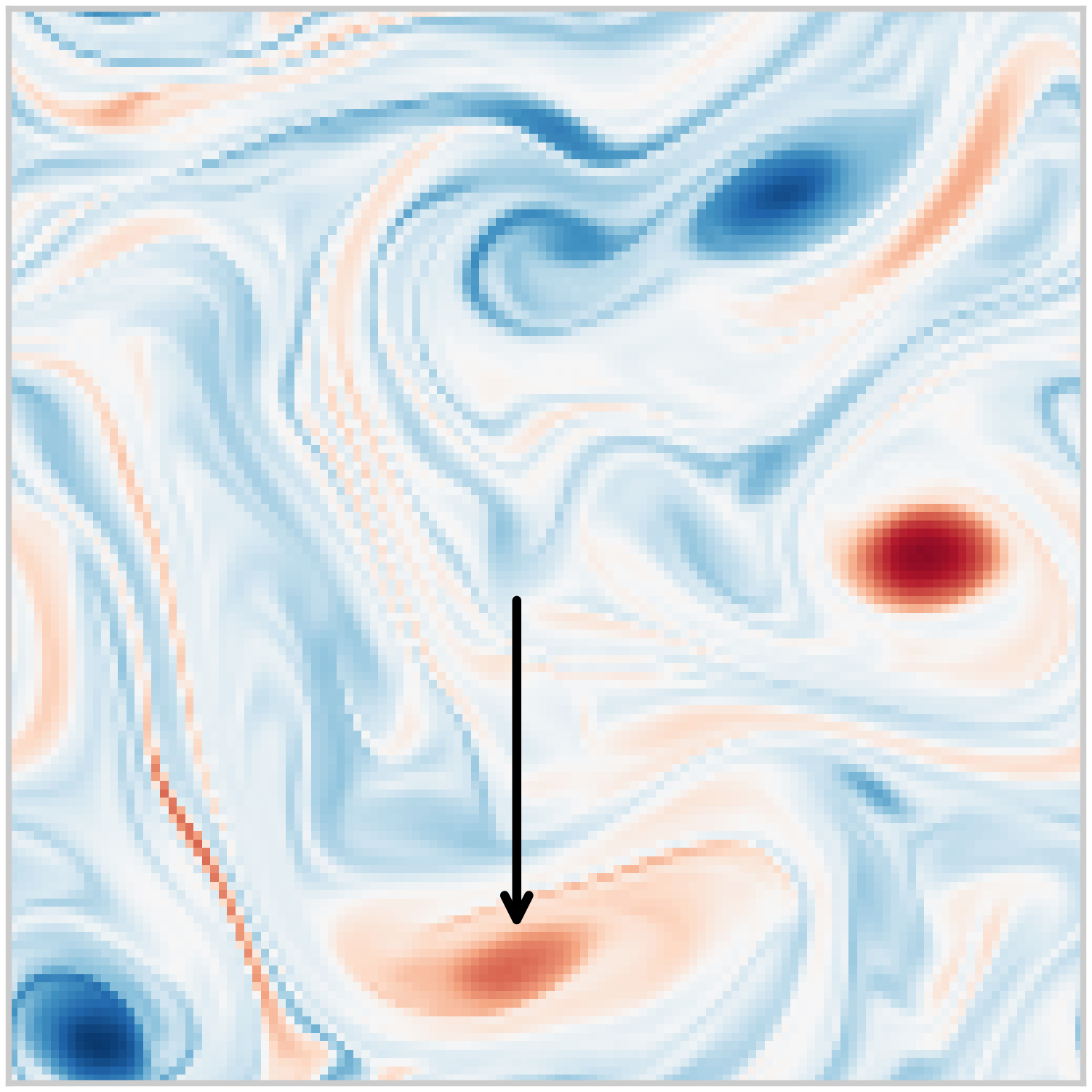}
    {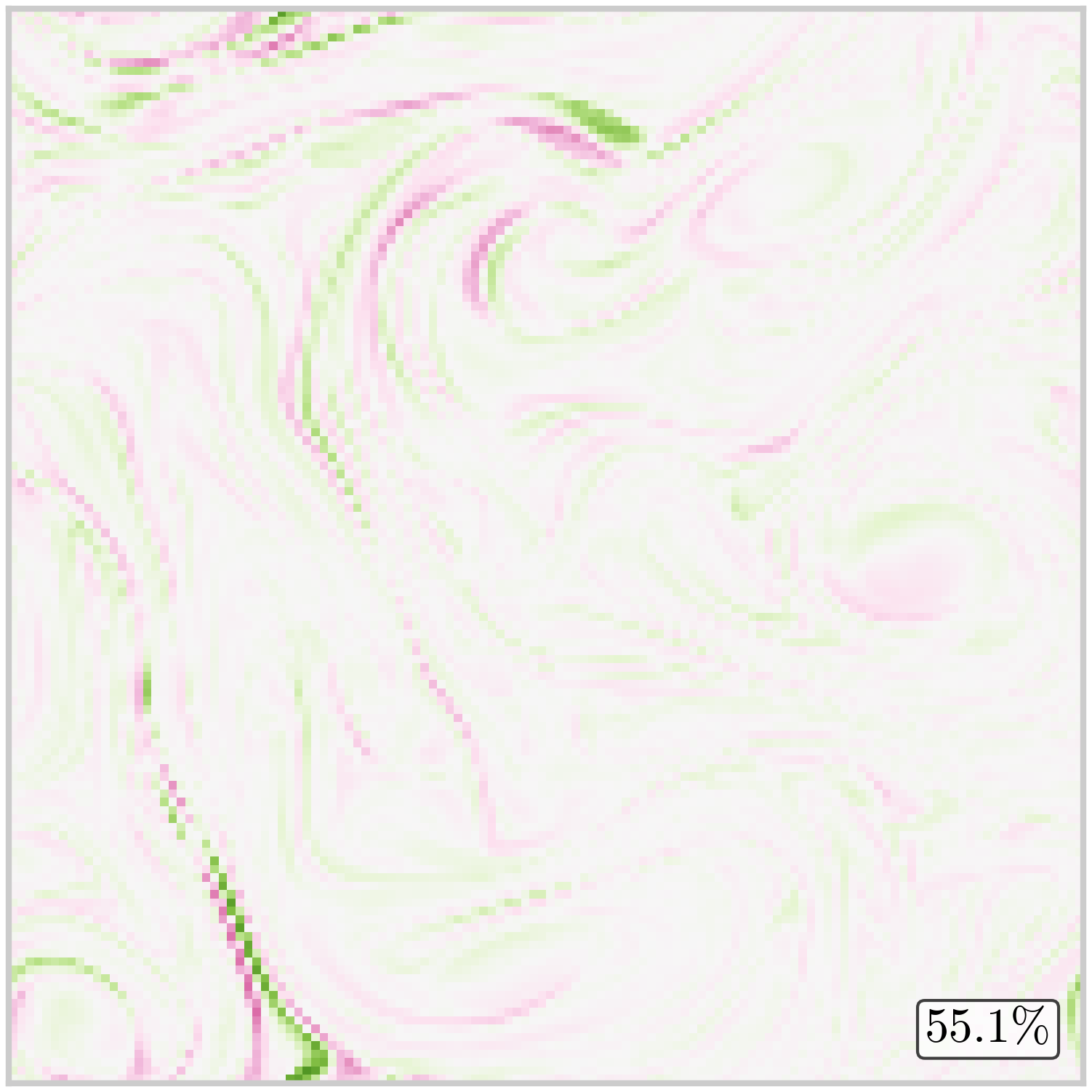}

    \vspace{1mm}

    \begin{minipage}{\labw}
        \centering
        \rotatebox{90}{\scalebox{0.8}{\textbf{FNO}}}
    \end{minipage}%
    \panel
    {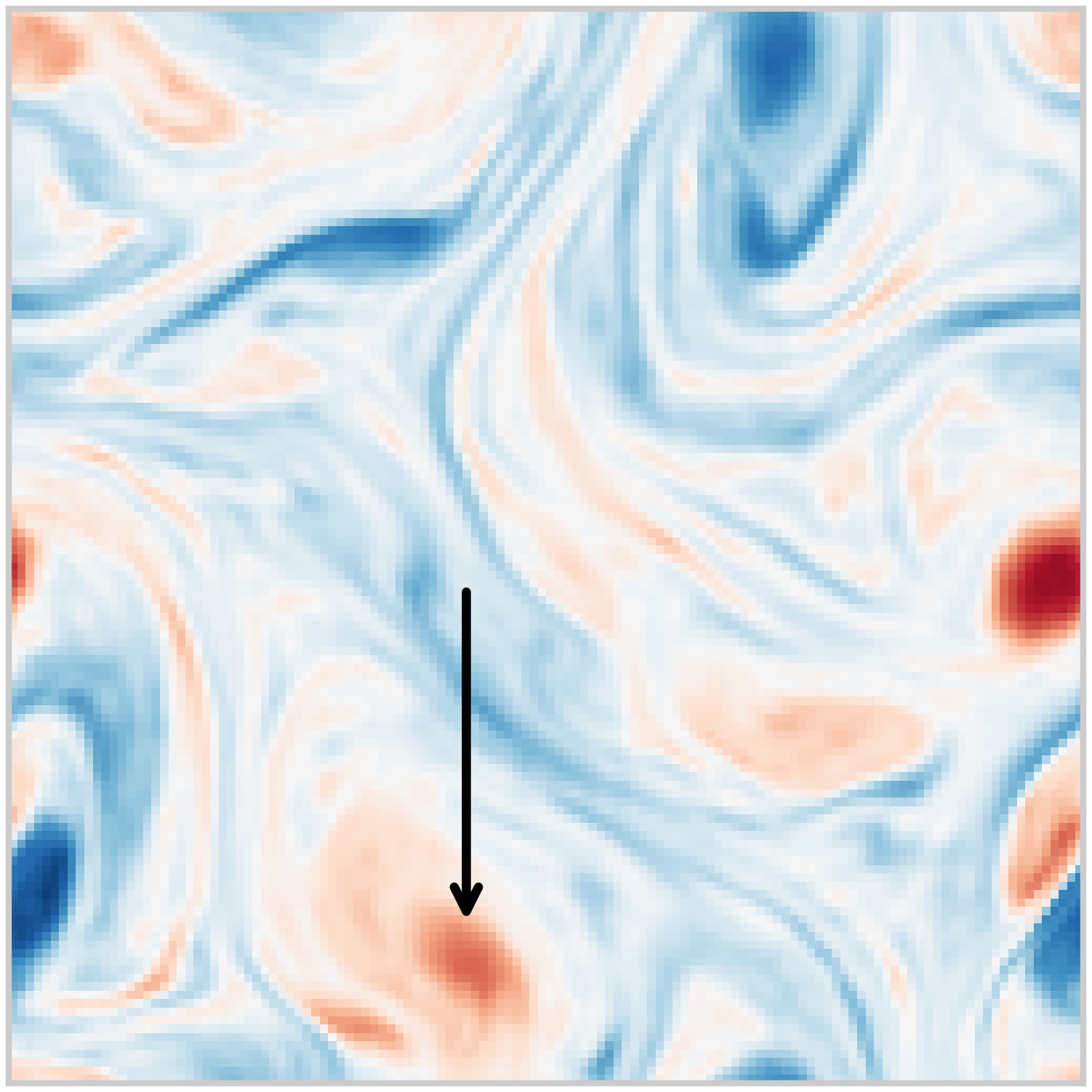}
    {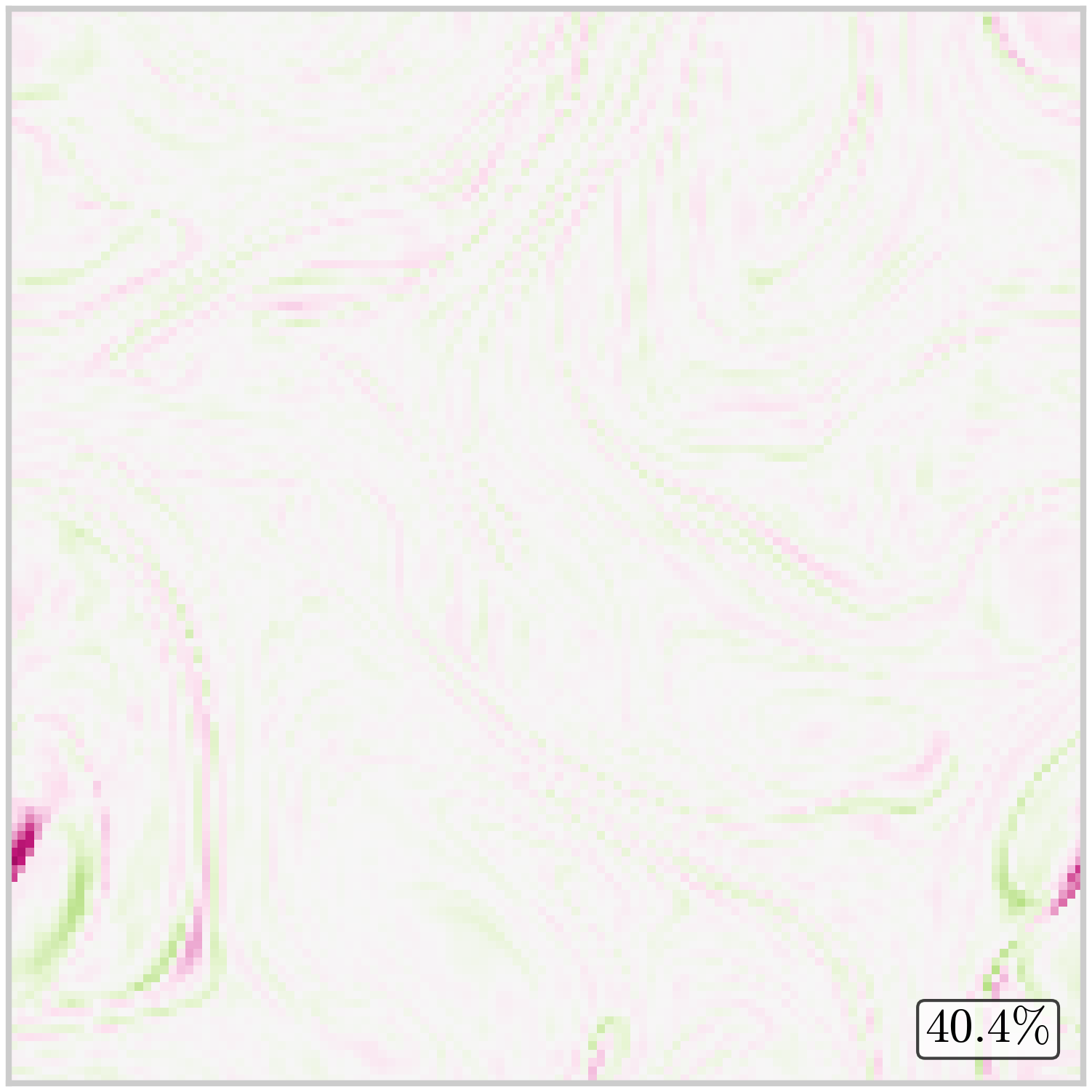}
    \panel
    {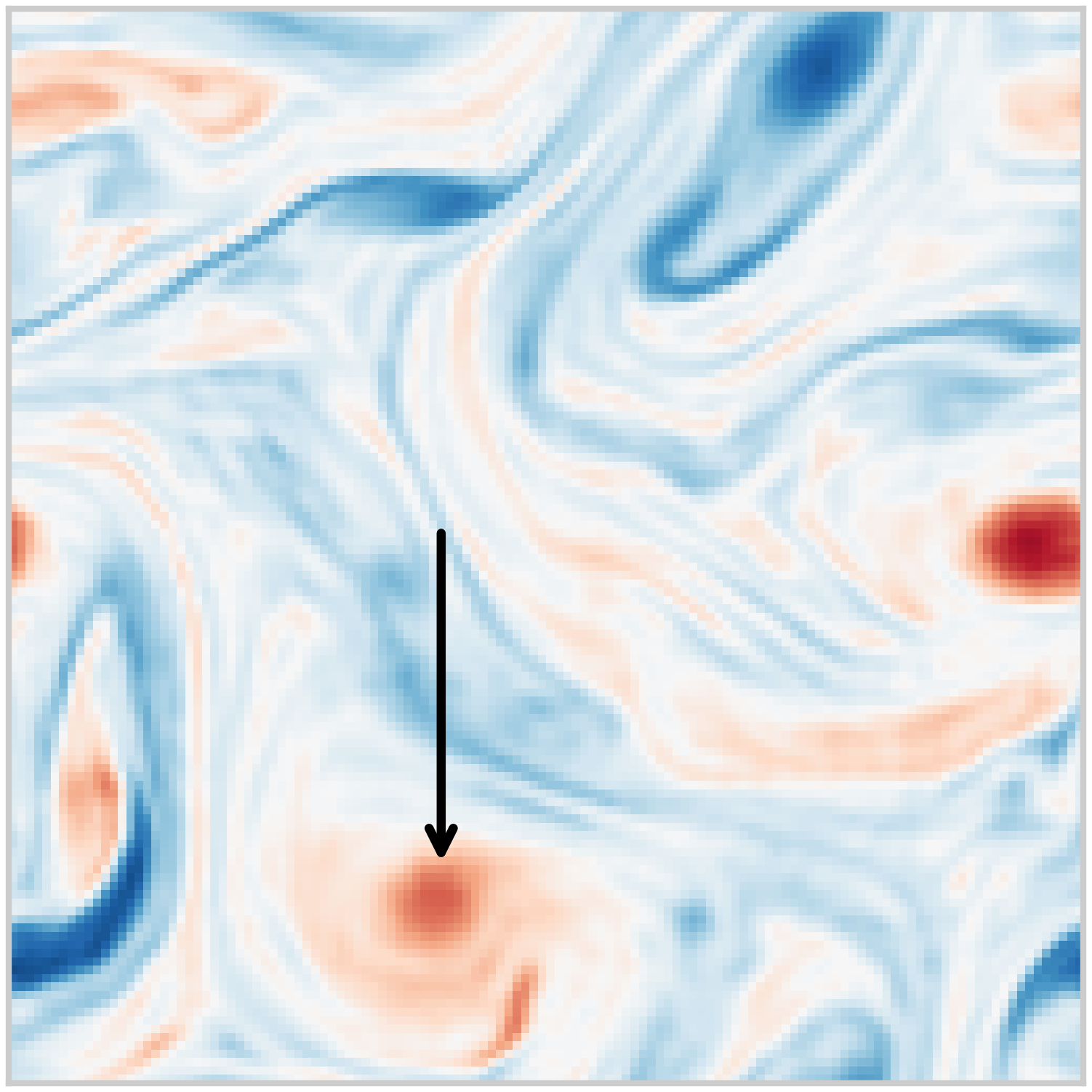}
    {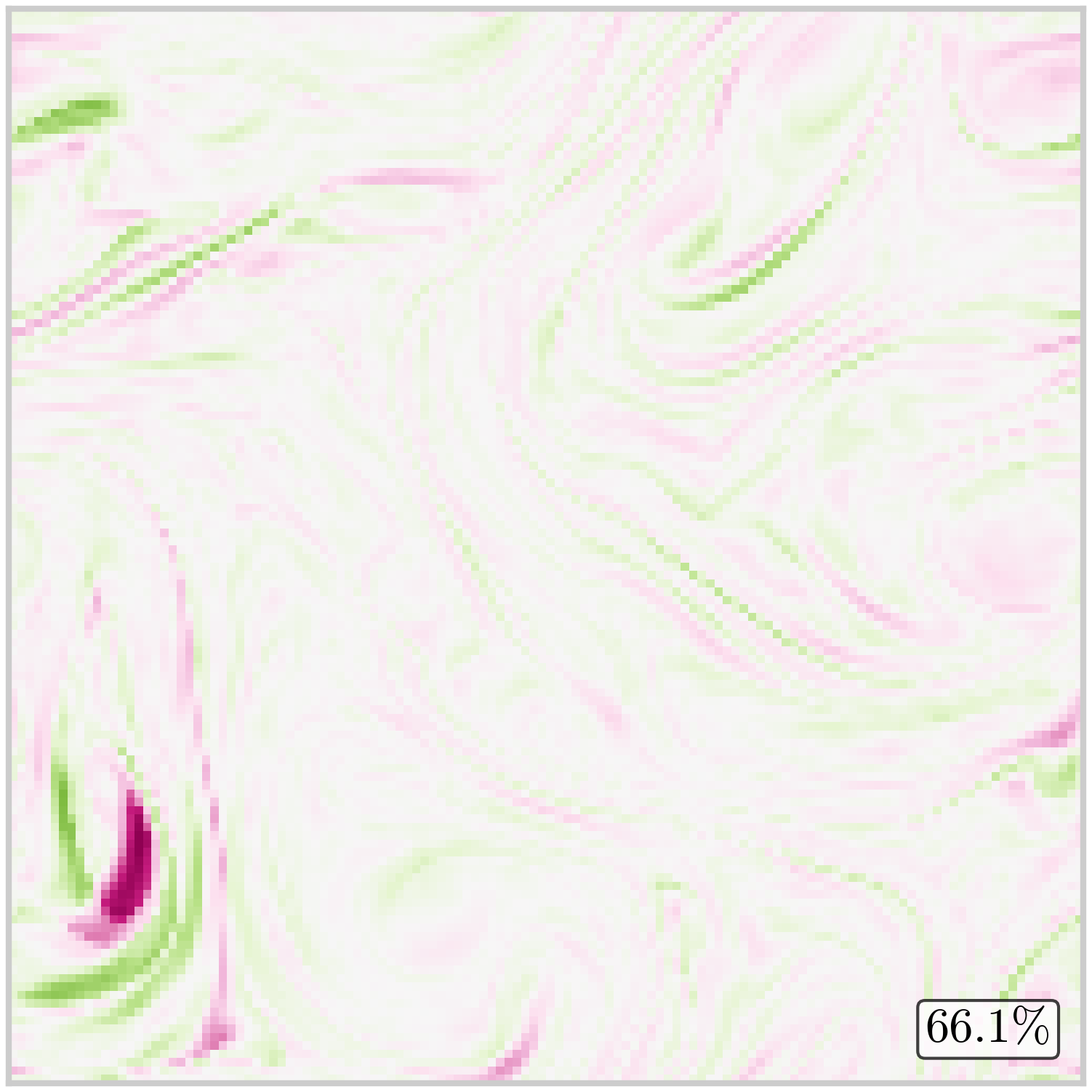}
    \panel
    {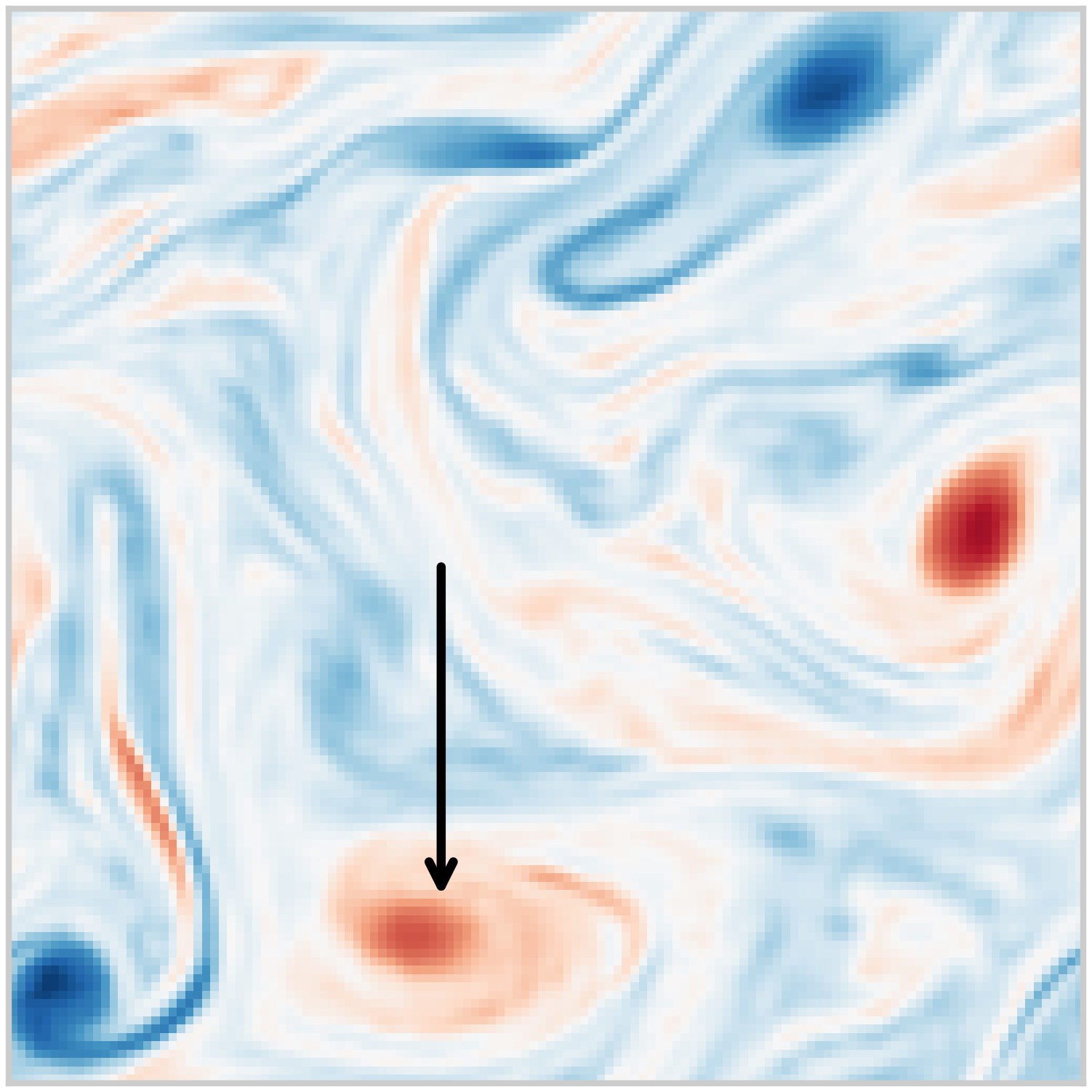}
    {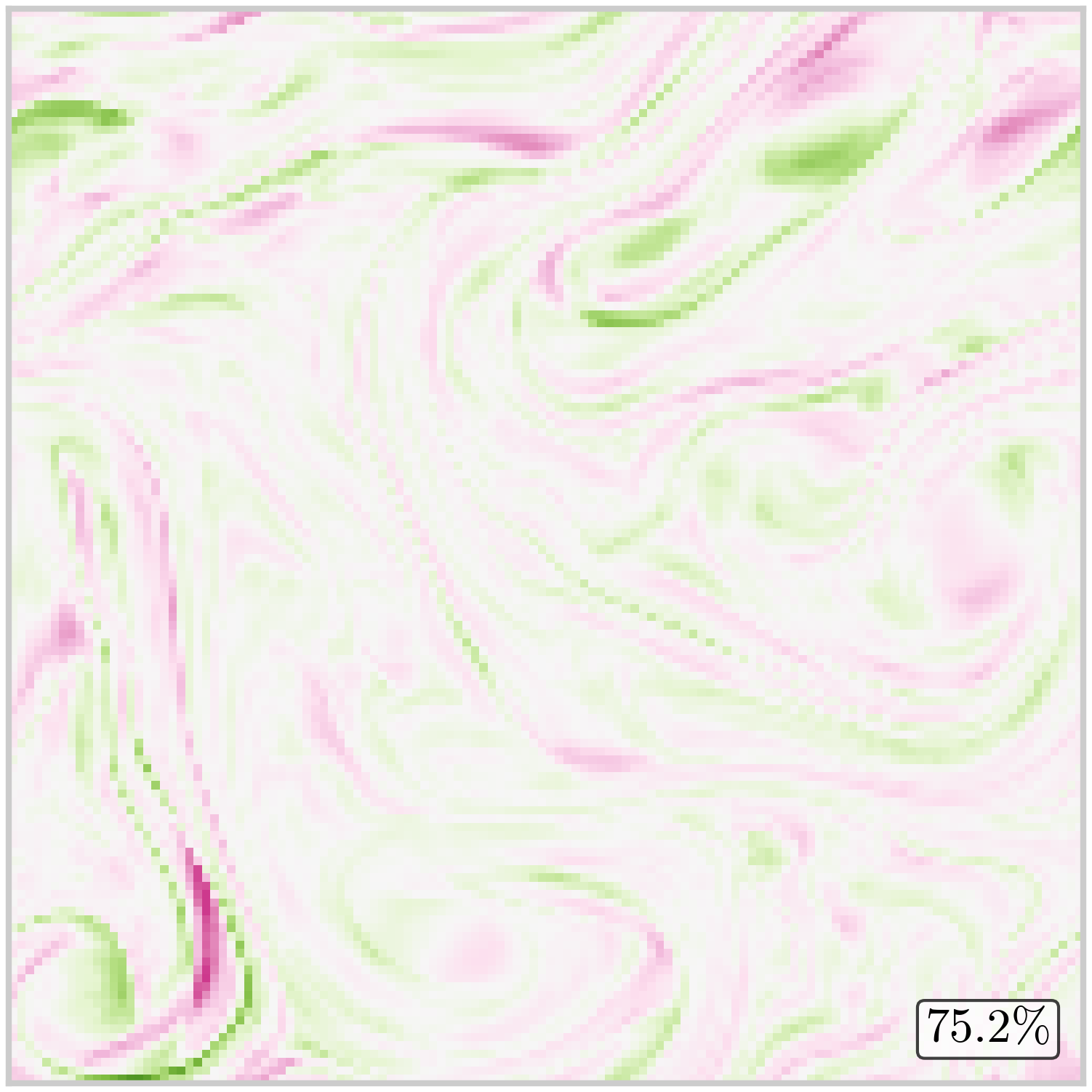}
    \panel
    {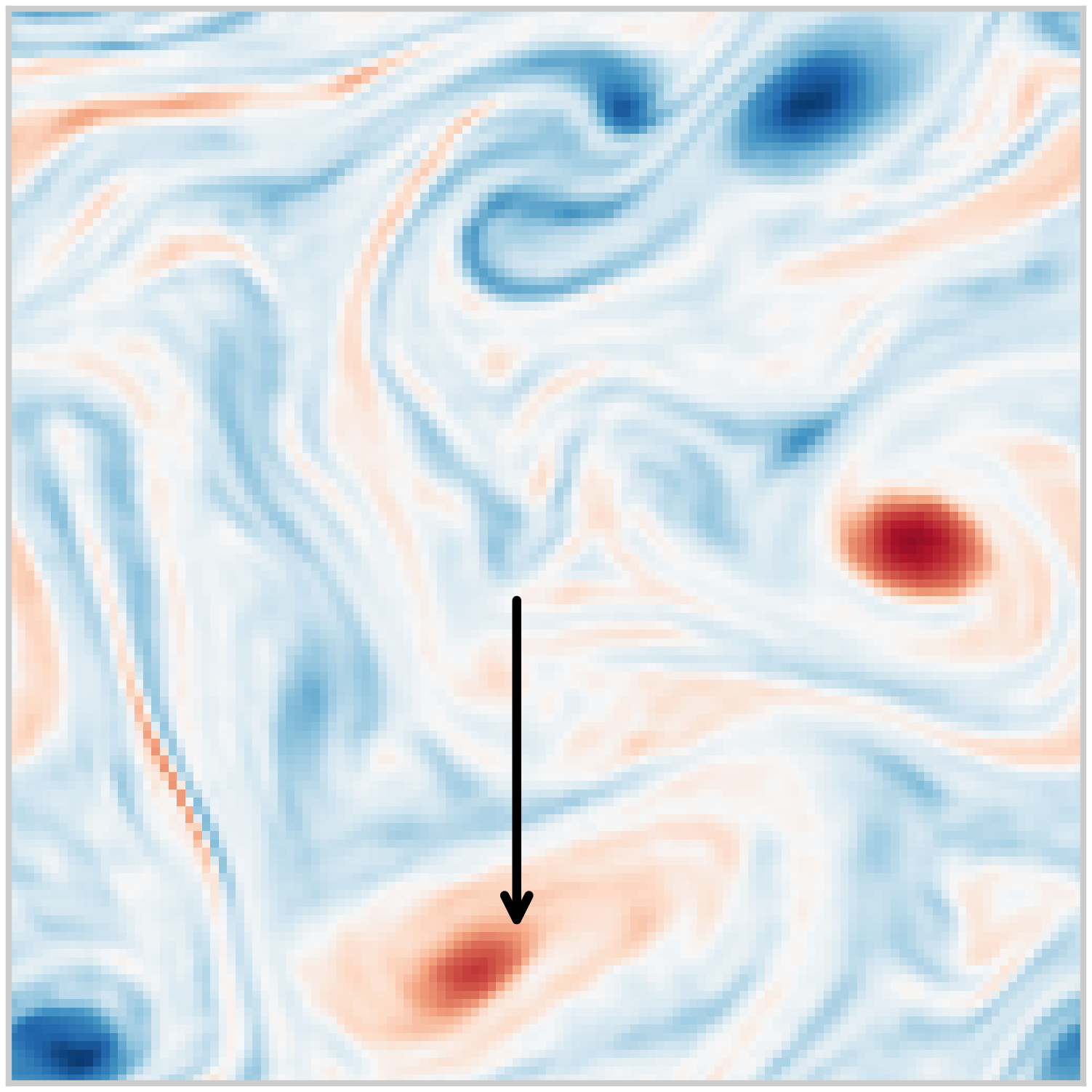}
    {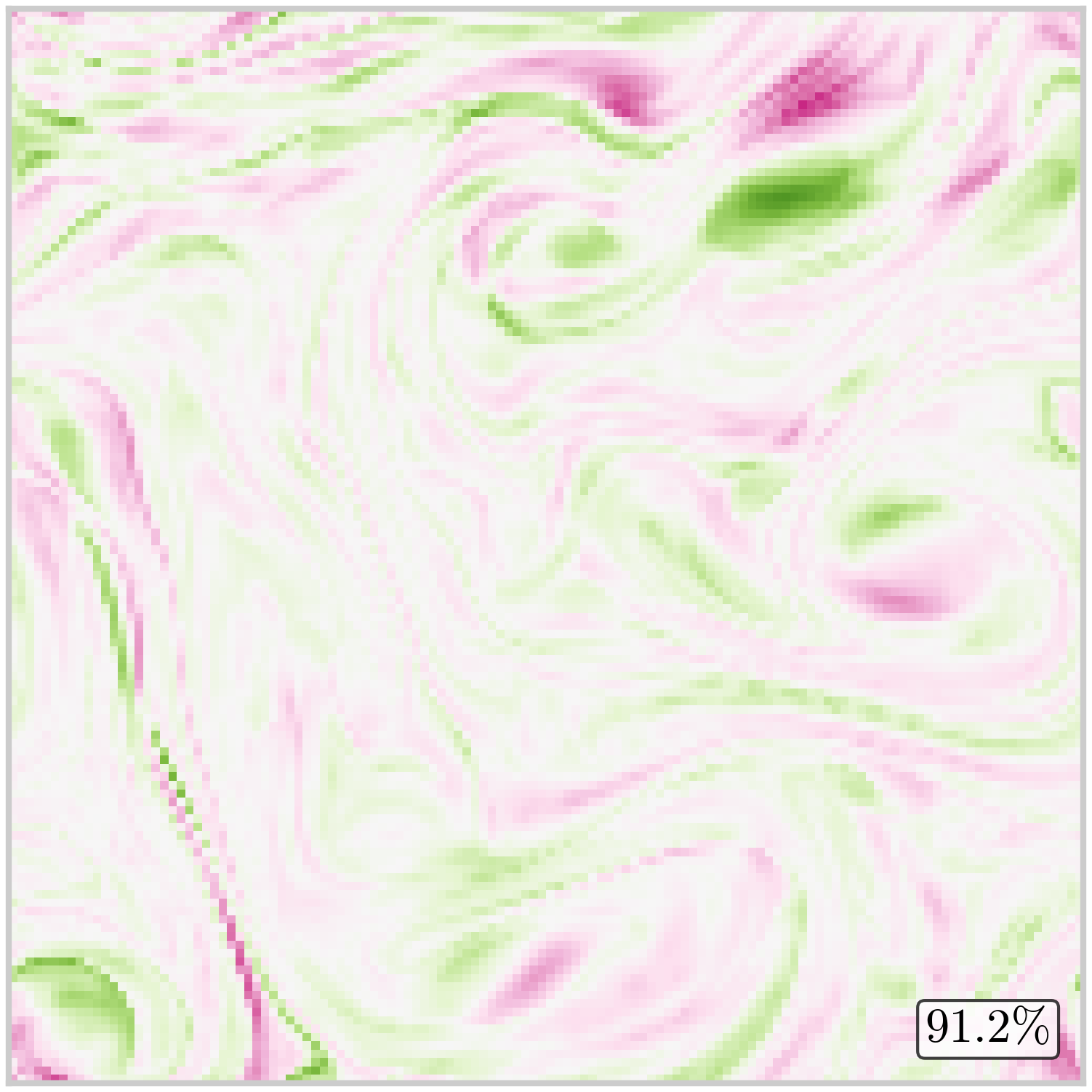}

    \vspace{1mm}

    \begin{minipage}{\labw}
        \centering
        \rotatebox{90}{\scalebox{0.8}{\textbf{UNet}}}
    \end{minipage}%
    \panel
    {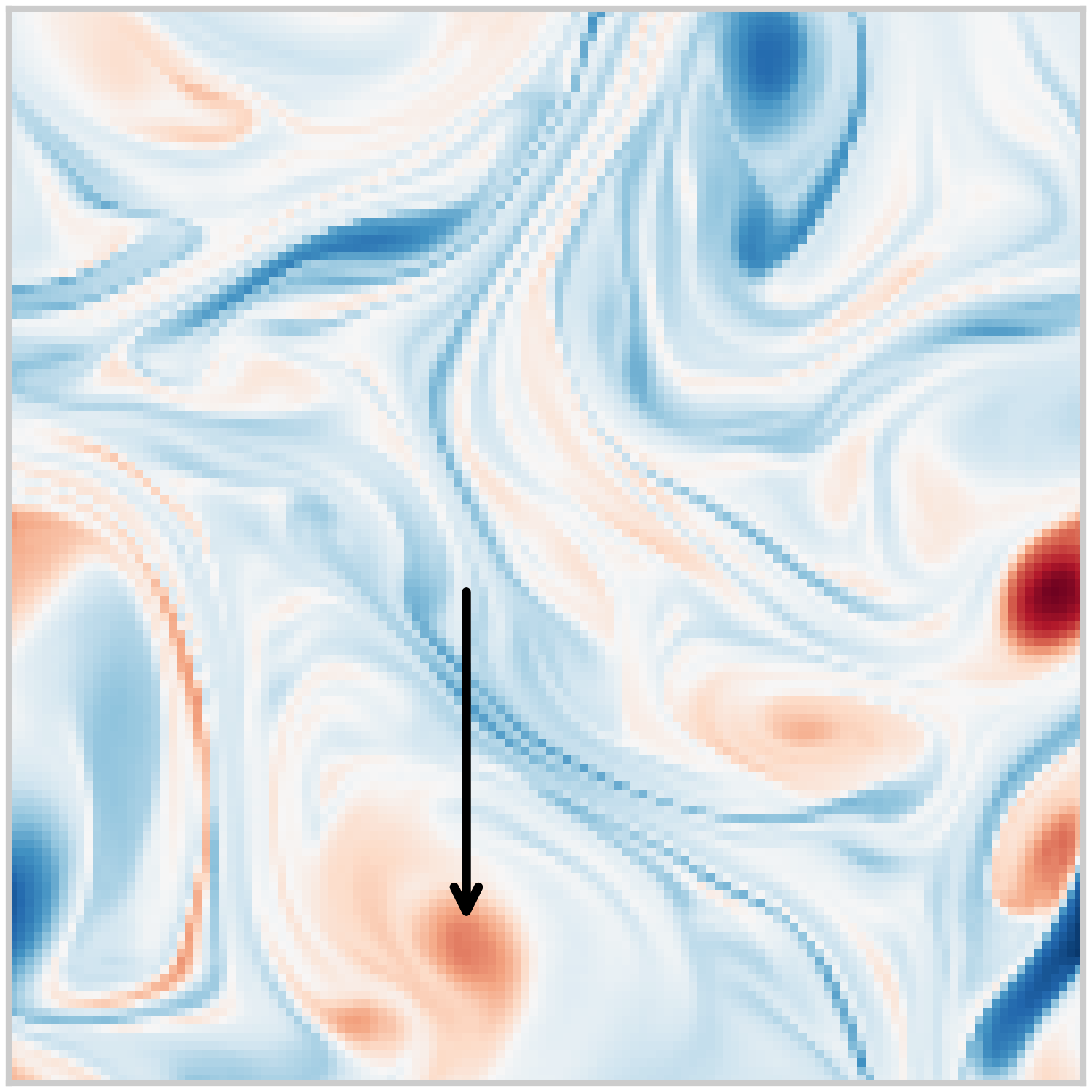}
    {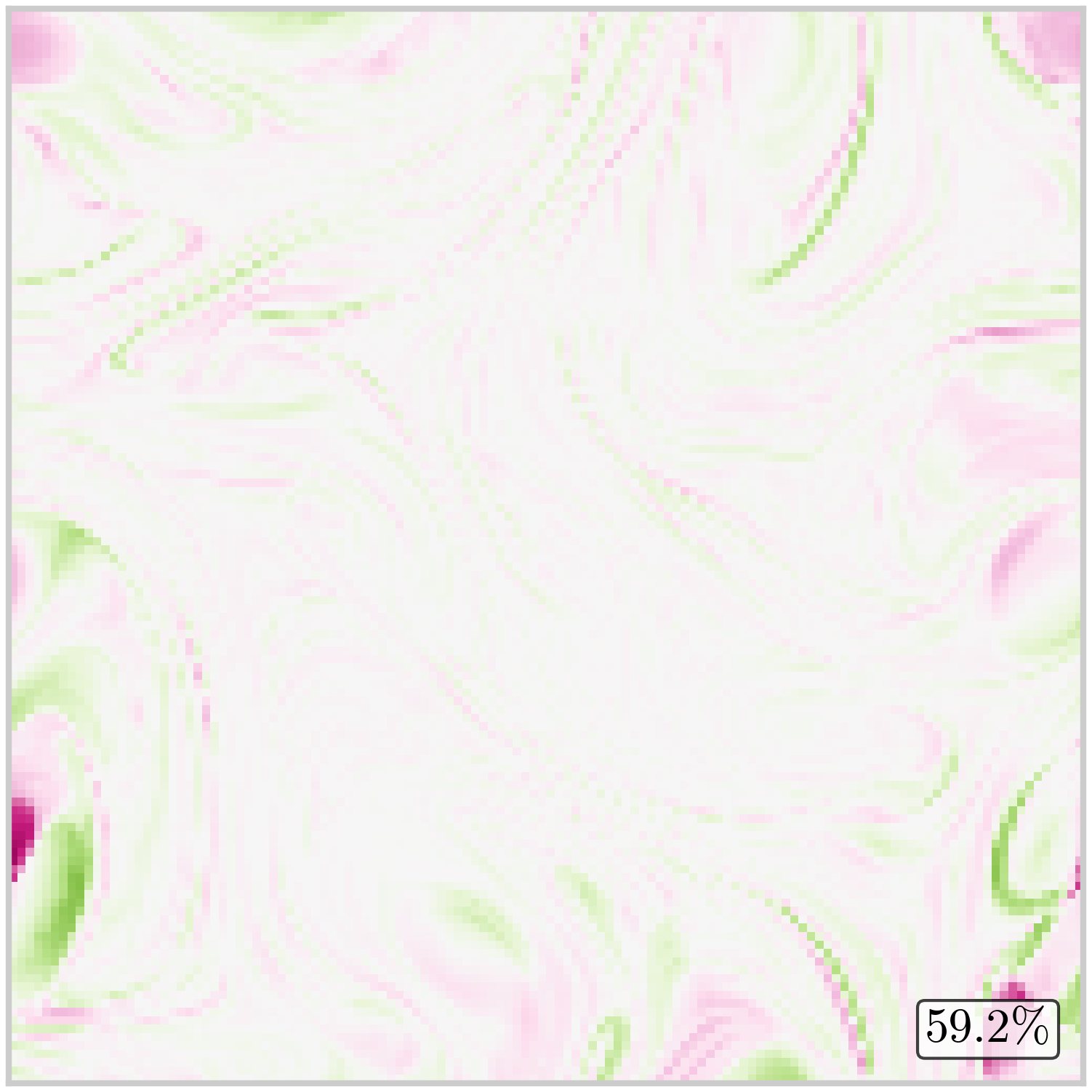}
    \panel
    {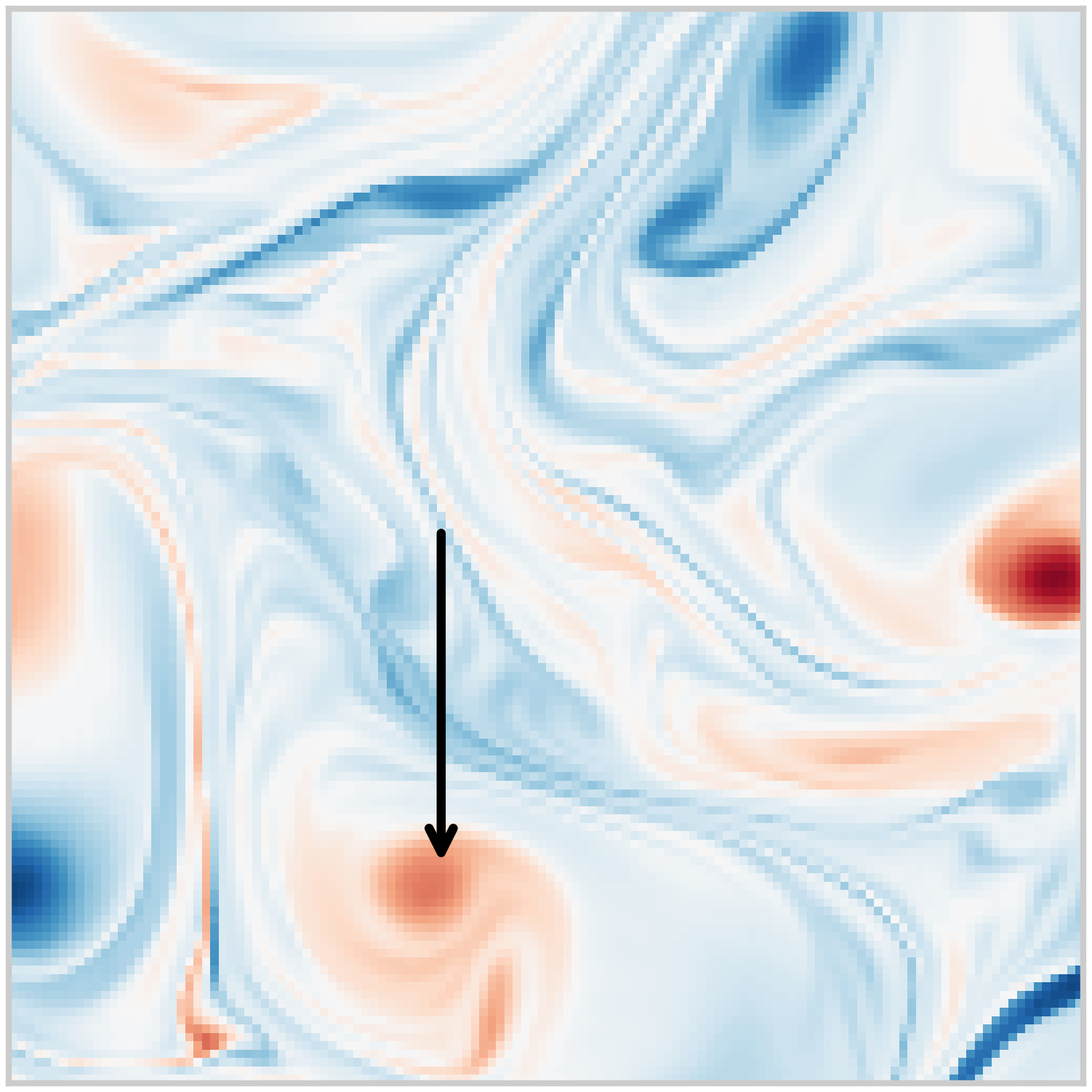}
    {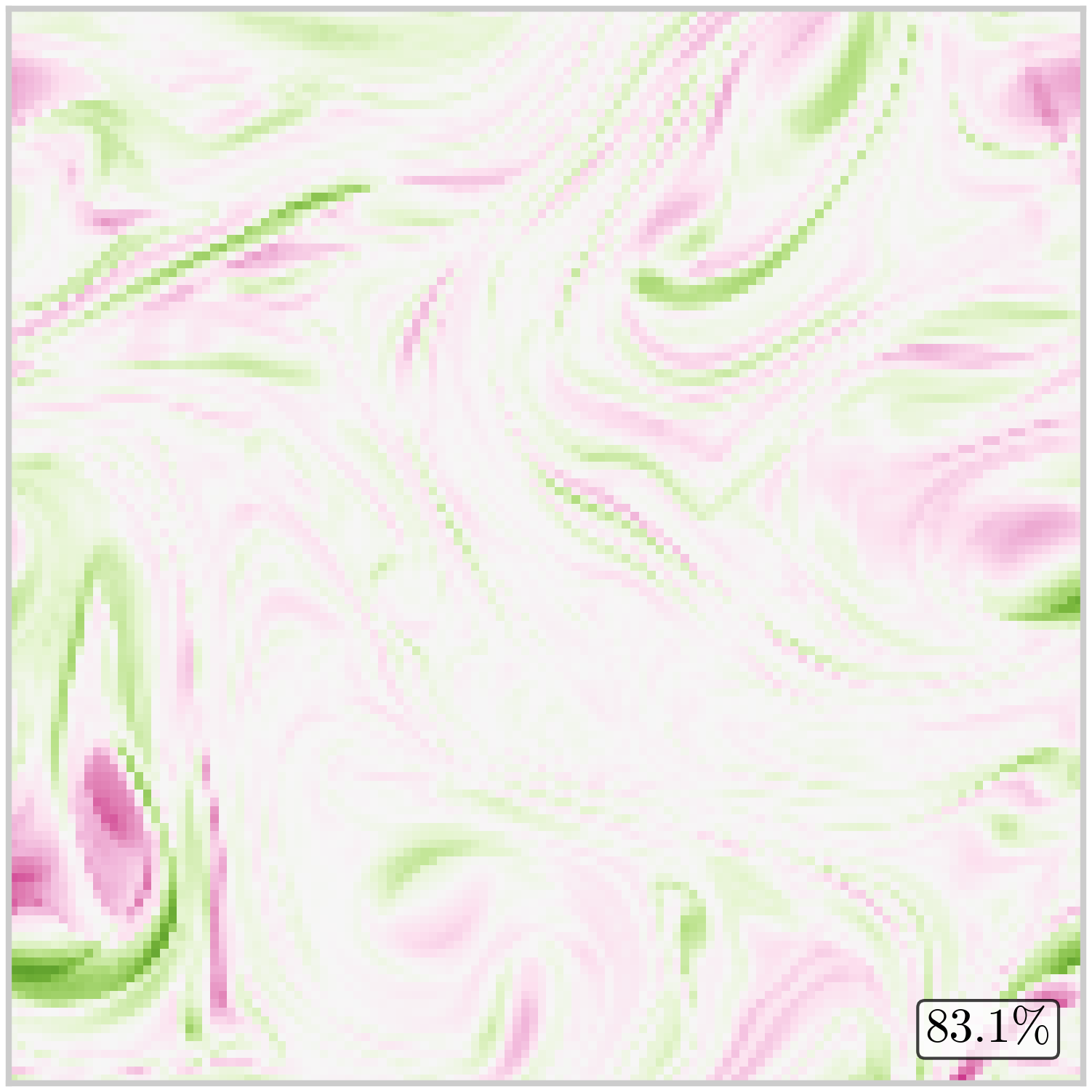}
    \panel
    {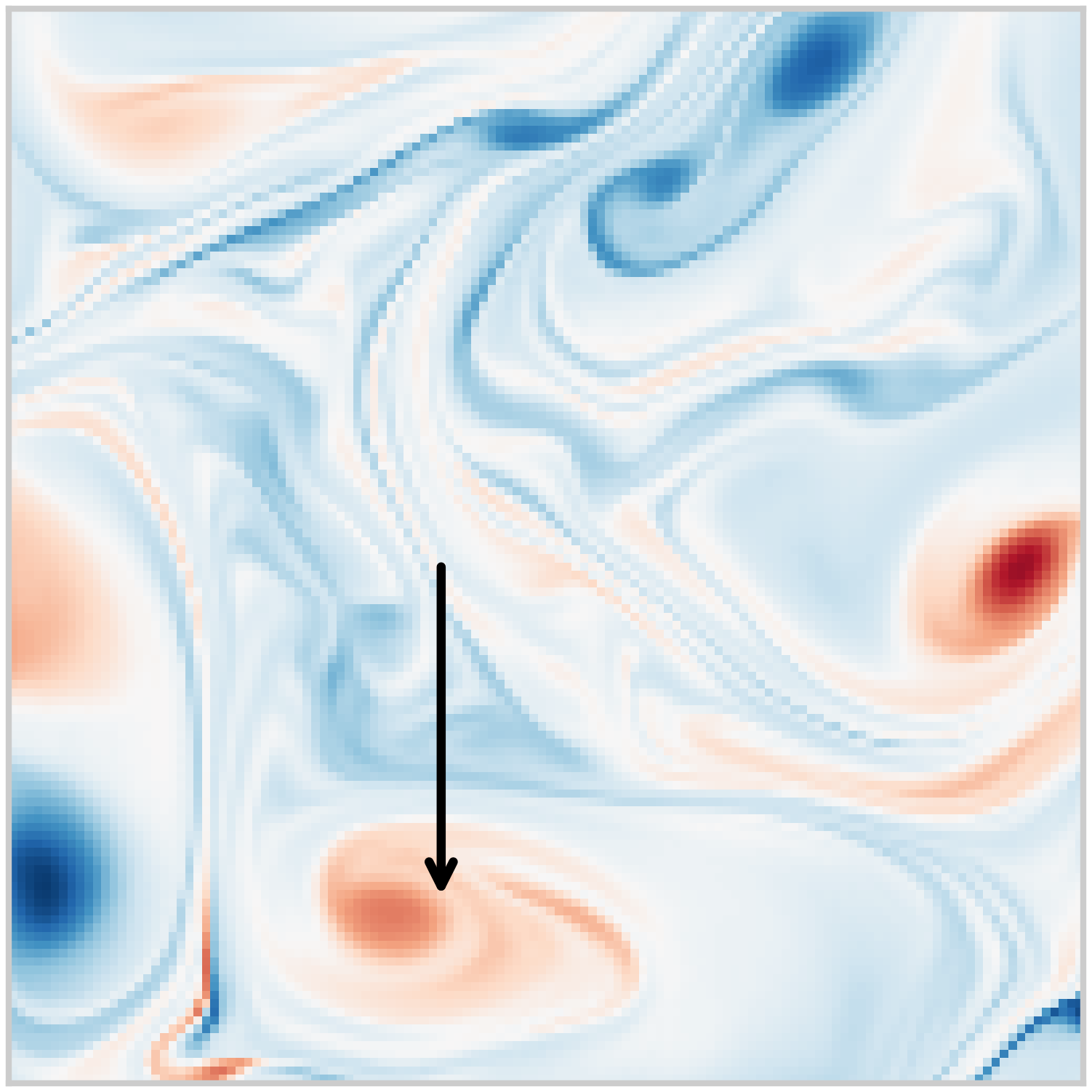}
    {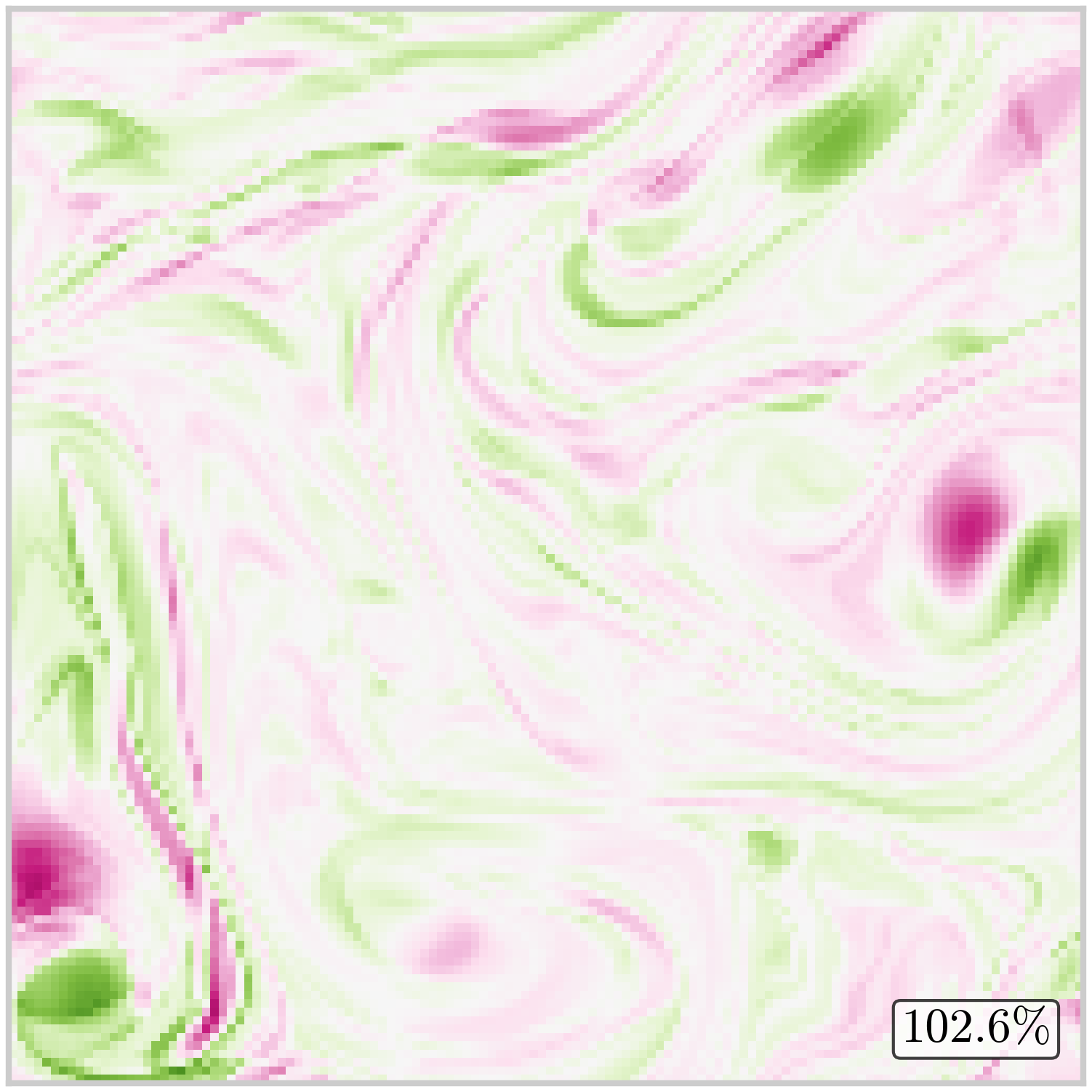}
    \panel
    {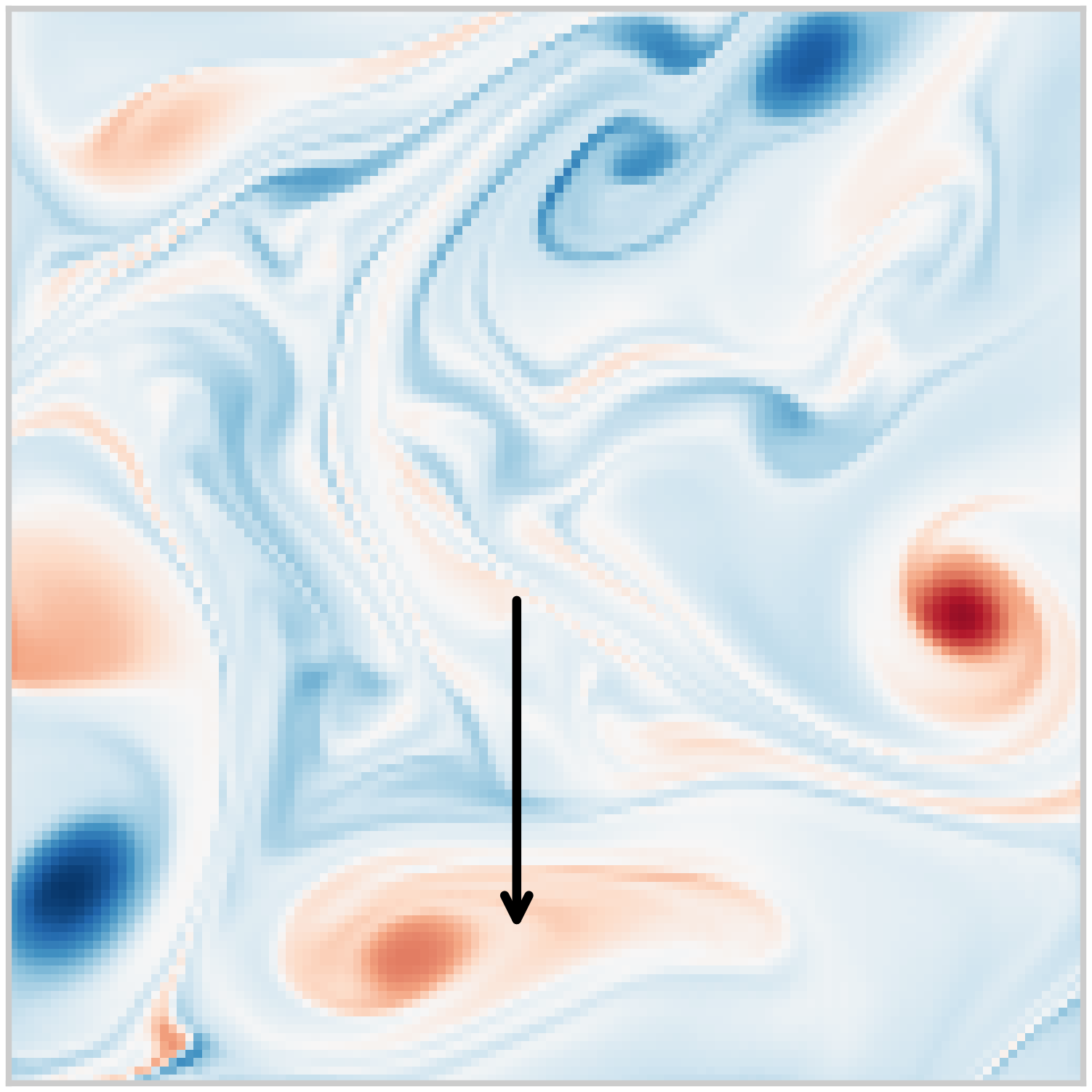}
    {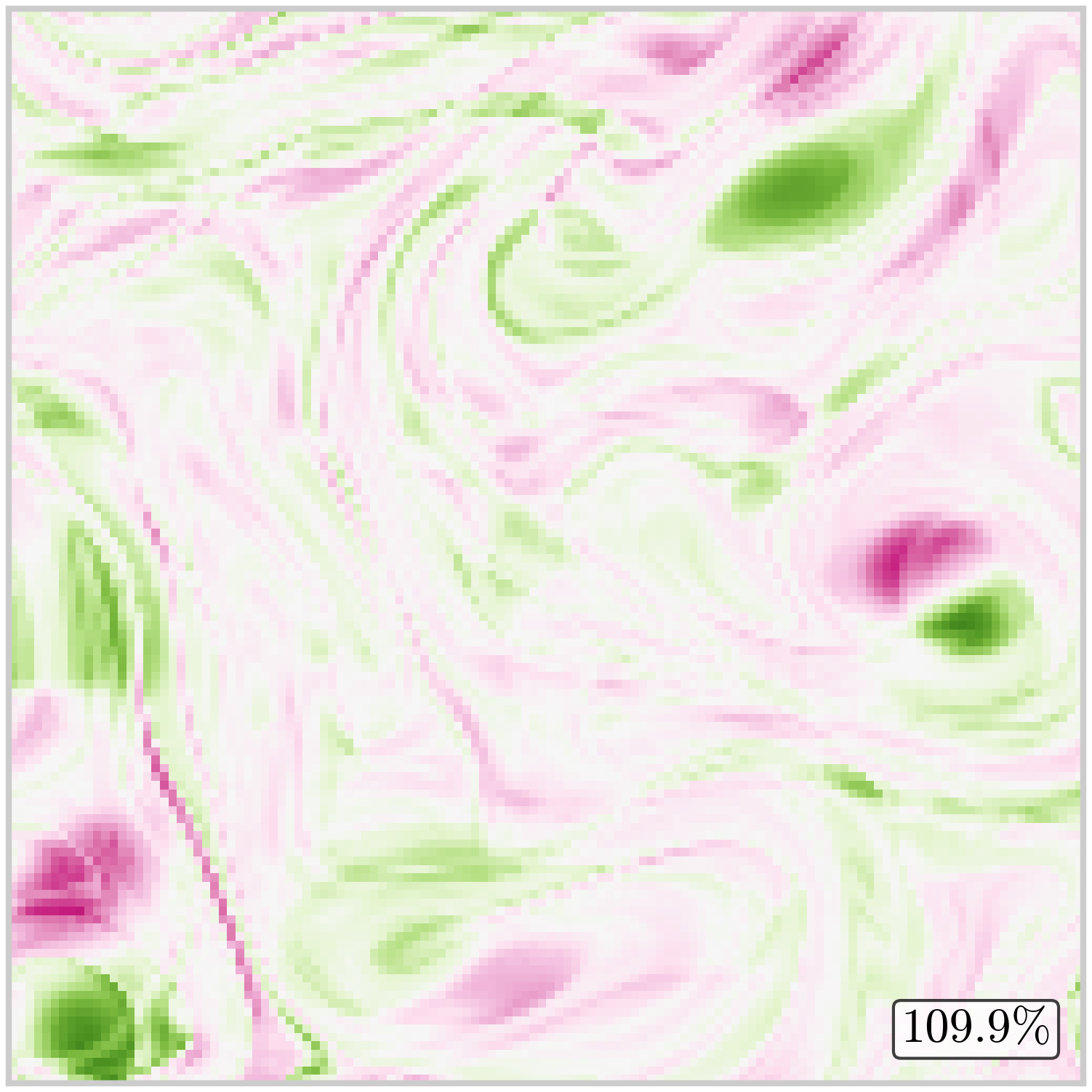}

    \vspace{1mm}

    \hspace*{\labw}%
    \includegraphics[width=0.32\textwidth]{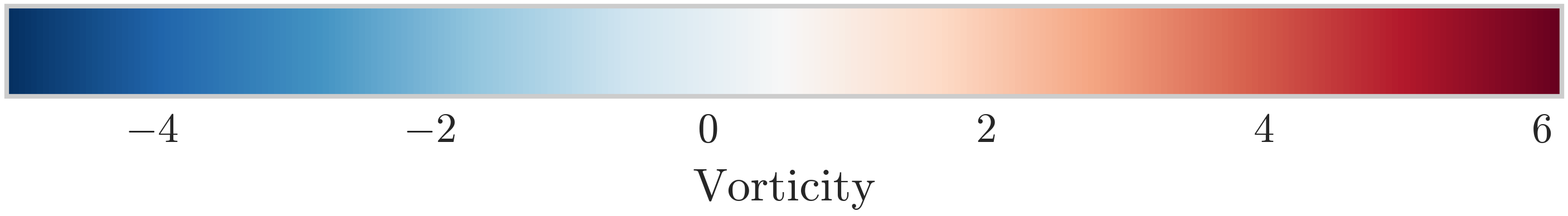}
    \hspace{4mm}
    \includegraphics[width=0.32\textwidth]{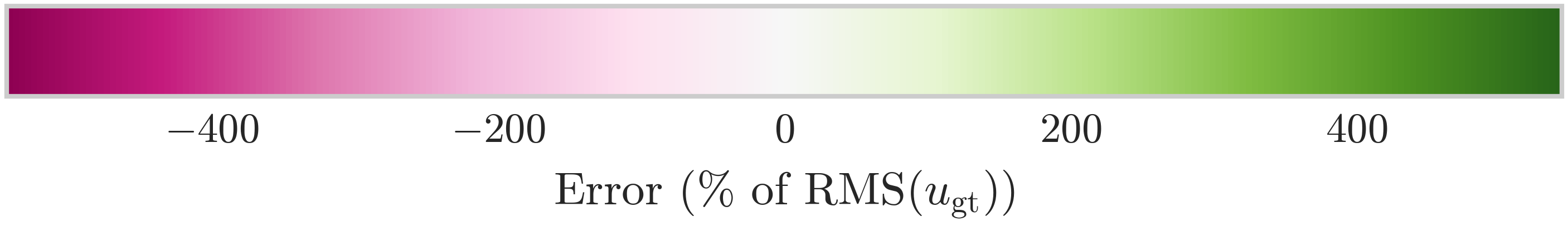}

    \caption{
    Qualitative autoregressive rollout comparison on the 2D Navier--Stokes benchmark.
    Columns show prediction horizons $\Delta t \in \{3,6,9,12\}$.
    For each model, the upper image in each cell is the predicted vorticity field and the lower image is the corresponding error map.
    MFC-UNet remains closer to the ground truth, while FNO and UNet accumulate larger errors over longer rollouts.
    }
    \label{fig:ns_appendix_rollout_panels}
\end{figure*}

\subsection{WeatherBench2}
\label{app:wb2_qualitative}

We provide additional WeatherBench2 rollout examples at step 50 for \(U10\), \(V10\), and \(T850\).
Each figure compares the ground truth, MFC-parabolic, and FNO at the training resolution \(64\times32\) and under zero-shot transfer to \(240\times121\).
These examples complement the cross-resolution metrics in Table~\ref{tab:wb2_crossres}.

We choose the displayed initial conditions using a median-improvement criterion, rather than selecting examples by visual inspection.
For each variable, we evaluate FNO and MFC-parabolic on test-set initial conditions at rollout step 50, using a stride of 28 timesteps.
For each initial condition, we compute
\[
\Delta
=
\mathrm{mean}_{\mathrm{lat}}\!\left(|u_{\mathrm{FNO}} - u_{\mathrm{true}}|\right)
-
\mathrm{mean}_{\mathrm{lat}}\!\left(|u_{\mathrm{MFC}} - u_{\mathrm{true}}|\right),
\]
where the mean is latitude-weighted.
Positive values indicate lower average absolute error for MFC-parabolic than for FNO.
For each variable, we show the initial condition whose score is closest to the median of the test-set distribution.
This makes the snapshots representative of typical behavior, rather than best-case or worst-case examples.

\begin{figure}[!t]
    \centering

    {\small \textbf{\(U10\), training resolution \(64\times32\)}}\\[0.25em]
    \begin{minipage}[t]{0.2\textwidth}
        \centering
        \includegraphics[width=\textwidth]{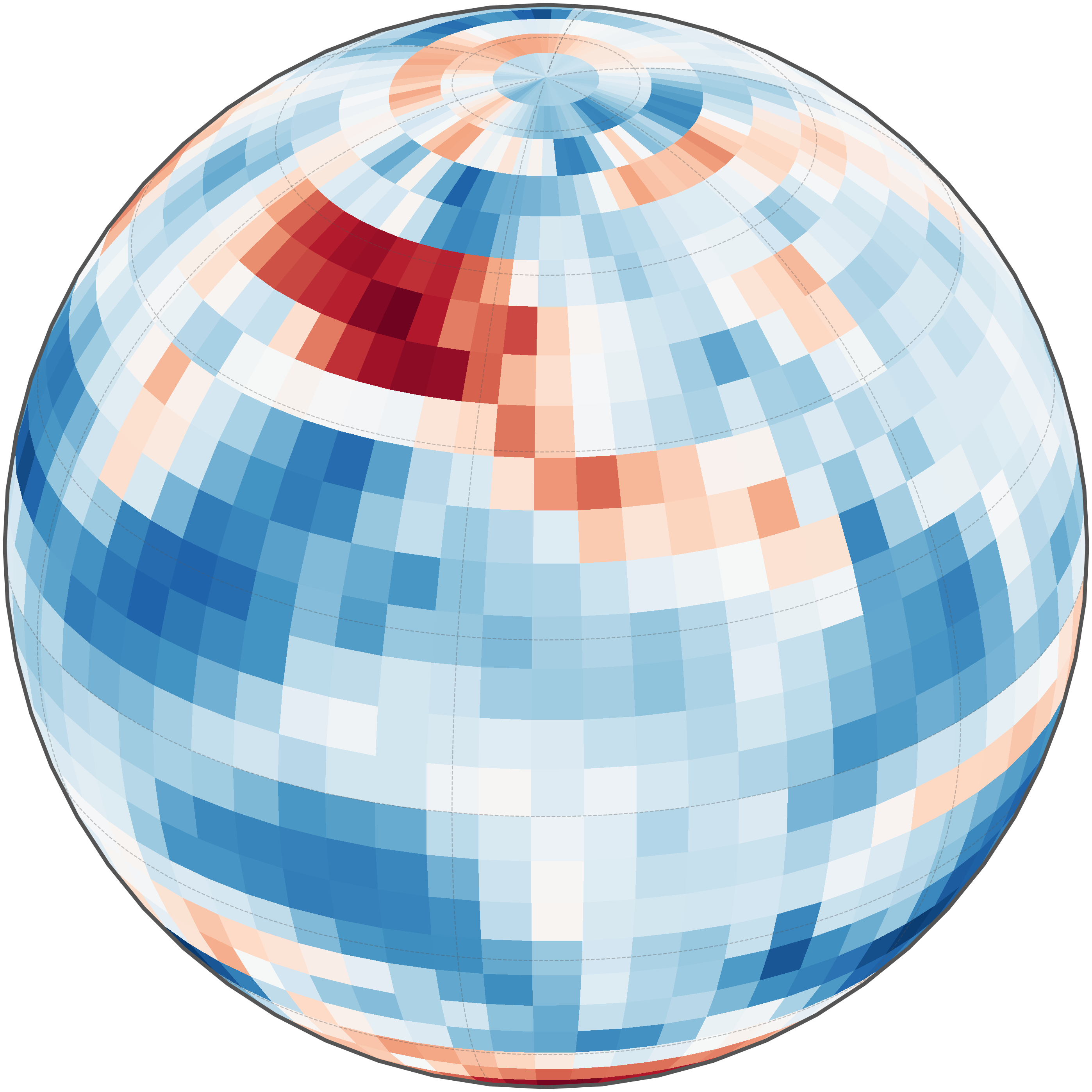}
        
        {\scriptsize Ground truth}
    \end{minipage}
    \hspace{0.008\textwidth}
    \begin{minipage}[t]{0.2\textwidth}
        \centering
        \includegraphics[width=\textwidth]{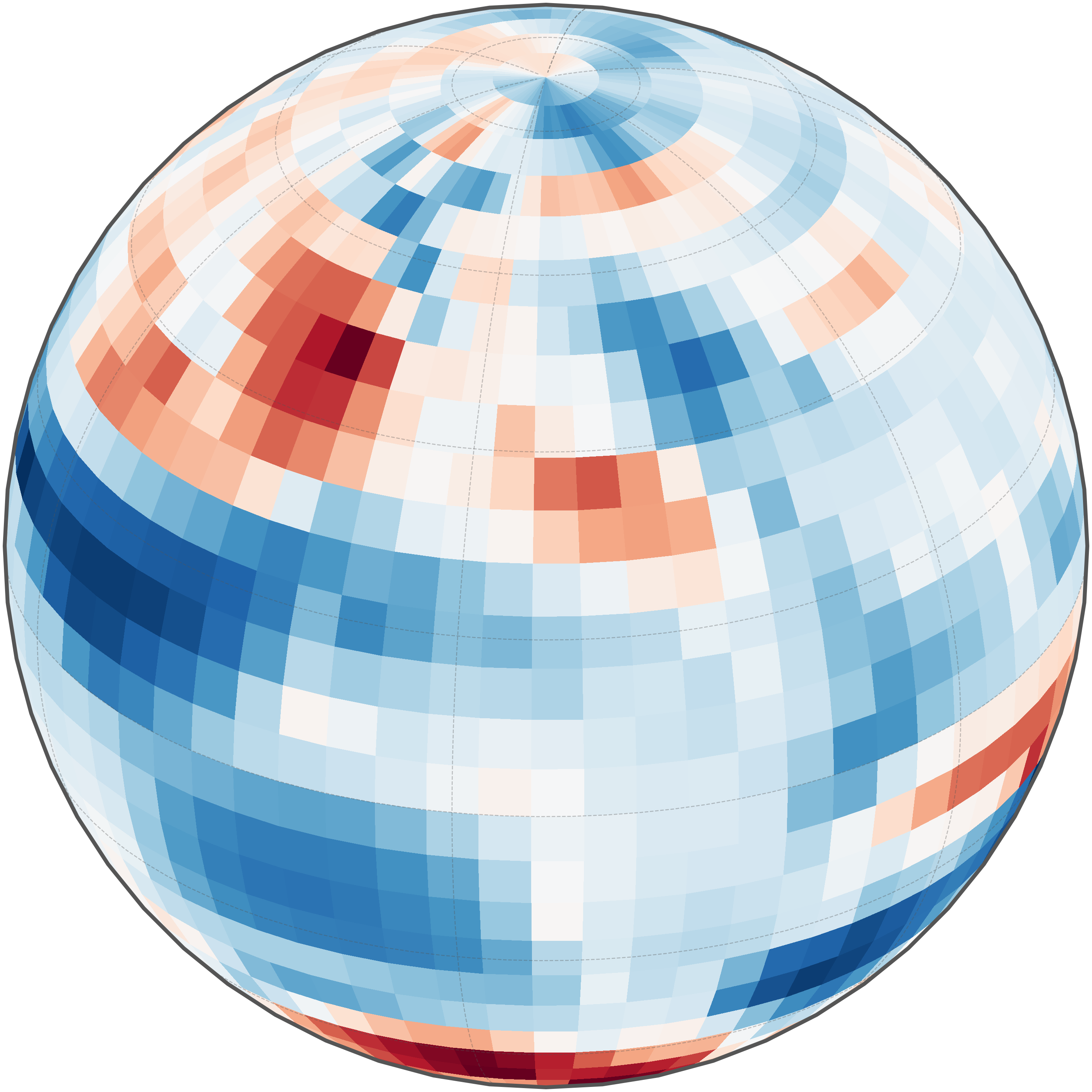}
        
        {\scriptsize MFC-resnet}
    \end{minipage}
    \hspace{0.008\textwidth}
    \begin{minipage}[t]{0.2\textwidth}
        \centering
        \includegraphics[width=\textwidth]{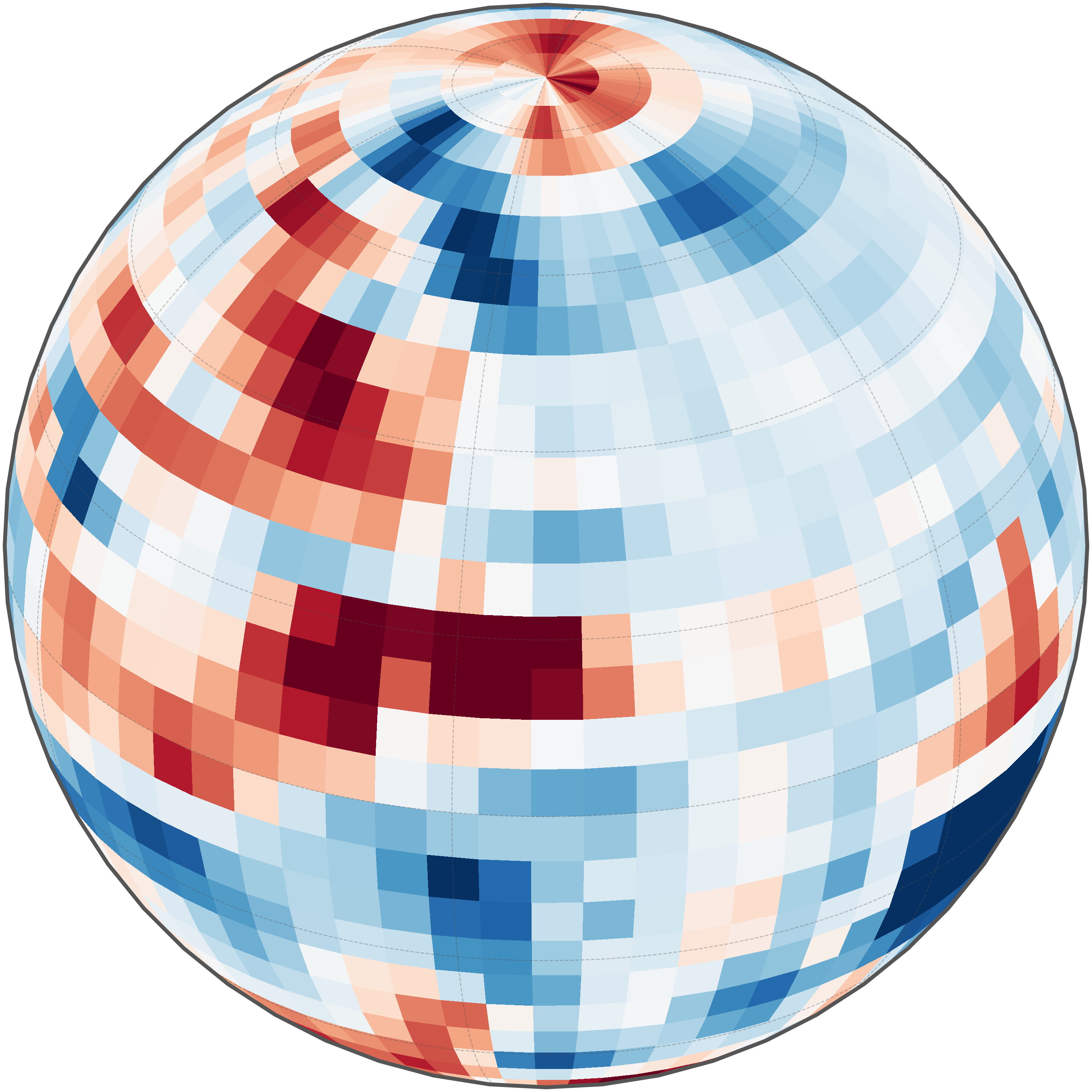}
        
        {\scriptsize FNO}
    \end{minipage}

    \vspace{0.2em}
    \includegraphics[width=0.30\textwidth]{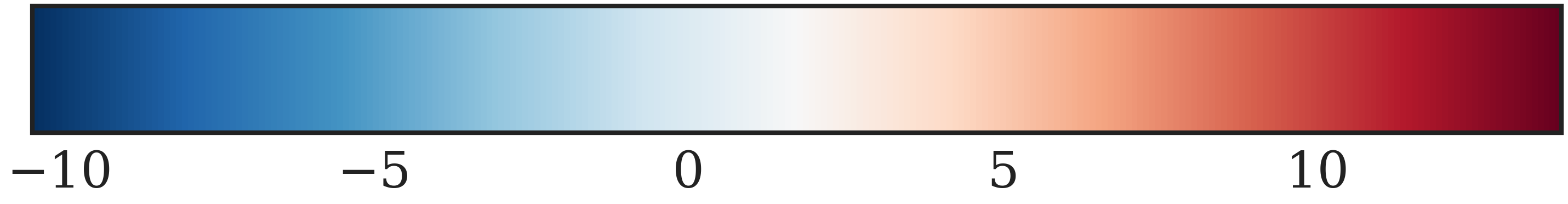}
    \vspace{0.7em}

    {\small \textbf{\(U10\), zero-shot transfer to \(240\times121\)}}\\[0.25em]
    \begin{minipage}[t]{0.2\textwidth}
        \centering
        \includegraphics[width=\textwidth]{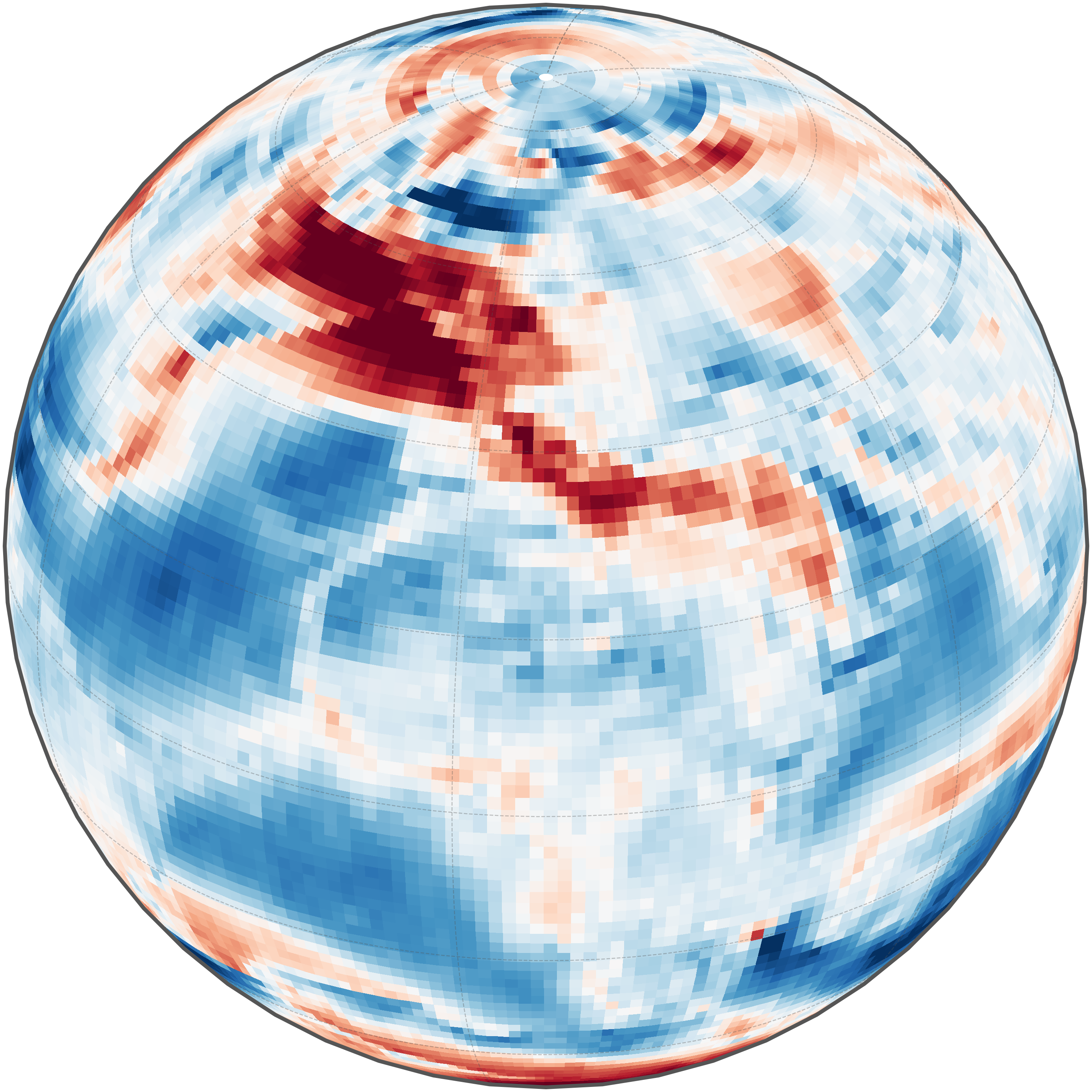}
        
        {\scriptsize Ground truth}
    \end{minipage}
    \hspace{0.008\textwidth}
    \begin{minipage}[t]{0.2\textwidth}
        \centering
        \includegraphics[width=\textwidth]{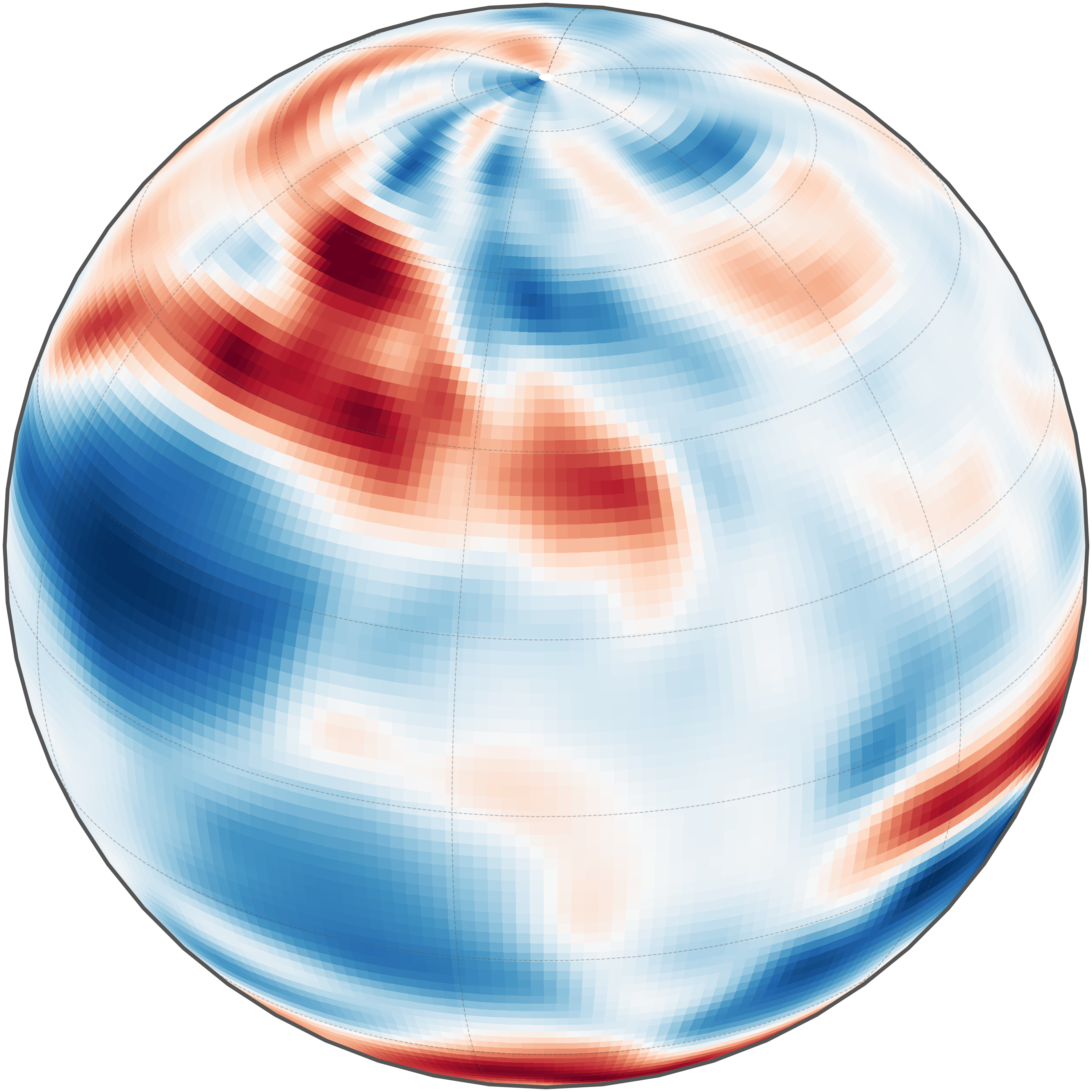}
        
        {\scriptsize MFC-resnet}
    \end{minipage}
    \hspace{0.008\textwidth}
    \begin{minipage}[t]{0.2\textwidth}
        \centering
        \includegraphics[width=\textwidth]{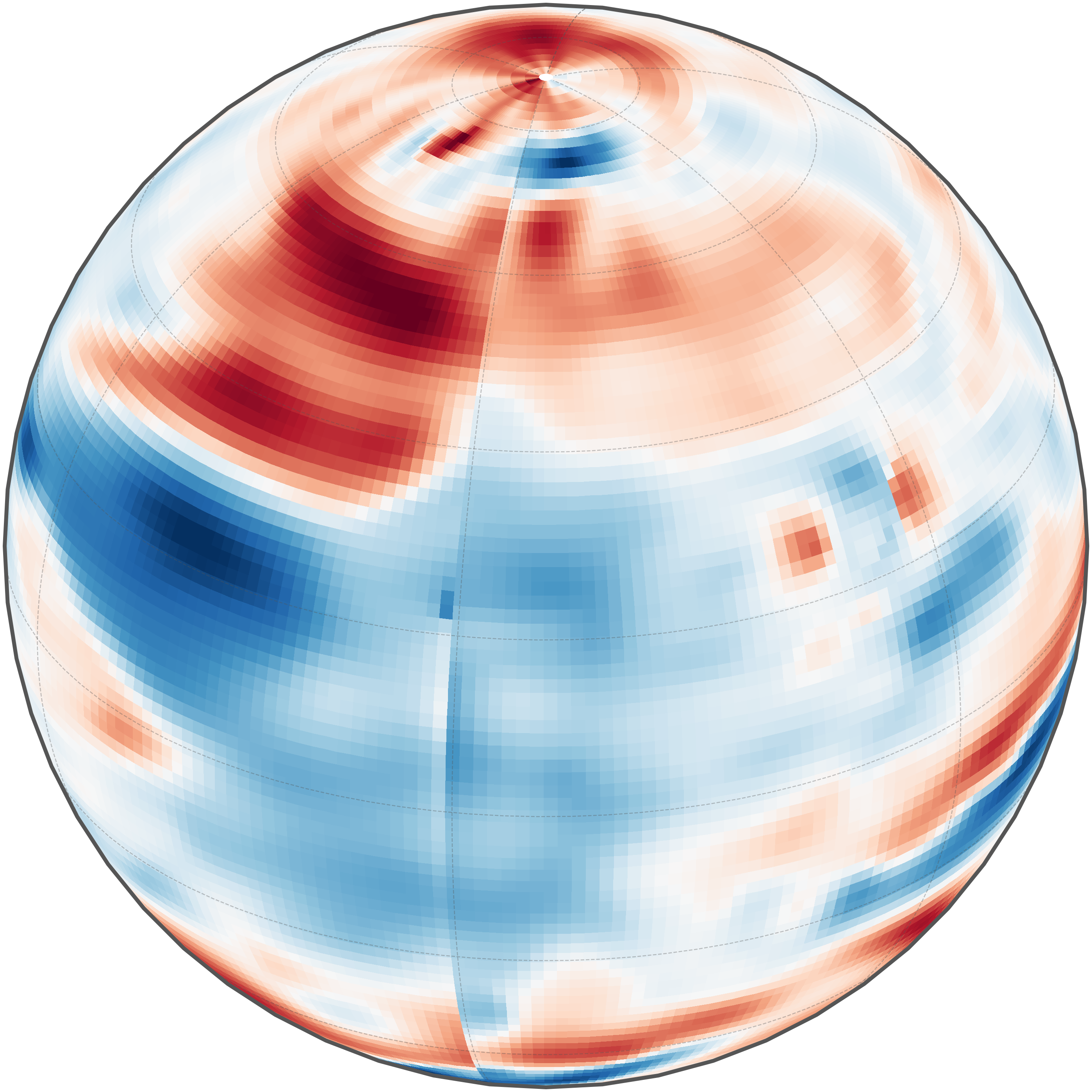}
        
        {\scriptsize FNO}
    \end{minipage}

    \vspace{0.2em}
    \includegraphics[width=0.30\textwidth]{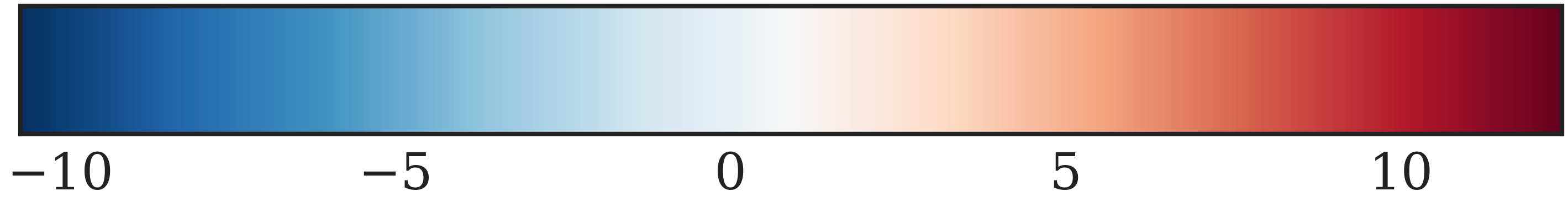}

    \caption{
    Representative \(U10\) rollout at step 50.
    The initial condition is selected by the median-improvement criterion described above.
    MFC-parabolic remains closer to the ground truth at both resolutions, while FNO shows stronger distortion after rollout.
    }
    \label{fig:appendix_wb2_u10}
\end{figure}

\begin{figure}[h]
    \centering

    {\small \textbf{\(V10\), training resolution \(64\times32\)}}\\[0.25em]
    \begin{minipage}[t]{0.2\textwidth}
        \centering
        \includegraphics[width=\textwidth]{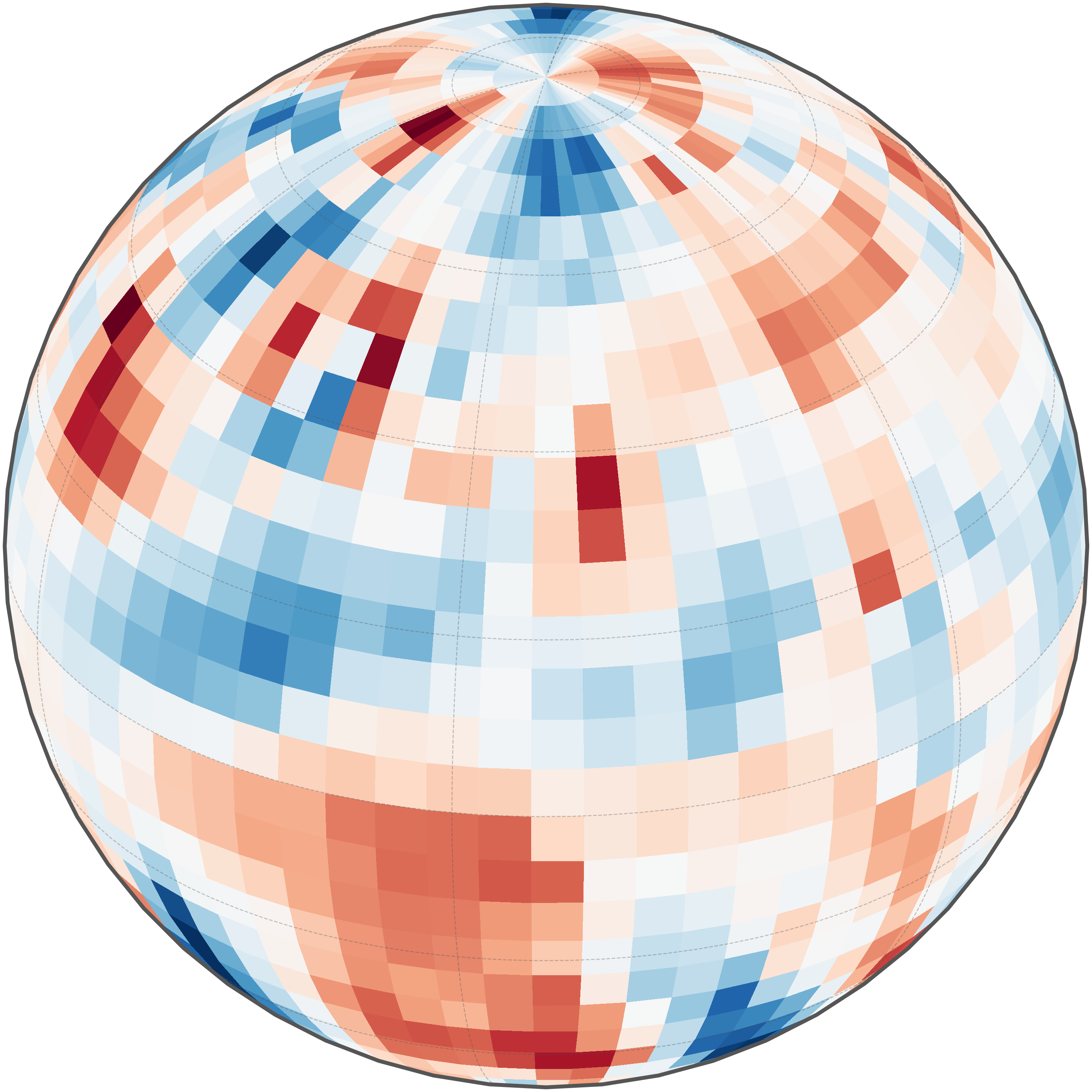}
        
        {\scriptsize Ground truth}
    \end{minipage}
    \hspace{0.008\textwidth}
    \begin{minipage}[t]{0.2\textwidth}
        \centering
        \includegraphics[width=\textwidth]{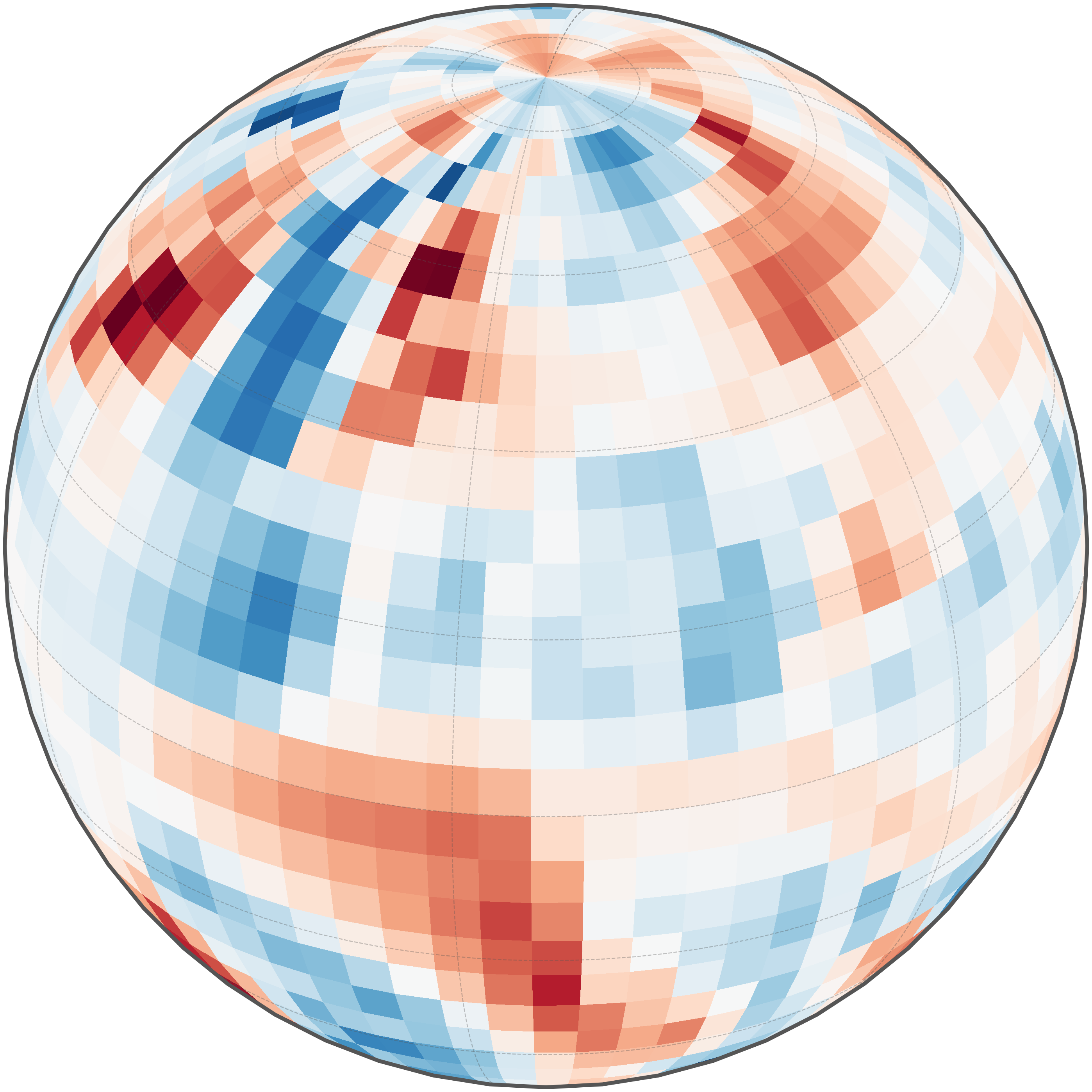}
        
        {\scriptsize MFC-resnet}
    \end{minipage}
    \hspace{0.008\textwidth}
    \begin{minipage}[t]{0.2\textwidth}
        \centering
        \includegraphics[width=\textwidth]{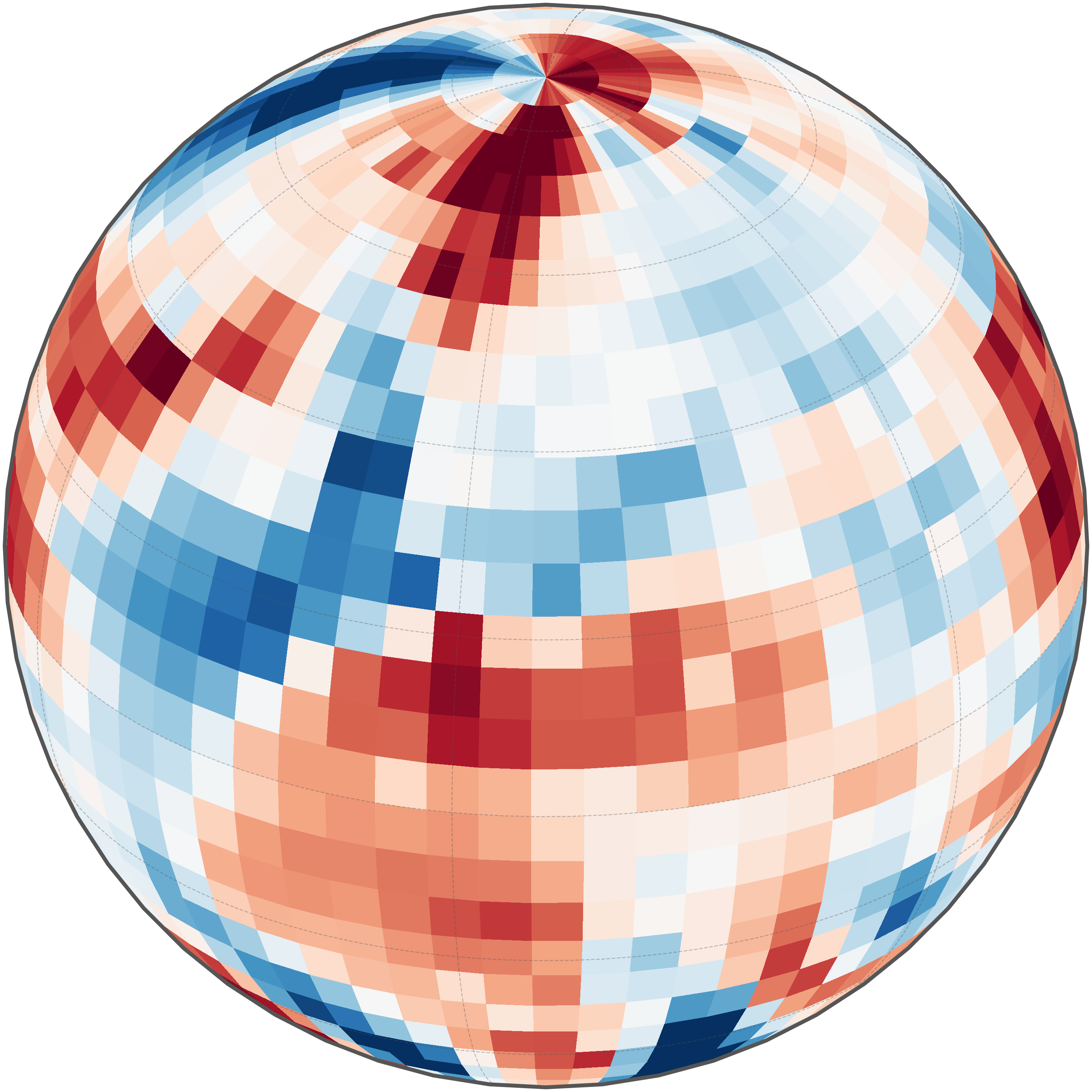}
        
        {\scriptsize FNO}
    \end{minipage}

    \vspace{0.2em}
    \includegraphics[width=0.30\textwidth]{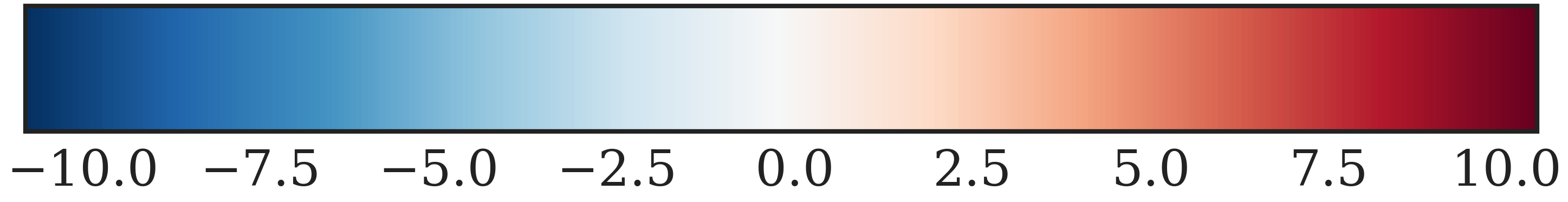}
    \vspace{0.7em}

    {\small \textbf{\(V10\), zero-shot transfer to \(240\times121\)}}\\[0.25em]
    \begin{minipage}[t]{0.2\textwidth}
        \centering
        \includegraphics[width=\textwidth]{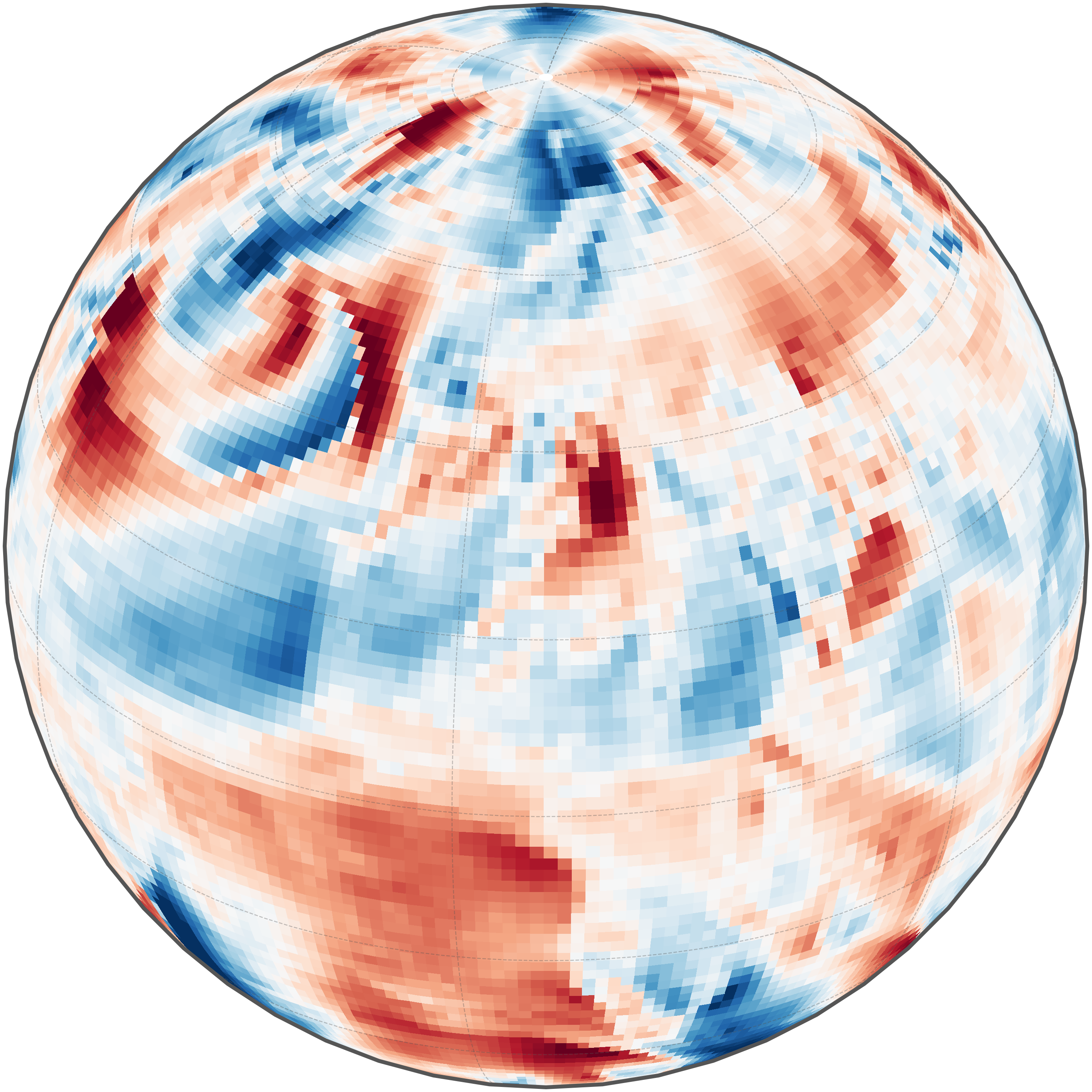}
        
        {\scriptsize Ground truth}
    \end{minipage}
    \hspace{0.008\textwidth}
    \begin{minipage}[t]{0.2\textwidth}
        \centering
        \includegraphics[width=\textwidth]{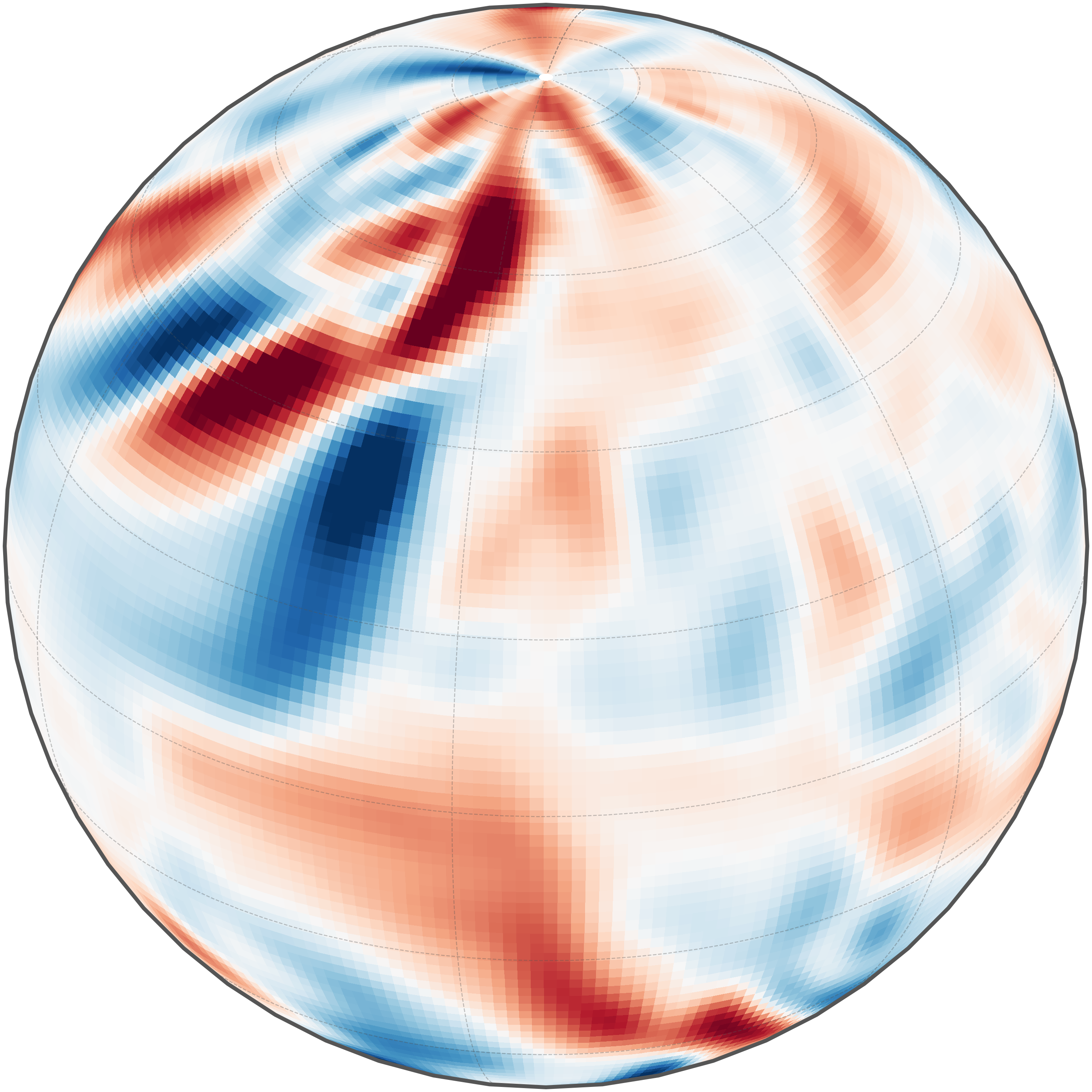}
        
        {\scriptsize MFC-resnet}
    \end{minipage}
    \hspace{0.008\textwidth}
    \begin{minipage}[t]{0.2\textwidth}
        \centering
        \includegraphics[width=\textwidth]{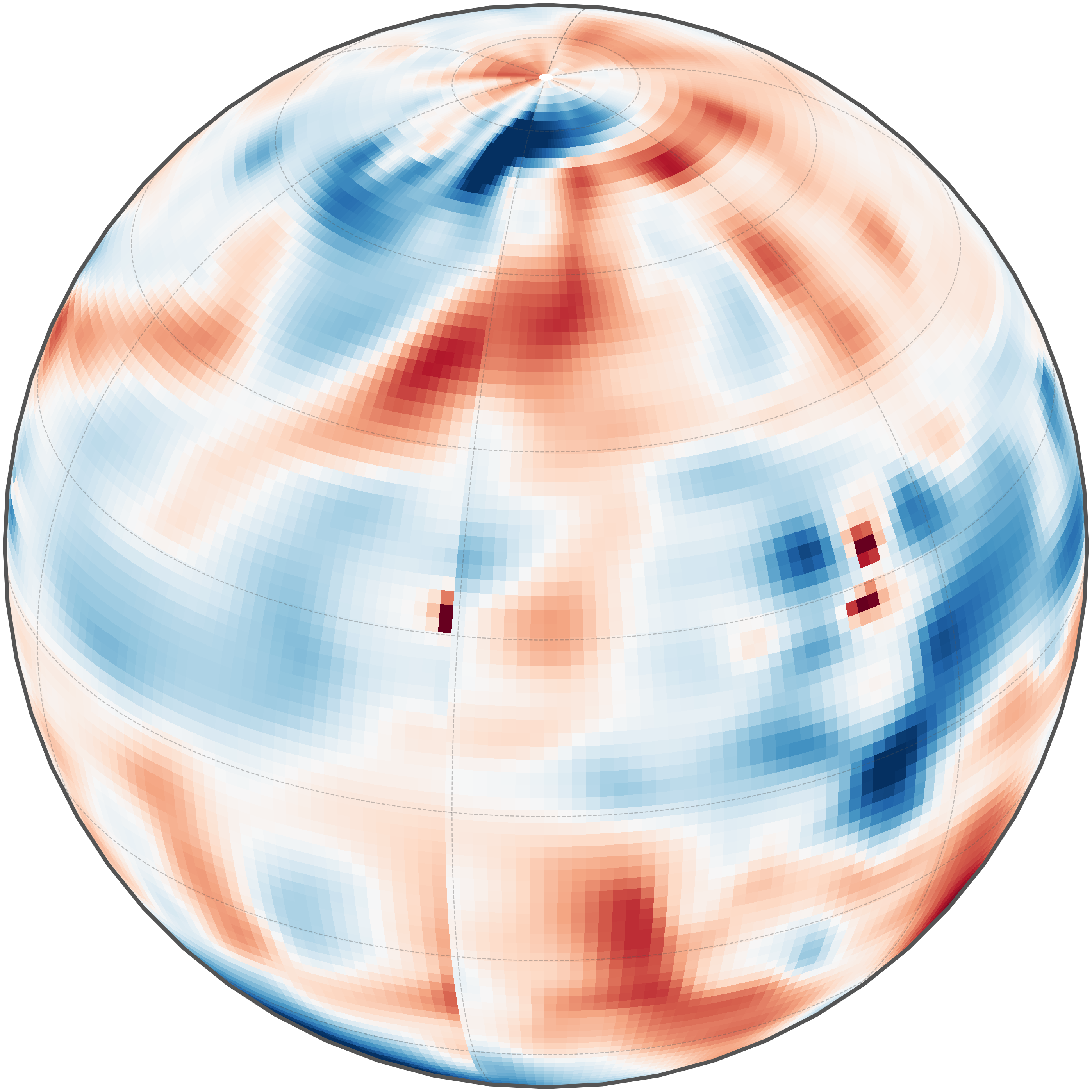}
        
        {\scriptsize FNO}
    \end{minipage}

    \vspace{0.2em}
    \includegraphics[width=0.30\textwidth]{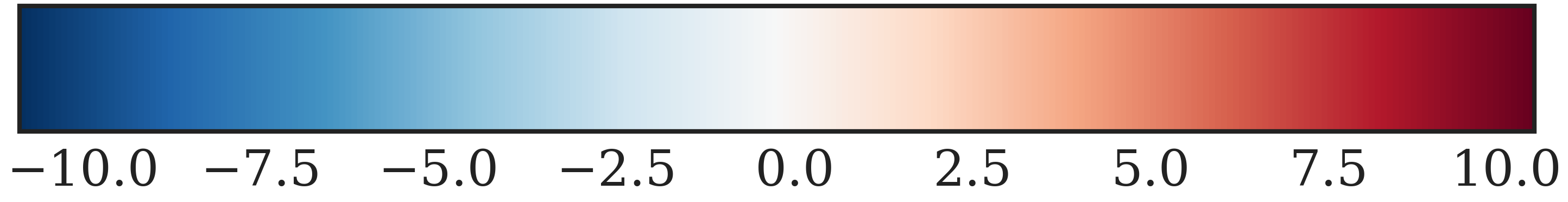}

    \caption{
    Representative \(V10\) rollout at step 50.
    The initial condition is selected by the median-improvement criterion described above.
    MFC-parabolic better preserves the spatial structure across both the training and zero-shot resolutions.
    }
    \label{fig:appendix_wb2_v10}
\end{figure}

\begin{figure}[h]
    \centering

    {\small \textbf{\(T850\), training resolution \(64\times32\)}}\\[0.25em]
    \begin{minipage}[t]{0.2\textwidth}
        \centering
        \includegraphics[width=\textwidth]{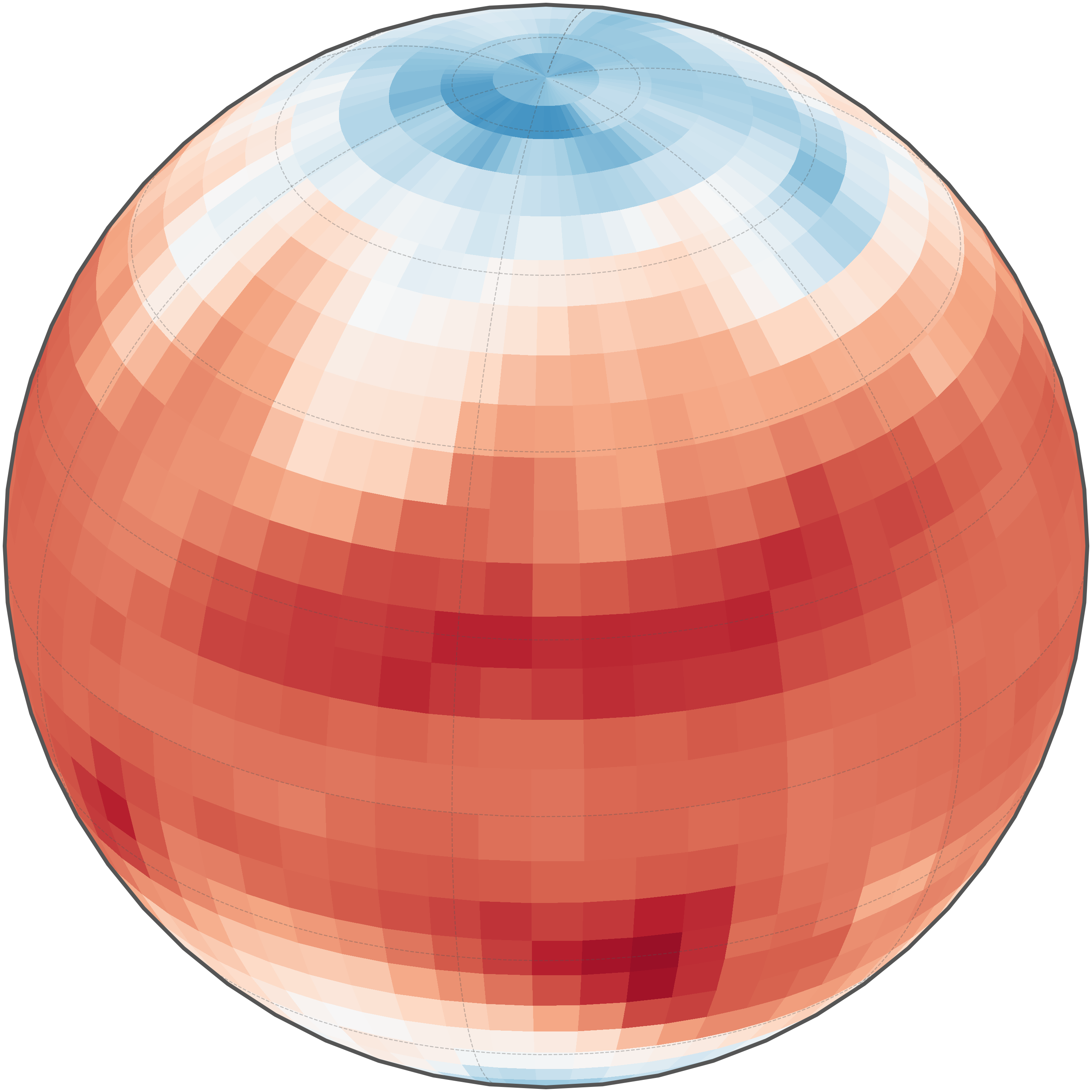}
        
        {\scriptsize Ground truth}
    \end{minipage}
    \hspace{0.008\textwidth}
    \begin{minipage}[t]{0.2\textwidth}
        \centering
        \includegraphics[width=\textwidth]{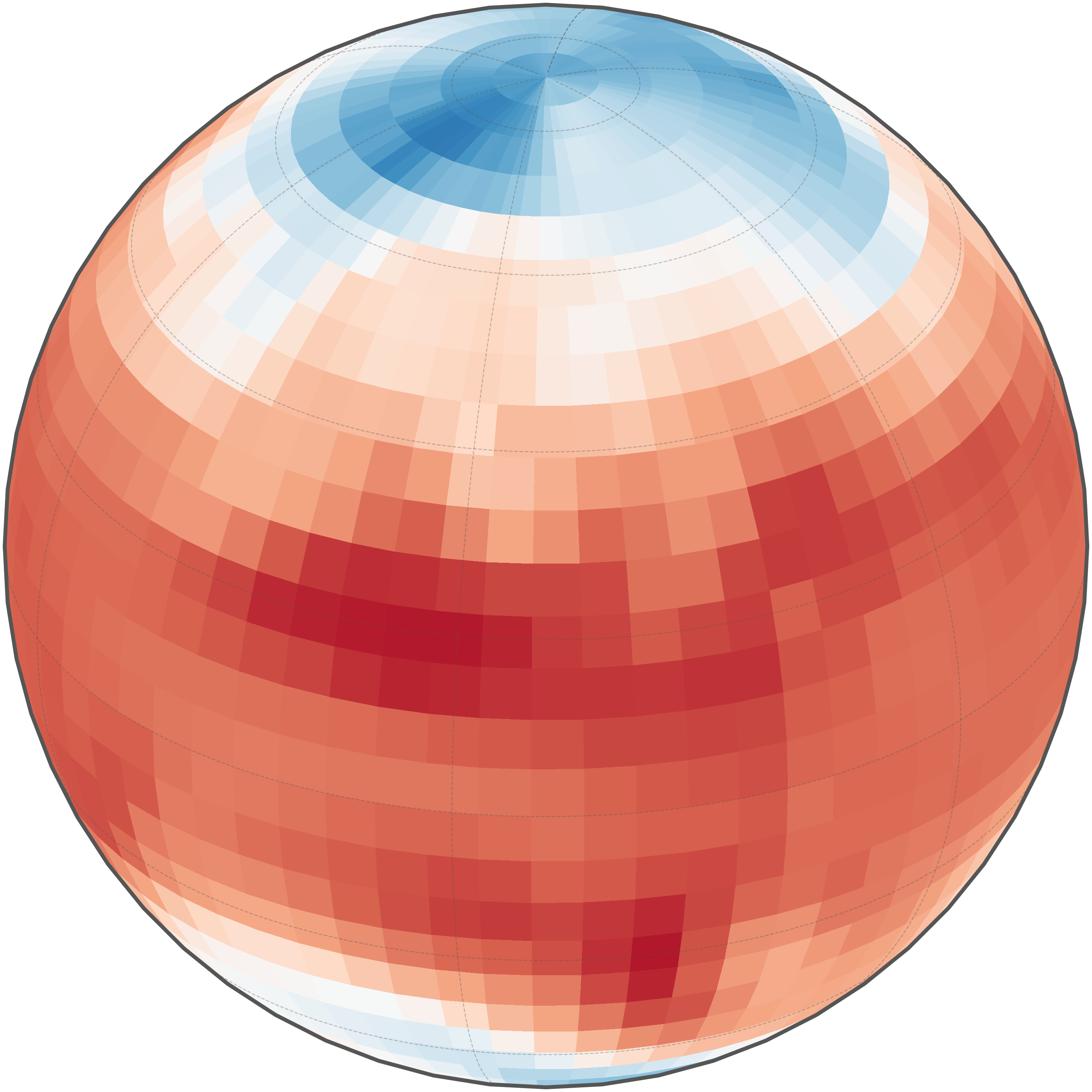}
        
        {\scriptsize MFC-resnet}
    \end{minipage}
    \hspace{0.008\textwidth}
    \begin{minipage}[t]{0.2\textwidth}
        \centering
        \includegraphics[width=\textwidth]{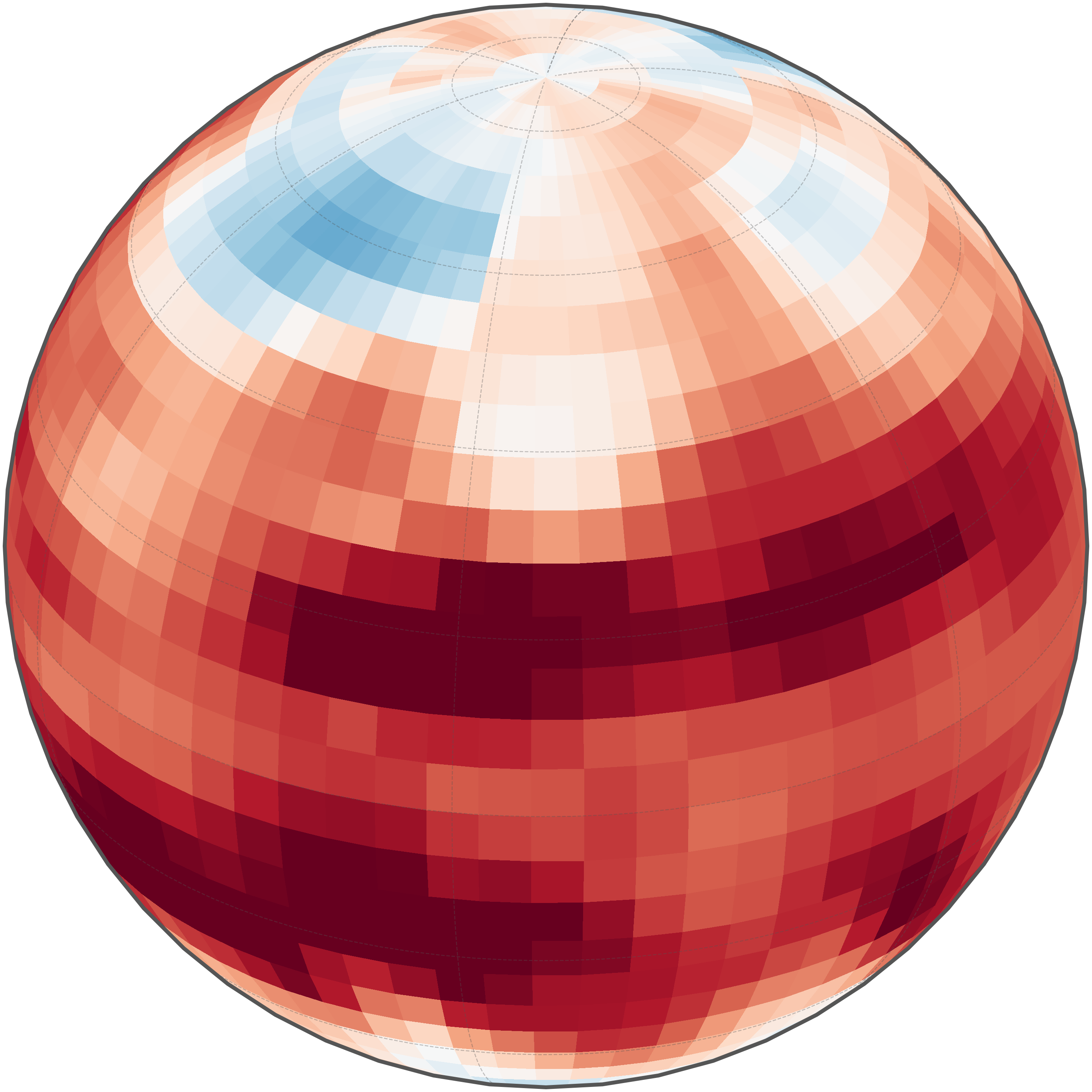}
        
        {\scriptsize FNO}
    \end{minipage}

    \vspace{0.2em}
    \includegraphics[width=0.30\textwidth]{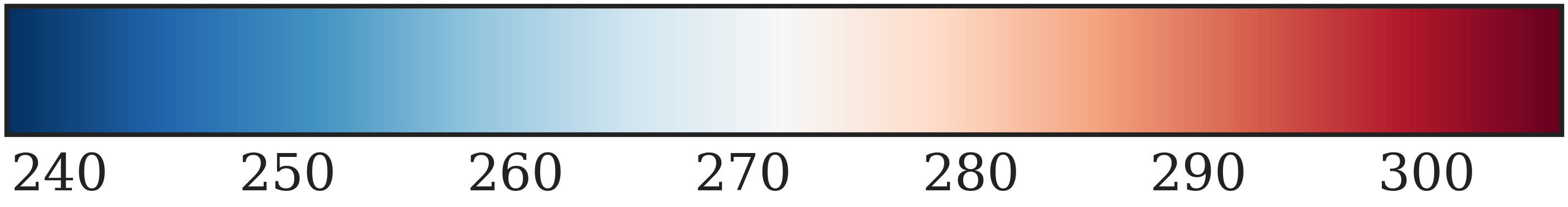}
    \vspace{0.7em}

    {\small \textbf{\(T850\), zero-shot transfer to \(240\times121\)}}\\[0.25em]
    \begin{minipage}[t]{0.2\textwidth}
        \centering
        \includegraphics[width=\textwidth]{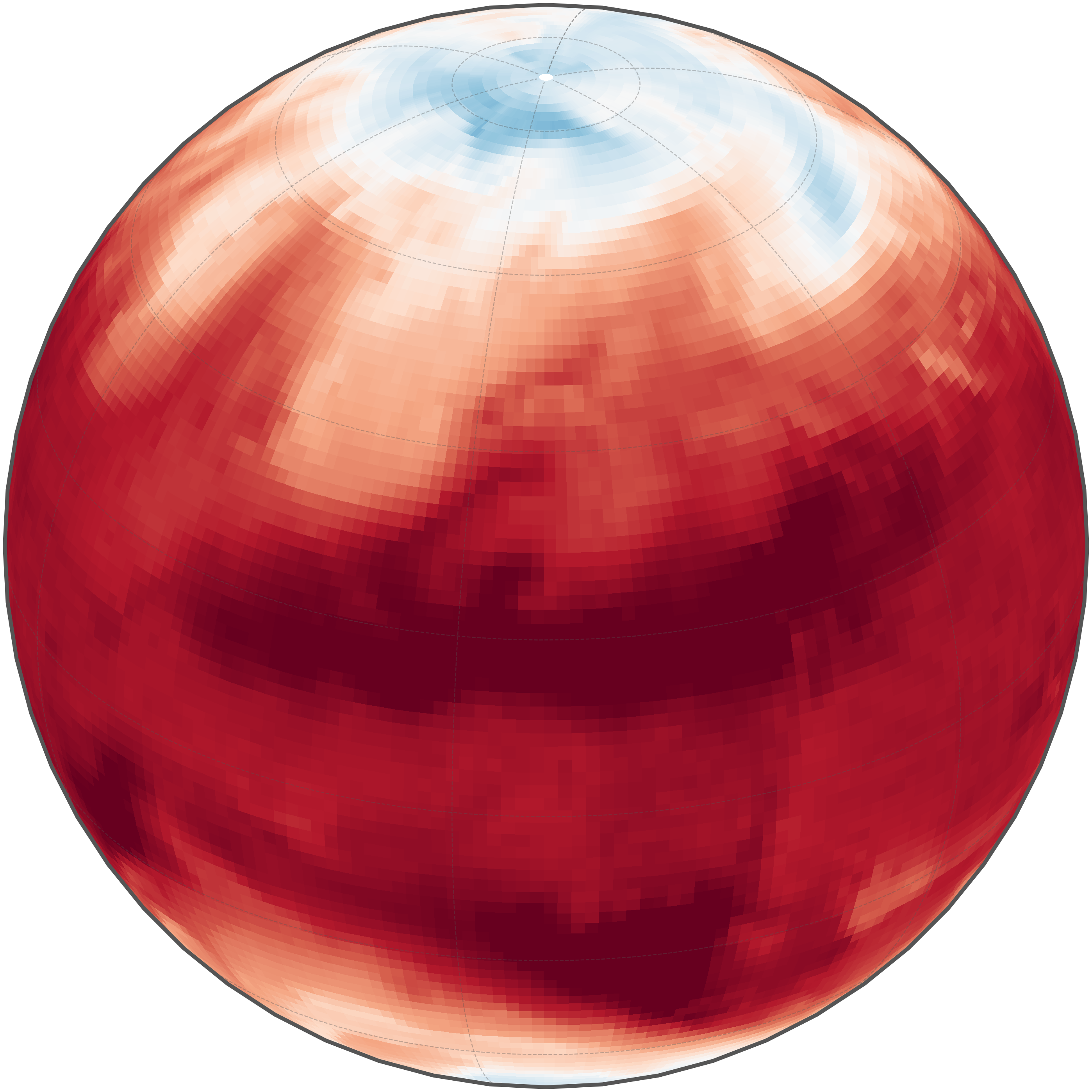}
        
        {\scriptsize Ground truth}
    \end{minipage}
    \hspace{0.008\textwidth}
    \begin{minipage}[t]{0.2\textwidth}
        \centering
        \includegraphics[width=\textwidth]{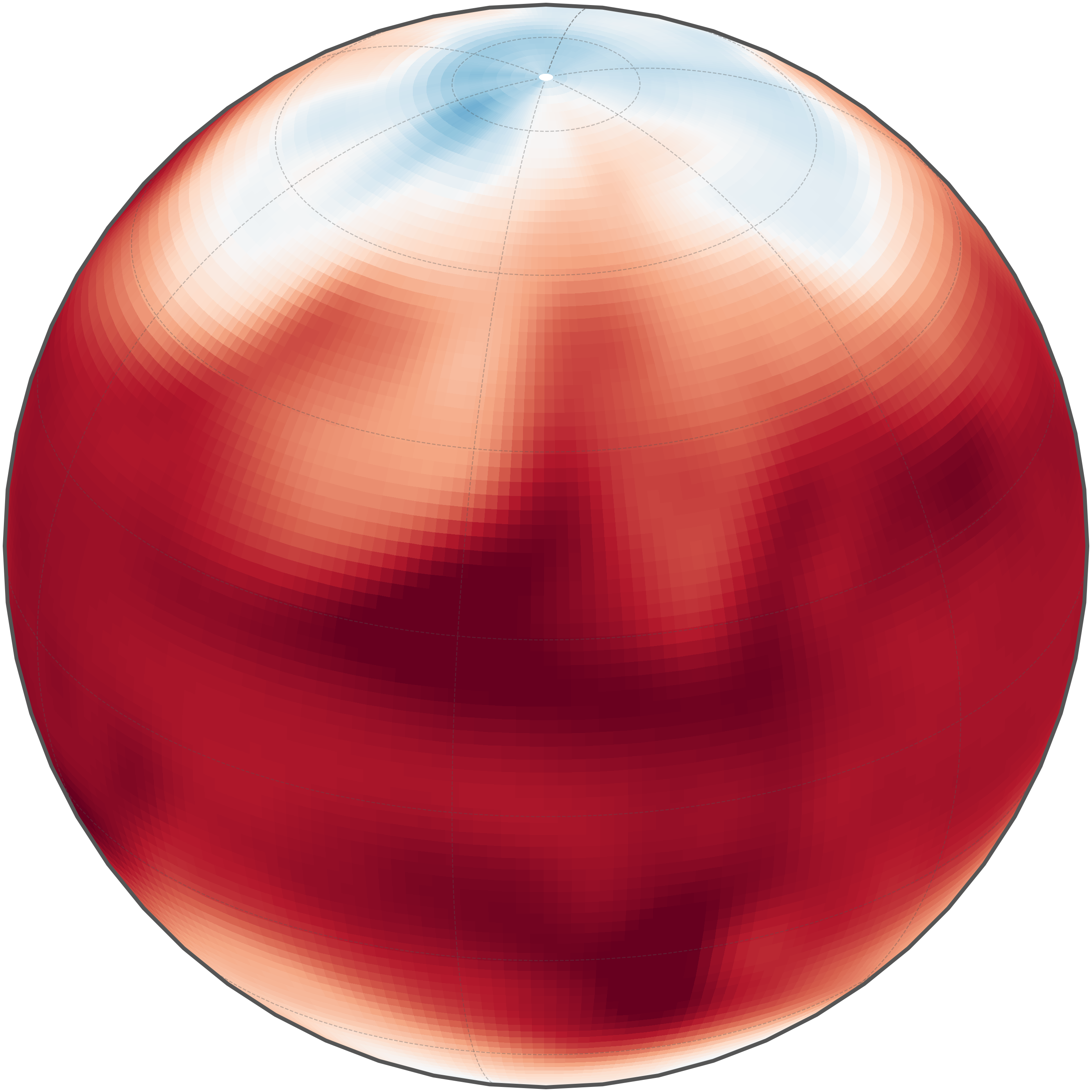}
        
        {\scriptsize MFC-resnet}
    \end{minipage}
    \hspace{0.008\textwidth}
    \begin{minipage}[t]{0.2\textwidth}
        \centering
        \includegraphics[width=\textwidth]{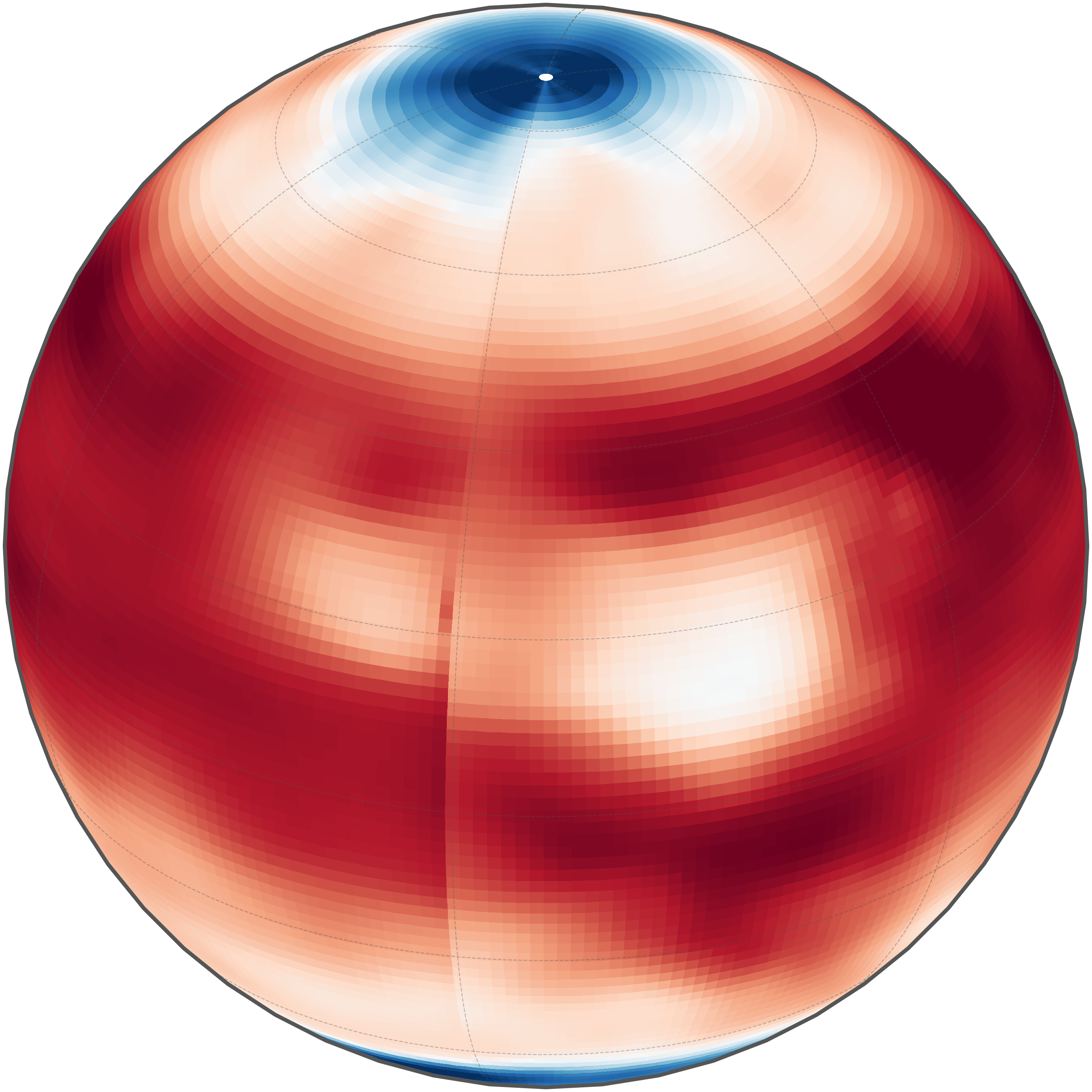}
        
        {\scriptsize FNO}
    \end{minipage}

    \vspace{0.2em}
    \includegraphics[width=0.30\textwidth]{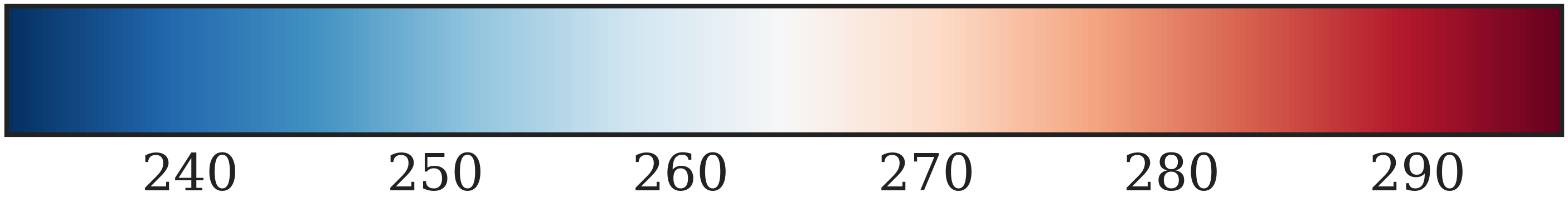}

    \caption{
    Representative \(T850\) rollout at step 50.
    The initial condition is selected by the median-improvement criterion described above.
    The gap between MFC-parabolic and FNO becomes more visible under zero-shot transfer.
    }
    \label{fig:appendix_wb2_t850}
\end{figure}

\newpage
\subsection{Training and Architecture Details}
\label{app:training-details}

\paragraph{Navier--Stokes dataset.}
The Navier--Stokes experiment uses the \(Re=5000\) Kolmogorov-flow split from~\cite{kovachki2023data}. 
The data consist of 100 trajectories with 500 time steps on a \(128\times128\) grid. 
The dynamics are governed by the incompressible Navier--Stokes equations on the periodic domain \(D=(0,2\pi)^2\),
\[
\partial_t \mathbf{u}+\mathbf{u}\cdot\nabla \mathbf{u}
-\nu\Delta\mathbf{u}
=
-\nabla p+\sin(my)\hat{\mathbf{x}},
\qquad
\nabla\cdot\mathbf{u}=0 .
\]
Following common practice for this benchmark, we predict scalar vorticity rather than velocity,
\[
\omega=(\nabla\times\mathbf{u})\cdot\hat{\mathbf{z}}
=
\partial_x u_y-\partial_y u_x .
\]
All models are trained with teacher forcing to predict the next vorticity field from a context window of 10 previous frames, and are evaluated autoregressively.

\begin{table}[h]
\centering
\caption{Training configuration for the Navier--Stokes Kolmogorov-flow experiment.}
\label{tab:ns-training}
\begin{tabular}{ll}
\toprule
\textbf{Setting} & \textbf{Value} \\
\midrule
Resolution & $128 \times 128$ \\
Context frames & 10 \\
Prediction & Single-step teacher-forced \\
Optimizer & Adam \\
Learning rate & $5 \times 10^{-4}$ \\
Scheduler & StepLR, step size \(50\), \(\gamma=0.5\) \\
Loss & MSE \\
Epochs & 200 \\
Batch size & 16 \\
Gradient clipping & 1.0 \\
Seed & 1 \\
\midrule
\multicolumn{2}{l}{\textbf{Model architectures (\({\sim}1.1\)M params each)}}\\
\midrule
FNO & 1.18 M\\ 
F-FNO & 1.12 M \\ 
UNet & 1.18 M\\ 
MFC-UNet & 1.15 M \\ 
\bottomrule
\end{tabular}
\end{table}

\begin{table}[h]
\centering
\caption{Training configuration for the WeatherBench2 experiment (ERA5; U10, V10, T850).}
\label{tab:wb2-training}
\begin{tabular}{ll}
\toprule
\textbf{Setting} & \textbf{Value} \\
\midrule
Training resolution & $64 \times 32$ \\
Zero-shot test resolution & $240 \times 121$ \\
History frames & 4 \\
Prediction & Single-step \\
Optimizer & Adam \\
Learning rate & $10^{-3}$ \\
Scheduler & Cosine annealing \\
Loss & Latitude-weighted MSE \\
Epochs & ${\sim}400$ \\
Batch size & 32 \\
Seed & 42 \\
\midrule
\multicolumn{2}{l}{\textbf{Model architectures (${\sim}500$K params each)}} \\
\midrule
FNO & 513K params \\
AFNO & 513K params \\
MFC-ResNet & 498K params \\
\bottomrule
\end{tabular}
\end{table}

\paragraph{Burgers dataset.}
The Burgers experiment uses the 2D viscous Burgers equation
\[
\partial_t u + (u\cdot\nabla)u = \nu \Delta u
\]
on the periodic domain \([0,2\pi]^2\). 
The training set contains 200 trajectories at \(64\times64\) resolution with
\(\nu\in[10^{-2},2{\cdot}10^{-2}]\). 
We evaluate on four test sets of 50 trajectories each: in-distribution, OOD viscosity with
\(\nu\in[10^{-3},7{\cdot}10^{-3}]\), OOD resolution at \(128\times128\), and the combined OOD viscosity/resolution setting. 
All models use 5 context frames and are trained with a 5-step unrolled loss, then evaluated with 100-step autoregressive rollouts.

\begin{table}[h]
\centering
\caption{Training configuration for the 2D Burgers experiment.}
\label{tab:burgers-training}
\begin{tabular}{ll}
\toprule
\textbf{Setting} & \textbf{Value} \\
\midrule
Training resolution & $64 \times 64$ \\
Cross-resolution test & $128 \times 128$ \\
Context frames & 5 \\
Unroll steps (training) & 5 \\
Optimizer & Adam \\
Learning rate & $5 \times 10^{-4}$ \\
Scheduler & Cosine annealing ($\mathrm{lr}_{\min}=10^{-6}$) \\
Loss & Relative \(L^2\) (5-step unrolled) \\
Epochs & 300 \\
Batch size & 16 \\
Gradient clipping & 0.5 \\
Weight decay & 0 \\
Seed & 1 \\
\midrule
\multicolumn{2}{l}{\textbf{Model architectures (\({\sim}1\)M params each); values from stored checkpoint args}} \\
\midrule
FNO & modes 18, width 20, 1.04M params \\
F-FNO &  988K params \\
MFC-UNet & 996K params \\ 
\bottomrule
\end{tabular}
\end{table}

\paragraph{Compute resources.}
All experiments were run on NVIDIA GeForce RTX 4090 GPUs with 24\,GB of VRAM, using one GPU per training run. The originally reported experiments used approximately 140 GPU-hours. Camera-ready additions (ablations, seed study, FNO modes-32 control) add approximately 110 GPU-hours. Including development runs, the project used approximately 460 GPU-hours.

\newpage
\FloatBarrier
\section*{NeurIPS Paper Checklist}

\begin{enumerate}

\item {\bf Claims}
    \item[] Question: Do the main claims made in the abstract and introduction accurately reflect the paper's contributions and scope?
    \item[] Answer: \answerYes{} 
    \item[] Justification: The abstract and Introduction claim three core contributions (smooth spectral attenuation, directional transport terms, and cross-resolution transfer), and these are directly evaluated in Experiments via long-horizon rollouts (Navier--Stokes, WeatherBench2, Burgers Equation).
    \item[] Guidelines:
    \begin{itemize}
        \item The answer \answerNA{} means that the abstract and introduction do not include the claims made in the paper.
        \item The abstract and/or introduction should clearly state the claims made, including the contributions made in the paper and important assumptions and limitations. A \answerNo{} or \answerNA{} answer to this question will not be perceived well by the reviewers. 
        \item The claims made should match theoretical and experimental results, and reflect how much the results can be expected to generalize to other settings. 
        \item It is fine to include aspirational goals as motivation as long as it is clear that these goals are not attained by the paper. 
    \end{itemize}

\item {\bf Limitations}
    \item[] Question: Does the paper discuss the limitations of the work performed by the authors?
    \item[] Answer: \answerYes{} 
    \item[] Justification: A dedicated \textbf{Limitations} paragraph in the Discussion section identifies the key scope boundaries of the work. It states that MFC's FFT-based evaluation restricts the method to periodic boundary conditions on rectangular domains, and that extending to irregular geometries or non-periodic boundaries is left to future work. It also notes that computational cost (runtime, memory, FLOPs) is not characterized, parameter counts are matched across models but no efficiency claim is made. 
    \item[] Guidelines:
    \begin{itemize}
        \item The answer \answerNA{} means that the paper has no limitation while the answer \answerNo{} means that the paper has limitations, but those are not discussed in the paper. 
        \item The authors are encouraged to create a separate ``Limitations'' section in their paper.
        \item The paper should point out any strong assumptions and how robust the results are to violations of these assumptions (e.g., independence assumptions, noiseless settings, model well-specification, asymptotic approximations only holding locally). The authors should reflect on how these assumptions might be violated in practice and what the implications would be.
        \item The authors should reflect on the scope of the claims made, e.g., if the approach was only tested on a few datasets or with a few runs. In general, empirical results often depend on implicit assumptions, which should be articulated.
        \item The authors should reflect on the factors that influence the performance of the approach. For example, a facial recognition algorithm may perform poorly when image resolution is low or images are taken in low lighting. Or a speech-to-text system might not be used reliably to provide closed captions for online lectures because it fails to handle technical jargon.
        \item The authors should discuss the computational efficiency of the proposed algorithms and how they scale with dataset size.
        \item If applicable, the authors should discuss possible limitations of their approach to address problems of privacy and fairness.
        \item While the authors might fear that complete honesty about limitations might be used by reviewers as grounds for rejection, a worse outcome might be that reviewers discover limitations that aren't acknowledged in the paper. The authors should use their best judgment and recognize that individual actions in favor of transparency play an important role in developing norms that preserve the integrity of the community. Reviewers will be specifically instructed to not penalize honesty concerning limitations.
    \end{itemize}

\item {\bf Theory assumptions and proofs}
    \item[] Question: For each theoretical result, does the paper provide the full set of assumptions and a complete (and correct) proof?
    \item[] Answer: \answerYes{}{} 
    \item[] Justification: The paper provides a concrete proof for the proposed kernel in the appendix, along with an additional stability lemma with a complete proof.
    \item[] Guidelines:
    \begin{itemize}
        \item The answer \answerNA{} means that the paper does not include theoretical results. 
        \item All the theorems, formulas, and proofs in the paper should be numbered and cross-referenced.
        \item All assumptions should be clearly stated or referenced in the statement of any theorems.
        \item The proofs can either appear in the main paper or the supplemental material, but if they appear in the supplemental material, the authors are encouraged to provide a short proof sketch to provide intuition. 
        \item Inversely, any informal proof provided in the core of the paper should be complemented by formal proofs provided in appendix or supplemental material.
        \item Theorems and Lemmas that the proof relies upon should be properly referenced. 
    \end{itemize}

    \item {\bf Experimental result reproducibility}
    \item[] Question: Does the paper fully disclose all the information needed to reproduce the main experimental results of the paper to the extent that it affects the main claims and/or conclusions of the paper (regardless of whether the code and data are provided or not)?
    \item[] Answer: \answerYes{}{} 
    \item[] Justification:  All datasets used are publicly available (WeatherBench2/ERA5, Navier--Stokes, Burgers). The proposed method is fully explained in the paper, and architecture details are provided in the appendix.
    \item[] Guidelines:
    \begin{itemize}
        \item The answer \answerNA{} means that the paper does not include experiments.
        \item If the paper includes experiments, a \answerNo{} answer to this question will not be perceived well by the reviewers: Making the paper reproducible is important, regardless of whether the code and data are provided or not.
        \item If the contribution is a dataset and\slash or model, the authors should describe the steps taken to make their results reproducible or verifiable. 
        \item Depending on the contribution, reproducibility can be accomplished in various ways. For example, if the contribution is a novel architecture, describing the architecture fully might suffice, or if the contribution is a specific model and empirical evaluation, it may be necessary to either make it possible for others to replicate the model with the same dataset, or provide access to the model. In general. releasing code and data is often one good way to accomplish this, but reproducibility can also be provided via detailed instructions for how to replicate the results, access to a hosted model (e.g., in the case of a large language model), releasing of a model checkpoint, or other means that are appropriate to the research performed.
        \item While NeurIPS does not require releasing code, the conference does require all submissions to provide some reasonable avenue for reproducibility, which may depend on the nature of the contribution. For example
        \begin{enumerate}
            \item If the contribution is primarily a new algorithm, the paper should make it clear how to reproduce that algorithm.
            \item If the contribution is primarily a new model architecture, the paper should describe the architecture clearly and fully.
            \item If the contribution is a new model (e.g., a large language model), then there should either be a way to access this model for reproducing the results or a way to reproduce the model (e.g., with an open-source dataset or instructions for how to construct the dataset).
            \item We recognize that reproducibility may be tricky in some cases, in which case authors are welcome to describe the particular way they provide for reproducibility. In the case of closed-source models, it may be that access to the model is limited in some way (e.g., to registered users), but it should be possible for other researchers to have some path to reproducing or verifying the results.
        \end{enumerate}
    \end{itemize}

\item {\bf Open access to data and code}
    \item[] Question: Does the paper provide open access to the data and code, with sufficient instructions to faithfully reproduce the main experimental results, as described in supplemental material?
    \item[] Answer: \answerNo{} 
    \item[] Justification: Code will be released publicly upon acceptance.
    \item[] Guidelines:
    \begin{itemize}
        \item The answer \answerNA{} means that paper does not include experiments requiring code.
        \item Please see the NeurIPS code and data submission guidelines (\url{https://neurips.cc/public/guides/CodeSubmissionPolicy}) for more details.
        \item While we encourage the release of code and data, we understand that this might not be possible, so \answerNo{} is an acceptable answer. Papers cannot be rejected simply for not including code, unless this is central to the contribution (e.g., for a new open-source benchmark).
        \item The instructions should contain the exact command and environment needed to run to reproduce the results. See the NeurIPS code and data submission guidelines (\url{https://neurips.cc/public/guides/CodeSubmissionPolicy}) for more details.
        \item The authors should provide instructions on data access and preparation, including how to access the raw data, preprocessed data, intermediate data, and generated data, etc.
        \item The authors should provide scripts to reproduce all experimental results for the new proposed method and baselines. If only a subset of experiments are reproducible, they should state which ones are omitted from the script and why.
        \item At submission time, to preserve anonymity, the authors should release anonymized versions (if applicable).
        \item Providing as much information as possible in supplemental material (appended to the paper) is recommended, but including URLs to data and code is permitted.
    \end{itemize}

\item {\bf Experimental setting/details}
    \item[] Question: Does the paper specify all the training and test details (e.g., data splits, hyperparameters, how they were chosen, type of optimizer) necessary to understand the results?
    \item[] Answer: \answerYes{} 
    \item[] Justification: There is a dedicated section in the Appendix with full experimental details (data splits, hyperparameters, optimizer, architecture configurations, and evaluation protocols) for all three benchmarks.
    \item[] Guidelines:
    \begin{itemize}
        \item The answer \answerNA{} means that the paper does not include experiments.
        \item The experimental setting should be presented in the core of the paper to a level of detail that is necessary to appreciate the results and make sense of them.
        \item The full details can be provided either with the code, in appendix, or as supplemental material.
    \end{itemize}

\item {\bf Experiment statistical significance}
    \item[] Question: Does the paper report error bars suitably and correctly defined or other appropriate information about the statistical significance of the experiments?
    \item[] Answer: \answerYes{}{} 
    \item[] Justification: All experiments report results over multiple initial conditions or trajectories. WeatherBench2 long-horizon evaluation uses $n{=}35$ ICs with a divergence criterion ($10\times$ GT max; 11/35 FNO diverge, 0/35 for MFC and AFNO), and climatology statistics are reported over the shared 24/35 non-divergent subset. Cross-resolution results report median RMSE over 35 ICs together with degradation ratios. Navier--Stokes metrics are computed over 10 test trajectories, with qualitative panels selected by a deterministic median-energy criterion. Burgers evaluates 50 trajectories per test set and reports geometric-mean nRMSE over 100 rollout steps with per-step curves for all models.
    
    \item[] Guidelines:
    \begin{itemize}
        \item The answer \answerNA{} means that the paper does not include experiments.
        \item The authors should answer \answerYes{} if the results are accompanied by error bars, confidence intervals, or statistical significance tests, at least for the experiments that support the main claims of the paper.
        \item The factors of variability that the error bars are capturing should be clearly stated (for example, train/test split, initialization, random drawing of some parameter, or overall run with given experimental conditions).
        \item The method for calculating the error bars should be explained (closed form formula, call to a library function, bootstrap, etc.)
        \item The assumptions made should be given (e.g., Normally distributed errors).
        \item It should be clear whether the error bar is the standard deviation or the standard error of the mean.
        \item It is OK to report 1-sigma error bars, but one should state it. The authors should preferably report a 2-sigma error bar than state that they have a 96\% CI, if the hypothesis of Normality of errors is not verified.
        \item For asymmetric distributions, the authors should be careful not to show in tables or figures symmetric error bars that would yield results that are out of range (e.g., negative error rates).
        \item If error bars are reported in tables or plots, the authors should explain in the text how they were calculated and reference the corresponding figures or tables in the text.
    \end{itemize}

\item {\bf Experiments compute resources}
    \item[] Question: For each experiment, does the paper provide sufficient information on the computer resources (type of compute workers, memory, time of execution) needed to reproduce the experiments?
    \item[] Answer: \answerYes{}{} 
    \item[] Justification: The manuscript reports the hardware and approximate compute used for the experiments. Each run used a single NVIDIA GeForce RTX 4090 GPU with 24\,GB VRAM. The reported experiments required approximately 140 GPU-hours in total, and the full project including development runs required approximately 350 GPU-hours.
    \item[] Guidelines:
    \begin{itemize}
        \item The answer \answerNA{} means that the paper does not include experiments.
        \item The paper should indicate the type of compute workers CPU or GPU, internal cluster, or cloud provider, including relevant memory and storage.
        \item The paper should provide the amount of compute required for each of the individual experimental runs as well as estimate the total compute. 
        \item The paper should disclose whether the full research project required more compute than the experiments reported in the paper (e.g., preliminary or failed experiments that didn't make it into the paper). 
    \end{itemize}
    
\item {\bf Code of ethics}
    \item[] Question: Does the research conducted in the paper conform, in every respect, with the NeurIPS Code of Ethics \url{https://neurips.cc/public/EthicsGuidelines}?
    \item[] Answer: \answerYes{}{} 
    \item[] Justification: This is a methodological study on scientific forecasting surrogates using benchmark datasets and standard baselines, with no human-subject data collection, intervention, or deployment claims. We have reviewed the NeurIPS Code of Ethics and are not aware of deviations.
    \item[] Guidelines:
    \begin{itemize}
        \item The answer \answerNA{} means that the authors have not reviewed the NeurIPS Code of Ethics.
        \item If the authors answer \answerNo, they should explain the special circumstances that require a deviation from the Code of Ethics.
        \item The authors should make sure to preserve anonymity (e.g., if there is a special consideration due to laws or regulations in their jurisdiction).
    \end{itemize}

\item {\bf Broader impacts}
    \item[] Question: Does the paper discuss both potential positive societal impacts and negative societal impacts of the work performed?
    \item[] Answer: \answerNo{}{} 
    \item[] Justification: The manuscript currently focuses on technical fidelity/stability and does not include a dedicated broader-impacts section discussing positive impacts (e.g., faster scientific simulation) and potential risks (e.g., misuse or over-trust in operational forecasting contexts).
    \item[] Guidelines:
    \begin{itemize}
        \item The answer \answerNA{} means that there is no societal impact of the work performed.
        \item If the authors answer \answerNA{} or \answerNo, they should explain why their work has no societal impact or why the paper does not address societal impact.
        \item Examples of negative societal impacts include potential malicious or unintended uses (e.g., disinformation, generating fake profiles, surveillance), fairness considerations (e.g., deployment of technologies that could make decisions that unfairly impact specific groups), privacy considerations, and security considerations.
        \item The conference expects that many papers will be foundational research and not tied to particular applications, let alone deployments. However, if there is a direct path to any negative applications, the authors should point it out. For example, it is legitimate to point out that an improvement in the quality of generative models could be used to generate Deepfakes for disinformation. On the other hand, it is not needed to point out that a generic algorithm for optimizing neural networks could enable people to train models that generate Deepfakes faster.
        \item The authors should consider possible harms that could arise when the technology is being used as intended and functioning correctly, harms that could arise when the technology is being used as intended but gives incorrect results, and harms following from (intentional or unintentional) misuse of the technology.
        \item If there are negative societal impacts, the authors could also discuss possible mitigation strategies (e.g., gated release of models, providing defenses in addition to attacks, mechanisms for monitoring misuse, mechanisms to monitor how a system learns from feedback over time, improving the efficiency and accessibility of ML).
    \end{itemize}
    
\item {\bf Safeguards}
    \item[] Question: Does the paper describe safeguards that have been put in place for responsible release of data or models that have a high risk for misuse (e.g., pre-trained language models, image generators, or scraped datasets)?
    \item[] Answer: \answerNo{}{} 
    \item[] Justification: The work does not release high-risk generative assets or scraped web datasets; it studies operator-learning surrogates on established scientific benchmark data. Under this scope, additional controlled-release safeguards are not required beyond standard responsible reporting.
    \item[] Guidelines:
    \begin{itemize}
        \item The answer \answerNA{} means that the paper poses no such risks.
        \item Released models that have a high risk for misuse or dual-use should be released with necessary safeguards to allow for controlled use of the model, for example by requiring that users adhere to usage guidelines or restrictions to access the model or implementing safety filters. 
        \item Datasets that have been scraped from the Internet could pose safety risks. The authors should describe how they avoided releasing unsafe images.
        \item We recognize that providing effective safeguards is challenging, and many papers do not require this, but we encourage authors to take this into account and make a best faith effort.
    \end{itemize}

\item {\bf Licenses for existing assets}
    \item[] Question: Are the creators or original owners of assets (e.g., code, data, models), used in the paper, properly credited and are the license and terms of use explicitly mentioned and properly respected?
    \item[] Answer: \answerNo{}{} 
    \item[] Justification: Existing assets and data sources used in the paper are credited. However, explicit license names, terms of use, and version information are not yet fully listed.
    \item[] Guidelines:
    \begin{itemize}
        \item The answer \answerNA{} means that the paper does not use existing assets.
        \item The authors should cite the original paper that produced the code package or dataset.
        \item The authors should state which version of the asset is used and, if possible, include a URL.
        \item The name of the license (e.g., CC-BY 4.0) should be included for each asset.
        \item For scraped data from a particular source (e.g., website), the copyright and terms of service of that source should be provided.
        \item If assets are released, the license, copyright information, and terms of use in the package should be provided. For popular datasets, \url{paperswithcode.com/datasets} has curated licenses for some datasets. Their licensing guide can help determine the license of a dataset.
        \item For existing datasets that are re-packaged, both the original license and the license of the derived asset (if it has changed) should be provided.
        \item If this information is not available online, the authors are encouraged to reach out to the asset's creators.
    \end{itemize}

\item {\bf New assets}
    \item[] Question: Are new assets introduced in the paper well documented and is the documentation provided alongside the assets?
    \item[] Answer: \answerNA{}{} 
    \item[] Justification: The paper's primary contribution is the method and evaluation, not a newly released standalone dataset or benchmark. No new external asset package is currently being released with separate documentation requirements.
    \item[] Guidelines:
    \begin{itemize}
        \item The answer \answerNA{} means that the paper does not release new assets.
        \item Researchers should communicate the details of the dataset\slash code\slash model as part of their submissions via structured templates. This includes details about training, license, limitations, etc. 
        \item The paper should discuss whether and how consent was obtained from people whose asset is used.
        \item At submission time, remember to anonymize your assets (if applicable). You can either create an anonymized URL or include an anonymized zip file.
    \end{itemize}

\item {\bf Crowdsourcing and research with human subjects}
    \item[] Question: For crowdsourcing experiments and research with human subjects, does the paper include the full text of instructions given to participants and screenshots, if applicable, as well as details about compensation (if any)? 
    \item[] Answer: \answerNA{} 
    \item[] Justification: The work does not include crowdsourcing experiments or research with human subjects.
    \item[] Guidelines:
    \begin{itemize}
        \item The answer \answerNA{} means that the paper does not involve crowdsourcing nor research with human subjects.
        \item Including this information in the supplemental material is fine, but if the main contribution of the paper involves human subjects, then as much detail as possible should be included in the main paper. 
        \item According to the NeurIPS Code of Ethics, workers involved in data collection, curation, or other labor should be paid at least the minimum wage in the country of the data collector. 
    \end{itemize}

\item {\bf Institutional review board (IRB) approvals or equivalent for research with human subjects}
    \item[] Question: Does the paper describe potential risks incurred by study participants, whether such risks were disclosed to the subjects, and whether Institutional Review Board (IRB) approvals (or an equivalent approval/review based on the requirements of your country or institution) were obtained?
    \item[] Answer: \answerNA{} 
    \item[] Justification: The work does not involve human-subject participation, so IRB or equivalent review is not applicable.
    \item[] Guidelines:
    \begin{itemize}
        \item The answer \answerNA{} means that the paper does not involve crowdsourcing nor research with human subjects.
        \item Depending on the country in which research is conducted, IRB approval (or equivalent) may be required for any human subjects research. If you obtained IRB approval, you should clearly state this in the paper. 
        \item We recognize that the procedures for this may vary significantly between institutions and locations, and we expect authors to adhere to the NeurIPS Code of Ethics and the guidelines for their institution. 
        \item For initial submissions, do not include any information that would break anonymity (if applicable), such as the institution conducting the review.
    \end{itemize}

\item {\bf Declaration of LLM usage}
    \item[] Question: Does the paper describe the usage of LLMs if it is an important, original, or non-standard component of the core methods in this research? Note that if the LLM is used only for writing, editing, or formatting purposes and does \emph{not} impact the core methodology, scientific rigor, or originality of the research, declaration is not required.
    \item[] Answer: \answerNA{} 
    \item[] Justification: LLMs are not an important or non-standard component of the core modeling method or experimental pipeline presented in this work.
    \item[] Guidelines:
    \begin{itemize}
        \item The answer \answerNA{} means that the core method development in this research does not involve LLMs as any important, original, or non-standard components.
        \item Please refer to our LLM policy in the NeurIPS handbook for what should or should not be described.
    \end{itemize}

\end{enumerate}

\end{document}